\documentclass[twoside,11pt]{article}

%

%
%
%
\usepackage[abbrvbib, preprint]{jmlr2e}

\usepackage{bbm}
\usepackage{hyperref}
\usepackage{amsmath}
\usepackage{amssymb}
\usepackage{mathrsfs}
\usepackage{caption,subcaption}
\usepackage{booktabs}
\usepackage{multicol}
\usepackage{multirow}
\usepackage{xcolor}
\usepackage{color}
\usepackage{algorithm}
\usepackage{algorithmic}
\usepackage{listings}
\usepackage{enumerate}
\usepackage[export]{adjustbox}
\graphicspath{{figure/}}
\interfootnotelinepenalty=10000

\newcommand{\dd}{\mathrm{d}}
\newcommand{\one}{\mathbbm{1}}
\newcommand{\E}{\mathbb{E}}
\newcommand{\p}{\mathbb{P}}

\newcommand{\f}{\mathcal{F}}

\newcommand{\sX}{X^{\mathscr G,\bm\pi}}
\newcommand{\sa}{a^{\mathscr G,\bm\pi}}
\newcommand{\sXp}{X^{\mathscr G,\bm\pi'}}
\newcommand{\sap}{a^{\mathscr G,\bm\pi'}}

\newcommand{\bm}[1]{\boldsymbol{#1}}

\newtheorem{assumption}{Assumption}
\newtheorem{eg}{Example}

\usepackage{lastpage}
\jmlrheading{24}{2023}{1-\pageref{LastPage}}{7/22; Revised 4/23}{5/23}{22-0755}{Yanwei Jia and Xun Yu Zhou}



\ShortHeadings{Continuous-Time q-Learning}{Jia and Zhou}
\firstpageno{1}

\begin{document}

\title{q-Learning in Continuous Time}

\author{\name Yanwei Jia \email yanweijia@cuhk.edu.hk \\
\addr 
Department of Systems Engineering and Engineering Management\\
The Chinese University of Hong Kong\\
Shatin, NT, Hong Kong
\AND
\name Xun Yu Zhou \email xz2574@columbia.edu \\
\addr Department of Industrial Engineering and Operations Research \&\\
The Data Science Institute\\
Columbia University\\
New York, NY 10027, USA}

\editor{Marc Bellemare}

\maketitle

\begin{abstract}
We study the continuous-time counterpart of Q-learning for reinforcement learning (RL) under the entropy-regularized,  exploratory diffusion process formulation introduced by \cite{wang2020reinforcement}. As the conventional (big) Q-function collapses in continuous time, we consider its first-order approximation and coin the term ``(little) q-function". This function is related to the instantaneous advantage rate function as well as  the Hamiltonian. We develop a ``q-learning" theory around the q-function that is independent of time discretization. Given a stochastic policy, we jointly characterize the associated q-function and value function by martingale conditions of certain stochastic processes, in both on-policy and off-policy settings. We then apply the theory to devise different actor--critic algorithms for solving underlying RL problems, depending on whether or not the density function of the Gibbs measure generated from the q-function can be computed explicitly. One of our algorithms interprets the well-known Q-learning algorithm SARSA, and another recovers a policy gradient (PG) based continuous-time  algorithm proposed in \cite{jia2021policypg}. Finally, we conduct simulation experiments to compare the performance of our algorithms with those of PG-based algorithms in \cite{jia2021policypg} and time-discretized  conventional Q-learning algorithms.
\end{abstract}

\begin{keywords}
continuous-time reinforcement learning,
policy improvement, q-function, martingale,  on-policy and off-policy
\end{keywords}

\section{Introduction}

A recent series of papers, \citet{wang2020reinforcement} and \citet{jia2021policy,jia2021policypg}, aim at laying an overarching theoretical foundation for reinforcement learning (RL) in continuous time with continuous state space (diffusion processes) and possibly continuous action space. Specifically, \citet{wang2020reinforcement} study how to explore strategically (instead of blindly) by formulating an entropy-regularized, distribution-valued stochastic control problem for diffusion processes. \citet{jia2021policy} solve the policy evaluation (PE) problem, namely to learn the value function of a given stochastic policy, by characterizing it as a martingale problem. \citet{jia2021policypg} investigate the policy gradient (PG) problem, that is, to determine the gradient of a learned value function with respect to the current policy, and show that PG is mathematically a simpler PE problem and thus solvable by the martingale approach developed in \citet{jia2021policy}. Combining these theoretical results naturally leads to various online and offline actor--critic (AC) algorithms for general model-free (up to the diffusion dynamics) RL tasks, where one learns value functions and stochastic policies simultaneously and alternatingly. Many of these algorithms recover and/or interpret some well-known existing RL algorithms for Markov decision processes (MDPs) that were often put forward  in a heuristic and ad hoc manner.

PG updates a policy along the gradient ascent direction to improve it; so PG is an instance of the general policy improvement (PI) approach. On the other hand, one of the earliest and most popular methods for PI  is \textit{Q-learning} \citep{watkins1989learning,watkins1992q}, whose key ingredient is to learn the \textit{Q-function}, a function of state and action. 
The learned Q-function is then maximized over actions at each state to achieve  improvement of the current policy.
The resulting Q-learning algorithms, such as the Q-table, SARSA and DQN (deep Q-network), are widely used for their simplicity and effectiveness in many applications.\footnote{Q-learning is typically combined with various forms of function approximations. For example, linear function approximations \citep{melo2008analysis,zou2019finite}, kernel-based nearest neighbor regression \citep{shah2018q}, or deep neural networks \citep{mnih2015human,fan2020theoretical}. Q-learning is not necessarily restricted to finite action space in literature; for example, Q-learning with continuous action spaces is studied in \citet{gu2016continuous}. However, \citet{duan2016benchmarking} report and claim that the standard Q-learning algorithm may become less efficient for continuous action spaces.} Most importantly though, compared with PG, one of the key advantages of using Q-learning is that it works both on-policy and off-policy \citep[Chapter 6]{sutton2011reinforcement}.

The Q-function, by definition, is a function of the current state and action, assuming that the agent takes a  particular action at the {\it current time} and  follows through a given control policy starting from the {\it next time step}. Therefore, it is {\it intrinsically} a notion in {\it discrete} time; that is why Q-learning has been predominantly studied for discrete-time MDPs. In a continuous-time setting, the Q-function collapses to the value function that is independent of action and hence cannot be used to rank and choose the {\it current} actions. Indeed, \cite{tallec2019making} opine that ``there is no Q-function in continuous time''. On the other hand, one may propose discretizing time to obtain a discretized Q-function and then apply the existing Q-learning algorithms. However, \cite{tallec2019making} show experimentally that this approach is very sensitive to time discretization and performs poorly with small time steps.
\citet{kim2021hamilton} take a different approach: they  include  the action as a state variable in the continuous-time system by restricting the action process to be absolutely continuous in time with a bounded growth rate. Thus Q-learning becomes a policy evaluation problem with the state--action pair as the new state variable. However, they consider only deterministic dynamic systems and discretize upfront the continuous-time problem. 
Crucially, that the action must be absolutely continuous is unpractically  restrictive because optimal actions are often only measurable in time and have unbounded variations (such as the bang--bang controls) even for deterministic systems.

As a matter of fact, the aforementioned  series of papers, \cite{wang2020reinforcement} and \cite{jia2021policy,jia2021policypg},
are characterized by carrying out all the theoretical analysis within the continuous setting and taking observations at discrete times (for computing the total reward over time)  only when implementing the algorithms. Some advantages of this approach, compared with discretizing time upfront and then applying existing MDP results, are discussed extensively in \cite{doya2000reinforcement,wang2020reinforcement,jia2021policy,jia2021policypg}. More importantly, the pure continuous-time approach  minimizes or eliminates the impact of time-discretization on learning which, as discussed above, becomes critical especially for continuous-time Q-learning.

Now, if we are to study Q-learning strictly within the continuous-time framework without time-discretization or action restrictions, then the first question is what a proper Q-function should be. When time is discretized, \cite{baird1993advantage} and \cite{mnih2016asynchronous} define the so-called \textit{advantage function} that is the difference in value between taking a particular action versus following a given policy at any state; that is, the difference between the state--action value and the state value. \cite{baird1993advantage,baird1994reinforcement} and \cite{tallec2019making} notice that such an advantage function can be properly scaled with respect to the size of time discretization whose continuous-time limit exists,
called the \textit{(instantaneous) advantage rate function} and can be learned. The advantage updating performs better than conventional Q-learning, as shown numerically in \citet{tallec2019making}. However, these
papers again consider only deterministic dynamic systems and do not fully develop a theory of this rate function.
Here in the present paper, we attack the problem directly from the continuous-time perspective. In particular, 
we rename the advantage rate function as the (little) {\it q-function}.\footnote{We use the {\it lower case} letter ``$q$" to denote this rate function, much in the same way as the typical use of the lower case letter $f$ to denote a probability density function which is the first-order derivative of the cumulative distribution function usually denoted by the upper case letter $F$.} This q-function in the finite-time episodic setting is a function of the time--state--action triple under a given policy, analogous to the conventional Q-function. However, it is a continuous-time notion because  {\it it does not depend on time-discretization}. This feature is a vital advantage for learning algorithm design as it avoids the sensitivity with respect to the observation and intervention frequency (the step size of time-discretization).

For the entropy-regularized, exploratory setting for diffusion processes (first formulated in \citealt{wang2020reinforcement}) studied in the paper, the q-function of a given policy turns out to be the \textit{Hamiltonian} that includes the infinitesimal generator of the dynamic system and the instantaneous reward function (see \citealt{YZbook} for details), plus the temporal dispersion that consists of the time-derivative of the value function and the depreciation from discounting.
The paper aims to develop a comprehensive  {\it q-learning} theory around this q-function and accordingly design alternative RL algorithms other than the PG-based AC algorithms in \citet{jia2021policypg}. There are three main questions. The first is to characterize the q-function. For discrete-time MDPs, the PE method is used to learn both the value function and the Q-function because both functions are characterized by similar Bellman equations. By contrast, in the continuous-time diffusion setting, a Bellman equation is only available for the value function (also known as the Feynman--Kac formula, which yields that the value function solves a second-order linear partial differential equation). The second question is to establish a policy improvement theorem based on the q-function and to design corresponding AC  algorithms in a sample/data-driven, model-free fashion. The third question is whether the capability of both on- and off-policy learning, a key advantage of Q-learning, is intact in the continuous-time setting.

We provide complete answers to all these questions. First, given a stochastic policy along with its (learned) value function, the corresponding q-function is characterized by the martingale condition of a certain stochastic process with respect to an enlarged filtration (information field) that includes both the original environmental noise and the action randomization. Alternatively, the value function and the q-function can be {\it jointly} determined by the martingality of the same process. These results suggest that the martingale perspective for PE and PG in \citet{jia2021policy,jia2021policypg} continues to work for q-learning.
In particular, we devise several algorithms to learn these functions in the same way as
the martingale conditions are employed to generate PE and PG algorithms in  \citet{jia2021policy,jia2021policypg}. Interestingly, a temporal--difference (TD) learning algorithm designed from this perspective recovers and, hence, interprets a version of the well-known SARSA algorithm in the discrete-time Q-learning literature.
Moreover, inspired by these continuous-time martingale characterizations, in Appendix A we present
martingale conditions for Q-learning of {\it discrete-time} MDPs, which are new to our best knowledge. This demonstrates  that the continuous-time RL study may offer new perspectives even in discrete time.
Second, we prove that a Gibbs sampler with a properly scaled current q-function (independent of time discretization) as the exponent of its density function improves upon the current policy. This PI result translates into a policy-updating algorithm when the normalizing constant of the Gibbs measure can be explicitly computed. Otherwise, if the normalizing constant is unavailable, we prove a stronger PI theorem in terms of the Kullback--Leibler (KL) divergence, analogous to a result for discrete-time MDPs in \citet{haarnoja2018softconference}. One of the algorithms out of this theorem happens to recover a PG-based AC algorithm in \citet{jia2021policypg}.
Finally, we prove that the aforementioned martingale conditions hold for both the on-policy and off-policy settings. More precisely, the value function and the q-function associated with a given {\it target} policy can be learned based on a dataset generated by either the target policy itself or an arbitrary {\it behavior} policy. So, q-learning indeed works off-policy as well.

%

The rest of the paper is organized as follows. In Section \ref{sec:setup} we review \citet{wang2020reinforcement}'s  entropy-regularized, exploratory formulation for RL in continuous time and space, and present some useful preliminary results. In Section \ref{sec:foundation}, we establish the q-learning theory, including the motivation and definition of the q-function and its on-/off-policy martingale characterizations.
q-learning algorithms are developed, depending on whether or not the normalizing constant of the Gibbs measure is available, in Section \ref{subsec:qv learn} and Section \ref{sec:noncomputable}, respectively.
Extension to ergodic tasks is presented in  Section \ref{sec:extension}. In Section \ref{sec:applications}, we illustrate the proposed algorithms and compare them with PG-based algorithms and time-discretized, conventional Q-learning algorithms on two specific examples with simulated data.\footnote{The code to reproduce our simulation studies is publicly available at \url{https://www.dropbox.com/sh/34cgnupnuaix15l/AAAj2yQYfNCOtPUc1_7VhbkIa?dl=0}. } Finally, Section \ref{sec:conclusion} concludes. The Appendix contains various supplementary materials   and proofs of statements in the main text.

\section{Problem Formulation and Preliminaries}
\label{sec:setup}

Throughout this paper, by convention all vectors are {\it column} vectors unless otherwise specified, and $\mathbb{R}^k$ is the space of all $k$-dimensional  vectors (hence $k\times 1$ matrices). We use $\mathbb{S}^k_{++}$ to denote all the $k\times k$ symmetric and positive definite matrices. Given  two matrices $A$ and $B$ of the same size,
denote by $A \circ B$ their inner product, by $|A|$ the Euclidean/Frobenius norm of $A$, by $\det A$ the determinant when $A$ is a square matrix, and by $A^\top$ the transpose of any matrix $A$. For $A\in \mathbb{S}^k_{++}$, we write $\sqrt{A} = UD^{1/2}U^\top$, where $A = UDU^\top$ is its eigenvalue decomposition with $U$  an orthogonal matrix, $D$ a diagonal matrix, and $D^{1/2}$  the diagonal matrix whose entries are the square root of those of $D$.
We use $f=f(\cdot)$ to denote the {\it function} $f$, and $f(x)$ to denote
the {\it function value} of $f$ at $x$. We denote by $\mathcal{N}(\mu,\Sigma)$ the probability density function of the multivariate normal distribution with mean vector $\mu$ and covariance matrix $\Sigma$. Finally,
for any stochastic process $X=\{X_{s},$ $s\geq 0\}$, we denote by $\{\f^X_s\}_{s\geq 0}$ the natural filtration generated by $X$.


\subsection{Classical model-based formulation}

For readers' convenience, we first recall the classical, model-based stochastic control formulation.

Let $d,n$ be given positive integers, $T>0$, 
and $b: [0,T]\times \mathbb{R}^d\times \mathcal{A} \mapsto \mathbb{R}^d$ and $\sigma:
[0,T]\times \mathbb{R}^d\times \mathcal{A}\mapsto \mathbb{R}^{d\times n}$ be given functions, where $\mathcal{A}\subset\mathbb{R}^m$ is the action/control space. The classical stochastic control problem is to control the {\it state} (or {\it feature}) dynamics governed by  a stochastic differential equation (SDE), defined on a filtered probability space $\left( \Omega ,\mathcal{F},\mathbb{P}^W; \{\mathcal{F}_s^W\}_{s\geq0}\right) $ along with a standard $n$-dimensional Brownian motion  $W=\{W_{s},$ $s\geq 0\}$:
\begin{equation}
\label{eq:model classical}
\dd X_s^{\bm{a}} = b(s,X_s^{\bm{a}},\bm a_s)\dd s + \sigma(s,X_s^{\bm{a}},\bm a_s) \dd W_s,\
s\in [0,T],
\end{equation}
where $\bm a_s$ stands for the agent's action at time $s$. 

The goal of a stochastic control problem is, for each initial time--state pair $(t,x)\in [0,T)\times \mathbb{R}^d$ of \eqref{eq:model classical}, to find the optimal $\{\mathcal{F}_s^W\}_{s\geq0}$-progressively measurable (continuous-time) sequence of actions $\bm a = \{\bm a_s,t\leq s \leq T\}$  -- also called the optimal control or optimal strategy -- that maximizes the expected total discounted reward:
\begin{equation}
\label{eq:objective}
\E^{\p^W}\left[\int_t^T e^{-\beta(s-t)} r(s,X_s^{\bm a},\bm a_s)\dd s + e^{-\beta (T-t)} h(X_T^{\bm a})\Big|X_t^{\bm a}= x\right],
\end{equation}
where $r$ is an (instantaneous) reward function (i.e., the expected rate of reward conditioned on time, state, and action), $h$ is the lump-sum reward function applied at the end of the planning period $T$, and $\beta\geq 0$ is a constant discount factor that measures the time-value of the payoff or the impatience level of the agent.

Note in the above, the state process $X^{\bm a} = \{X^{\bm a}_s, t\leq s \leq T\}$ also depends on $(t,x)$. However, to ease notation, here and henceforth   we use $X^{\bm a}$ instead of $X^{t,x,\bm a}= \{X^{t,x,\bm a}_s, t\leq s \leq T\}$ to denote the solution to SDE \eqref{eq:model classical} with initial condition $X_t^{\bm a} = x$ whenever no ambiguity may arise.


The (generalized) {\it Hamiltonian} is a function $H:[0,T] \times \mathbb{R}^d \times \mathcal{A} \times \mathbb{R}^d \times \mathbb{R}^{d\times d} \to \mathbb{R}$ associated with problem \eqref{eq:model classical}--\eqref{eq:objective} defined as \citep{YZbook}:
\begin{equation}
\label{eq:H}
H(t,x,a,p,q) = b(t,x,a) \circ p + \frac{1}{2}\sigma\sigma^\top(t,x,a) \circ q + r(t,x,a).\end{equation}

We make the same assumptions as in \citet{jia2021policypg} to ensure theoretically the well-posedness of the stochastic control problem \eqref{eq:model classical}--\eqref{eq:objective}.
\begin{assumption}
\label{ass:dynamic}
The following conditions for the state dynamics and reward functions hold true:

\begin{enumerate}[(i)]
\item $b,\sigma,r,h$ are all continuous functions in their respective arguments;
\item $b,\sigma$ are uniformly Lipschitz continuous in $x$, i.e., for $\varphi \in\{ b,\ \sigma\}$, there exists a constant $C>0$ such that
\[ |\varphi(t,x,a) - \varphi(t,x',a)| \leq C|x-x'|,\;\;\forall (t,a)\in [0,T] \times \mathcal{A},\;\forall x,x'\in \mathbb{R}^d; \]
\item $b,\sigma$ have linear growth in $x$, i.e., for $\varphi \in\{ b,\ \sigma\}$, there exists a constant $C>0$ such that
\[|\varphi(t,x,a)| \leq C(1+|x|) ,\;\;\forall (t,x,a)\in [0,T] \times \mathbb{R}^d\times \mathcal{A};\]
\item $r$ and $h$ have polynomial growth  in $(x,a)$ and $x$ respectively,  i.e., 
there exist constants $C>0$ and $\mu\geq 1$ such that
\[
|r(t,x,a)| \leq C(1+|x|^{\mu} + |a|^{\mu}) ,\;\;|h(x)| \leq C(1+|x|^{\mu}),\; \forall (t,x,a)\in [0,T] \times \mathbb{R}^d \times \mathcal{A}.\]
\end{enumerate}
\end{assumption}

Classical model-based stochastic control theory has been well developed (e.g., \citealt{fleming2006controlled} and \citealt{YZbook}) to solve the above problem, {\it assuming} that
the functional forms of $b,\sigma,r,h$ are all given and known. A centerpiece of the theory is the Hamilton--Jacobi--Bellman (HJB) equation
\begin{equation}
\label{eq:hjb pde}
\frac{\partial V^*}{\partial t}(t,x) + \sup_{a\in \mathcal{A}} H\left( t,x,a,\frac{\partial V^*}{\partial x}(t,x),\frac{\partial^2 V^*}{\partial x^2}(t,x) \right) - \beta V^*(t,x) = 0,\ V^*(T,x) = h(x).
\end{equation}
Here, $\frac{\partial V^*}{\partial x}\in \mathbb{R}^d$ is the gradient and $\frac{\partial^2 V^*}{\partial x^2}\in \mathbb{R}^{d\times d}$ is the Hessian matrix.

Under proper conditions, the unique solution (possibly in the sense of viscosity solution) to \eqref{eq:hjb pde} is the optimal value function to the problem \eqref{eq:model classical}--\eqref{eq:objective}, i.e.,
\[ V^*(t,x) = \sup_{\{\bm a_s,t\leq s\leq T\}} \E^{\p^W}\left[\int_t^T e^{-\beta(s-t)} r(s,X_s^{\bm a},\bm a_s)\dd s + e^{-\beta (T-t)} h(X_T^{\bm a})\Big|X_t^{\bm a}= x\right]. \]
Moreover, the following function,  which maps a time--state pair to an action:
\begin{equation}
\label{eq:optimalcontrol}
a^*(t,x) = \arg\sup_{a\in \mathcal{A}} H\left( t,x,a,\frac{\partial V^*}{\partial x}(t,x),\frac{\partial^2 V^*}{\partial x^2}(t,x) \right)
\end{equation}
is the optimal (non-stationary, feedback) control {\it policy} of the problem. In view of the definition of the Hamiltonian (\ref{eq:H}), the maximization in (\ref{eq:optimalcontrol}) indicates that, at any given time and state,  one ought to
maximize a suitably weighted average of the (myopic) instantaneous reward and the risk-adjusted, (long-term) positive impact on the system dynamics. See
\citet{YZbook} for a detailed account of this theory and many discussions on its economic interpretations and implications.

\subsection{Exploratory formulation  in reinforcement learning}
\label{sec:filtration discuss}

We now present the RL formulation of the problem  to be studied in this paper. In the RL setting, the agent has partial or simply no knowledge about the environment (i.e. the functions $b,\sigma,r,h$). What she can do is ``trial and error" -- to try a (continuous-time) sequence of actions ${\bm a} = \{a_s,t\leq s \leq T\}$, observe the corresponding state process $X^{\bm a} = \{X_s^{\bm a},t\leq s \leq T\}$ along with both a stream of discounted running rewards $\{e^{-\beta(s-t)} r(s,X_s^{\bm a},a_s),t\leq s \leq T\}$ and a discounted, end-of-period lump-sum reward $e^{-\beta (T-t)}h(X_T^{\bm a})$ where $\beta$ is a given, known discount factor, and continuously update and improve her actions based on these observations.\footnote{This procedure applies to both the offline and online settings.  In the former, the agent can repeatedly try different sequences of actions over the full time period $[0,T]$, record the corresponding state processes and payoffs, and then learn and update the policy based on the resulting dataset. In the latter, the agent updates the actions as she goes, based on all the up-to-date historical observations.}

A critical question is how to strategically {\it generate} these trial-and-error sequences of actions. The idea is {\it randomization}; namely,
the agent devises and  employs a {\it stochastic policy}, which is a probability distribution on the action space, to generate actions according
to the current time--state pair.  It is important to note that such randomization itself is {\it independent} of the underlying Brownian motion $W$, the random source of the original control problem that stands for the environmental noises.   
Specifically,  assume the probability space is rich enough to
support uniformly distributed random variables on $[0, 1]$ that is independent of $W$, and then such a uniform random variable can be used to generate other random variables with density functions. Let $\{Z_t,0\leq t\leq T\}$ be a process of mutually independent copies of a uniform random variable on $[0,1]$, the construction of which requires a  suitable extension of probability space; see \citet{sun2006exact} for details.
We then further expand the filtered probability space to $\left( \Omega ,\mathcal{F},\mathbb{P}; \{\mathcal{F}_s\}_{s\geq0}\right) $ where
$\mathcal{F}_s=\mathcal{F}_s^W \vee \sigma(Z_t,0\leq t \leq s)$ and the probability measure $\mathbb{P}$, now defined on $\mathcal{F}_T$,  is an extension from $\p^W$ (i.e. the two probability measures coincide when restricted to $\f^W_T$).

Let $\bm{\pi}:(t,x)\in [0,T] \times \mathbb{R}^d \mapsto \bm{\pi}(\cdot|t,x)\in \mathcal{P}(\mathcal{A})$ be a given (feedback) policy, where $\mathcal{P}(\mathcal{A})$ is a suitable collection of probability density functions.\footnote{Here we assume that the action space $\mathcal{A}$ is continuous and randomization is restricted to those distributions that have density functions. The analysis and results of this paper can be easily extended to the cases of discrete action spaces and/or randomization with probability mass functions.}
At each time $s$, an action $a_s$ is generated or sampled from the distribution $\bm{\pi}(\cdot|s,X_s)$. 

Fix a stochastic policy $\bm{\pi}$ and an initial time--state pair $(t,x)$.
Consider the following SDE
\begin{equation}
\label{eq:model pi}
\dd X_s^{\bm{\pi}} = b(s,X_s^{\bm{\pi}},a_s^{\bm{\pi}})\dd s + \sigma(s,X_s^{\bm{\pi}},a_s^{\bm{\pi}}) \dd W_s,\
s\in [t,T]; \;\;X_{t}^{\bm{\pi}} = x
\end{equation}
defined on $\left( \Omega ,\mathcal{F},\mathbb{P}; \{\mathcal{F}_s\}_{s\geq0}\right) $, where $a^{\bm{\pi}} = \{a_s^{\bm{\pi}},t\leq s \leq T\}$ is an $\{\mathcal{F}_s\}_{s\geq0}$-progressively measurable action process  generated from $\bm{\pi}$. \eqref{eq:model pi} is an SDE with random coefficients, whose well-posedness (i.e. existence and uniqueness of solution) is established in \citet[Chapter 1, Theorem 6.16]{YZbook} under Assumption \ref{ass:dynamic}. Fix $a^{\bm{\pi}}$, the unique solution to \eqref{eq:model pi}, $X^{\bm{\pi}} = \{X_s^{\bm{\pi}},t\leq s \leq T\}$, is the sample
state process corresponding to $a^{\bm{\pi}}$ that solves \eqref{eq:model classical}.\footnote{Here, $X^{\bm{\pi}}$ also depends on the {\it specific} copy $a^{\bm{\pi}}$ sampled from $\bm{\pi}$; however, to ease notation we write $X^{\bm{\pi}}$  instead of $X^{\bm{\pi},a^{\bm{\pi}}}$.}
Moreover, following \cite{wang2020reinforcement}, we add an entropy regularizer to the reward function to encourage  exploration (represented by the stochastic policy), leading to
\begin{equation}
\label{eq:objective relaxed action}
\begin{aligned}
J(t,x;\bm{\pi}) = &	\E^{\p}\bigg[\int_t^T e^{-\beta(s-t)}\left[ r(s,X_s^{\bm{\pi}},a_s^{\bm{\pi}}) - \gamma \log\bm{\pi}(a_s^{\bm{\pi}}|s,X_s^{\bm{\pi}}) \right]\dd s \\
& + e^{-\beta (T-t)}h(X_T^{\bm{\pi}})\Big|X_t^{\bm{\pi}} = x \bigg],
\end{aligned}
\end{equation}
where $\E^{\p}$ is the expectation with respect to both the Brownian motion and the action randomization, and $\gamma \geq 0$ is a given weighting parameter on exploration also known as the {\it temperature} parameter.

\citet{wang2020reinforcement} consider the following SDE
\begin{equation}
\label{eq:model relaxed}
\dd X_s = \tilde{b}\big( s,X_s,\bm{\pi}(\cdot|s, X_s) \big)\dd t + \tilde{\sigma}\big( s,X_s,\bm{\pi}(\cdot|s, X_s) \big) \dd W_s,\;s\in[t,T];\;\;\ X_{t} = x,
\end{equation}
where
\[ \tilde{b}\big(s,x,\pi(\cdot)\big) = \int_{\mathcal{A}} b(s,x,a) \pi(a)\dd a,\ \; \tilde{\sigma}\big(s,x,\pi(\cdot)\big) = \sqrt{\int_{\mathcal{A}} \sigma\sigma^\top(s,x,a) \pi(a)\dd a}.\]
Intuitively, based on the law of large number, the solution of \eqref{eq:model relaxed}, denoted by $\{\tilde{X}_s^{\bm{\pi}},t\leq s \leq T\}$, is the limit of the average of the sample trajectories $X^{\bm{\pi}}$ over randomization (i.e., copies of $\bm{\pi}$). Rigorously, it follows from the property of Markovian projection due to \citet[Theorem 3.6]{brunick2013mimicking} that $X^{\bm\pi}_s$ and $\tilde{X}^{\bm\pi}_s$ have the same distribution for each $s\in[t,T]$. Hence, the value function \eqref{eq:objective relaxed action} is identical to
\begin{equation}
\label{eq:objective relaxed}
\begin{aligned}
J(t,x;\bm{\pi}) = & \E^{\p^W}\bigg[\int_t^T e^{-\beta(s-t)}\tilde{\mathcal{R}}(s,\tilde{X}^{\bm\pi}_s,\bm\pi(\cdot|s,\tilde{X}^{\bm\pi}_s )) \dd s + e^{-\beta (T-t)}h(\tilde{X}_T^{\bm{\pi}})\Big|\tilde{X}_t^{\bm{\pi}} = x \bigg].
\end{aligned}
\end{equation}
where $\tilde{\mathcal{R}}(s,x,\pi) = \int_{\mathcal{A}} [r(s,x,a) - \gamma \log\pi(a) ] \pi(a)\dd a$.


The function  $J(\cdot,\cdot;\bm{\pi})$ is called the \textit{value function} of the  policy $\bm{\pi}$, and the task of RL is to find
\begin{equation}
\label{eq:maximize objective}
J^*(t,x) = \max_{\bm{\pi}\in \bm{\Pi}}J(t,x;\bm{\pi}),
\end{equation}
where $\bm{\Pi}$ stands for the set of admissible (stochastic) policies. 
The following gives the precise definition of admissible policies.

\begin{definition}
\label{ass:admissible}
A policy $\bm{\pi}=\bm{\pi}(\cdot|\cdot,\cdot)$ is called {\it admissible} if
\begin{enumerate}[(i)]
\item $\bm{\pi}(\cdot|t,x)\in \mathcal{P}(\mathcal{A})$, $\operatorname{supp}\bm\pi(\cdot|t,x) = \mathcal{A}$ for every $(t,x)\in [0,T]\times \mathbb{R}^d$, and $\bm{\pi}(a|t,x):(t,x,a)\in [0,T] \times \mathbb{R}^d\times \mathcal{A}\to
\mathbb{R}$ is measurable;

\item $\bm{\pi}(a|t,x)$ is continuous in $(t,x)$  and uniformly Lipschitz continuous in $x$ in the total variation distance, i.e., $ \int_{\mathcal{A}} |\bm{\pi}(a|t,x) - \bm{\pi}(a|u,x')|\dd a \to 0$ as $(u,x')\to (t,x)$, and
there is a constant $C>0$ independent of $(t,a)$ such that
\[ \int_{\mathcal{A}} |\bm{\pi}(a|t,x) - \bm{\pi}(a|t,x')|\dd a \leq C|x-x'|,\;\;\forall x,x'\in \mathbb{R}^d; \]
\item 
For any given $\alpha>0$,
the entropy of $\bm\pi$ and its $\alpha$-moment have polynomial growth in $x$, i.e.,
there are constants $C=C(\alpha)>0$ and $\mu'=\mu'(\alpha)\geq 1$ such that
$|\int_{\mathcal{A}} -\log\bm{\pi}(a|t,x)  \bm{\pi}(a|t,x)\dd a | \leq C(1+|x|^{\mu'})$, and $\int_{{\cal A}} |a|^{\alpha} \bm{\pi}(a|t,x)\dd a \leq C(1 + |x|^{\mu'})$ $\forall (t,x)$.
\end{enumerate}
\end{definition}




Under Assumption \ref{ass:dynamic} along with (i) and (ii) in Definition \ref{ass:admissible}, \citet[Lemma 2]{jia2021policypg} show that the SDE \eqref{eq:model relaxed} admits a unique strong solution since its coefficients are locally Lipschitz continuous and have linear growth; see \citet[Chapter 2, Theorem 3.4]{mao2007stochastic} for the general result. In addition, the growth condition (iii) guarantees that the new payoff function \eqref{eq:objective relaxed} is finite.
More discussions on the conditions required in the above definition can be found in \cite{jia2021policypg}. Moreover, Definition \ref{ass:admissible} only contains the conditions on the policy, hence they can be verified.

Note that the problem  \eqref{eq:model relaxed}--\eqref{eq:objective relaxed} is {\it mathematically} equivalent to the problem \eqref{eq:model pi}--\eqref{eq:objective relaxed action}. Nevertheless, they serve different purposes in our study: the former provides a framework for theoretical analysis of the value function, while the latter directly involves  observable samples for algorithm design.

\subsection{Some useful preliminary results}

Lemma 2 in \cite{jia2021policypg} yields that
the value function of a given admissible policy $\bm{\pi}=\bm{\pi}(\cdot|\cdot,\cdot)\in \bm{\Pi}$ satisfies the following PDE
\begin{equation}
\label{eq:explore fk}
\begin{aligned}
& \frac{\partial J}{\partial t}(t,x;\bm\pi)  + \int_{\mathcal{A}} \left[ H\big( t,x,a,\frac{\partial J}{\partial x}(t,x;\bm\pi), \frac{\partial^2 J}{\partial x^2}(t,x;\bm\pi) \big) - \gamma \log \bm{\pi}(a|t,x) \right]  \bm{\pi}(a|t,x)\dd a\\
&-\beta J(t,x;\bm\pi)= 0,\\
& J(T,x;\bm\pi) = h(x).
\end{aligned}
\end{equation}
This is a version of the celebrated Feynman--Kac formula in the current RL setting. Under Assumption \ref{ass:dynamic} as well as the  admissibility conditions in Definition \ref{ass:admissible}, the PDE \eqref{eq:explore fk} admits a unique viscosity solution among polynomially growing functions \citep[Theorem 1.1]{beck2021nonlinear}.

On the other hand,
\citet[Section 5]{tang2021exploratory} derive the following {\it exploratory HJB equation} for the problem  \eqref{eq:model relaxed}--\eqref{eq:objective relaxed} satisfied by the optimal value function: 
\begin{equation}
\label{eq:explore hjb}
\begin{aligned}
& \frac{\partial J^*}{\partial t}(t,x) + \sup_{\bm \pi \in \mathcal{P}(\mathcal{A})} \int_{\mathcal{A}} \left[ H\big( t,x,a,\frac{\partial J^*}{\partial x}(t,x), \frac{\partial^2 J^*}{\partial x^2}(t,x) \big) - \gamma \log \bm{\pi}(a) \right]  \bm{\pi}(a)\dd a -\beta J^*(t,x;\bm\pi)= 0,\\ & J^*(T,x) = h(x).
\end{aligned}
\end{equation}
Moreover, the optimal (stochastic) policy is a {\it Gibbs measure} or {\it Boltzmann distribution}:
\[\bm{\pi}^*(\cdot|t,x) \propto \exp\left\{ \frac{1}{\gamma} H\left( t,x,\cdot,\frac{\partial J^*}{\partial x}(t,x), \frac{\partial^2 J^*}{\partial x^2}(t,x) \right)\right\} \]
or, after normalization,
\begin{equation}
\label{eq:gibbs} \bm{\pi}^*(a|t,x) = \frac{\exp\{ \frac{1}{\gamma} H\big( t,x,a,\frac{\partial J^*}{\partial x}(t,x), \frac{\partial^2 J^*}{\partial x^2}(t,x) \big)\}}{ \int_{\mathcal{A}}  \exp\{ \frac{1}{\gamma} H\big( t,x,a,\frac{\partial J^*}{\partial x}(t,x), \frac{\partial^2 J^*}{\partial x^2}(t,x) \big)\}  \dd a  }.
\end{equation}
This result shows that one should use a Gibbs sampler in general to generate trial-and-error strategies to explore the environment when the regularizer is chosen to be entropy.\footnote{We stress here that the Gibbs sampler is due to the entropy-regularizer, and it is possible to have other types of distribution for exploration. For example, \cite{han2022choquet} propose a family of regularizers that
lead to exponential, uniform, and $\varepsilon$-greedy exploratory policies.
\cite{o2021variational,GXZ2020} study how to choose time and state dependent temperature parameters. One may also carry out exploration via randomizing value function instead of policies; see e.g. \citet{osband2019deep}.}

In the special case when the system dynamics are linear in action and payoffs are quadratic in action,  the Hamiltonian is a quadratic function of action and  Gibbs  thus specializes to Gaussian under mild conditions.

Plugging \eqref{eq:gibbs} to \eqref{eq:explore hjb} to replace the supremum operator therein leads to the following equivalent form of the exploratory HJB equation
\begin{equation}
\label{eq:explore hjb simplified}
\begin{aligned}
&\frac{\partial J^*}{\partial t}(t,x) + \gamma \log\left[ \int_{\mathcal{A}} \exp\left\{ \frac{1}{\gamma} H\left( t,x,a,\frac{\partial J^*}{\partial x}(t,x), \frac{\partial^2 J^*}{\partial x^2}(t,x) \right)\right\} \dd a\right] - \beta J^*(t,x) = 0,\\
& J^*(T,x) = h(x).
\end{aligned}
\end{equation}
More theoretical properties regarding \eqref{eq:explore hjb simplified} and consequently  the problem \eqref{eq:model pi}--\eqref{eq:objective relaxed action} including its well-posedness can be found in \citet{tang2021exploratory}.

The Gibbs measure can also be used to derive a \textit{policy improvement theorem}.
\citet{wang2020continuous} prove such a theorem in the context of continuous-time mean--variance
portfolio selection, which is essentially a linear--quadratic (LQ) control problem. The following extends that result to the general case.
\begin{theorem}
\label{lemma:policy improvement hjb} For any given $\bm{\pi}\in
\bm{\Pi}$, define $\bm\pi'(\cdot|t,x)\propto \exp\{ \frac{1}{\gamma}H\big( t,x,\cdot,\frac{\partial J}{\partial x}(t,x;\bm\pi), \frac{\partial^2 J}{\partial x^2}(t,x;\bm\pi) \big)    \} $. If $\bm\pi'\in \bm\Pi$, then
\[ J(t,x;\bm\pi') \geq J(t,x;\bm\pi) . \]
Moreover, if the following map
$$\mathcal I(\bm\pi)= \frac{\exp\{ \frac{1}{\gamma}H\big( t,x,\cdot,\frac{\partial J}{\partial x}(t,x;\bm\pi), \frac{\partial^2 J}{\partial x^2}(t,x;\bm\pi) \big)    \}}{\int_{{\cal A}}\exp\{ \frac{1}{\gamma}H\big( t,x,a,\frac{\partial J}{\partial x}(t,x;\bm\pi), \frac{\partial^2 J}{\partial x^2}(t,x;\bm\pi) \big)    \}\dd a},
\;\;\bm\pi\in \bm{\Pi}$$
has a fixed point $\bm\pi^*$, then $\bm\pi^*$ is the optimal policy.
\end{theorem}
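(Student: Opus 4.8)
The plan is to prove the improvement inequality by a verification-type computation: apply It\^o's formula to the value function $J(\cdot,\cdot;\bm\pi)$ evaluated along the state process driven by $\bm\pi'$, combine it with the Feynman--Kac PDE \eqref{eq:explore fk} satisfied by $J(\cdot,\cdot;\bm\pi)$, and exploit the fact that a Gibbs density maximizes the entropy-regularized expected Hamiltonian. Concretely, fix $(t,x)$ and let $X^{\bm\pi'}=\{X^{\bm\pi'}_s,\,t\le s\le T\}$ solve \eqref{eq:model pi} under $\bm\pi'$ with $X^{\bm\pi'}_t=x$ and $a^{\bm\pi'}_s\sim\bm\pi'(\cdot|s,X^{\bm\pi'}_s)$. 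Applying It\^o's formula to $s\mapsto e^{-\beta(s-t)}J(s,X^{\bm\pi'}_s;\bm\pi)$ on $[t,T]$, and using that $H(t,x,a,p,q)=b\circ p+\frac12\sigma\sigma^\top\circ q+r$ so the drift equals $\frac{\partial J}{\partial t}-\beta J+H(s,X^{\bm\pi'}_s,a^{\bm\pi'}_s,\frac{\partial J}{\partial x},\frac{\partial^2 J}{\partial x^2})-r(s,X^{\bm\pi'}_s,a^{\bm\pi'}_s)$, then taking expectations (treating the stochastic-integral term as a true martingale), using the terminal condition $J(T,\cdot;\bm\pi)=h$, and adding back the running reward and entropy terms that make up $J(t,x;\bm\pi')$, I obtain
\[
J(t,x;\bm\pi')-J(t,x;\bm\pi)=\E\left[\int_t^T e^{-\beta(s-t)}\left(\frac{\partial J}{\partial t}-\beta J+H\big(s,X^{\bm\pi'}_s,a^{\bm\pi'}_s,\tfrac{\partial J}{\partial x},\tfrac{\partial^2 J}{\partial x^2}\big)-\gamma\log\bm\pi'(a^{\bm\pi'}_s|s,X^{\bm\pi'}_s)\right)\dd s\right],
\]
where $J$ and its derivatives are all evaluated at $(s,X^{\bm\pi'}_s)$ and with policy $\bm\pi$.

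Next I condition on $X^{\bm\pi'}_s$ and integrate out $a^{\bm\pi'}_s\sim\bm\pi'(\cdot|s,X^{\bm\pi'}_s)$, and substitute the PDE \eqref{eq:explore fk} for $\bm\pi$ to eliminate $\frac{\partial J}{\partial t}-\beta J=-\int_{\mathcal{A}}[H-\gamma\log\bm\pi]\bm\pi\,\dd a$. Introducing $G_{s,x}(\nu):=\int_{\mathcal{A}}\big[H\big(s,x,a,\tfrac{\partial J}{\partial x}(s,x;\bm\pi),\tfrac{\partial^2 J}{\partial x^2}(s,x;\bm\pi)\big)-\gamma\log\nu(a)\big]\nu(a)\,\dd a$ for $\nu\in\mathcal{P}(\mathcal{A})$, the time-integrand in the display reduces, after taking conditional expectation, to $G_{s,x}\big(\bm\pi'(\cdot|s,x)\big)-G_{s,x}\big(\bm\pi(\cdot|s,x)\big)$. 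Since by definition $\bm\pi'(\cdot|s,x)\propto\exp\{\tfrac1\gamma H(s,x,\cdot,\tfrac{\partial J}{\partial x}(s,x;\bm\pi),\tfrac{\partial^2 J}{\partial x^2}(s,x;\bm\pi))\}$, the classical Gibbs variational identity yields $G_{s,x}\big(\bm\pi'(\cdot|s,x)\big)-G_{s,x}(\nu)=\gamma\,\mathrm{KL}\big(\nu\,\|\,\bm\pi'(\cdot|s,x)\big)\ge0$ for every $\nu\in\mathcal{P}(\mathcal{A})$ (both densities having full support by Definition \ref{ass:admissible}(i)). Taking $\nu=\bm\pi(\cdot|s,x)$ makes the integrand nonnegative for all $(s,x)$, whence $J(t,x;\bm\pi')\ge J(t,x;\bm\pi)$.

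For the final assertion, suppose $\bm\pi^*=\mathcal{I}(\bm\pi^*)$, so that $\bm\pi^*(\cdot|t,x)\propto\exp\{\tfrac1\gamma H(t,x,\cdot,\tfrac{\partial J}{\partial x}(t,x;\bm\pi^*),\tfrac{\partial^2 J}{\partial x^2}(t,x;\bm\pi^*))\}$. Plugging this into the Feynman--Kac PDE \eqref{eq:explore fk} written for $\bm\pi^*$ and using $G_{t,x}\big(\bm\pi^*(\cdot|t,x)\big)=\gamma\log\int_{\mathcal{A}}\exp\{\tfrac1\gamma H(t,x,a,\tfrac{\partial J}{\partial x}(t,x;\bm\pi^*),\tfrac{\partial^2 J}{\partial x^2}(t,x;\bm\pi^*))\}\,\dd a$ shows that $J(\cdot,\cdot;\bm\pi^*)$ solves the exploratory HJB equation \eqref{eq:explore hjb simplified} with terminal data $h$. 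By the uniqueness of its (viscosity) solution quoted from \cite{tang2021exploratory}, $J(\cdot,\cdot;\bm\pi^*)=J^*$; then $\bm\pi^*$ coincides with the optimal Gibbs policy \eqref{eq:gibbs}, so $\bm\pi^*$ is optimal.

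The main obstacle is making the It\^o step rigorous. First, \eqref{eq:explore fk} is a priori only known to admit a unique viscosity solution, so It\^o's formula is not directly applicable to $J(\cdot,\cdot;\bm\pi)$; I would either impose nondegeneracy (uniform ellipticity of $\sigma\sigma^\top$) to upgrade $J(\cdot,\cdot;\bm\pi)$ to a classical $C^{1,2}$ solution via parabolic regularity, or run a mollification/approximation argument. Second, one must verify that the stochastic-integral term is a genuine martingale (hence has zero expectation) and that all the expectations above are finite; this relies on the polynomial growth of $J(\cdot,\cdot;\bm\pi)$ and its derivatives, the moment bounds for $X^{\bm\pi'}$ under Assumption \ref{ass:dynamic}, and the entropy/moment growth conditions of Definition \ref{ass:admissible}(iii) applied to $\bm\pi'\in\bm\Pi$ --- the same ingredients underpinning the PE/PG analysis in \cite{jia2021policypg}.
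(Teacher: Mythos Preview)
Your proposal is correct and follows essentially the same verification argument as the paper. The only cosmetic difference is that the paper applies It\^o's formula to $e^{-\beta s}J(s,\tilde X_s^{\bm\pi'};\bm\pi)$ along the \emph{averaged} (exploratory) process $\tilde X^{\bm\pi'}$ of \eqref{eq:model relaxed}, so the action is already integrated out in the drift, whereas you use the sample process $X^{\bm\pi'}$ and then condition on the state to integrate out $a^{\bm\pi'}_s$; the two are equivalent. The paper isolates the Gibbs-maximizer fact as a separate lemma (your $G_{s,x}(\bm\pi')\ge G_{s,x}(\nu)$ via the KL identity is exactly that), and it makes the ``stochastic integral has zero mean'' step rigorous by localizing with stopping times $u_n=\inf\{s\ge t:|\tilde X_s^{\bm\pi'}|\ge n\}$ and passing to the limit via the polynomial-growth/moment estimates and dominated convergence---precisely the obstacle you flagged. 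Your regularity concern ($J(\cdot,\cdot;\bm\pi)$ being only a viscosity solution) is real but the paper does not address it either; it tacitly assumes $J(\cdot,\cdot;\bm\pi)\in C^{1,2}$ when invoking It\^o's formula.
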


At this point, Theorem \ref{lemma:policy improvement hjb} remains a {\it theoretical} result that cannot be directly applied to learning procedures, because the Hamiltonian  $H\big( t,x,a,\frac{\partial J}{\partial x}(t,x;\bm\pi), \frac{\partial^2 J}{\partial x^2}(t,x;\bm\pi) \big)$ depends on the knowledge of the model parameters which we do not have in the RL context. To develop implementable algorithms, we turn to Q-learning.

\section{q-Function in Continuous Time: The Theory}
\label{sec:foundation}

This section is the theoretical foundation of the paper, with the analysis entirely in continuous time. We start with defining a Q-function parameterized by a time step $\Delta t>0$, and then motivate the notion of q-function that is independent of $\Delta t$. We further provide martingale characterizations of the q-function.
\subsection{Q-function}
\label{subsec:conventional q function}
\citet{tallec2019making} consider a continuous-time MDP and then discretize it upfront to a discrete-time MDP with time discretization $\delta t$. In that setting, the authors argue that ``there is no Q-function in continuous time'' because  the Q-function collapses to the value function when $\delta t$ is infinitesimally small.
Here, we take an entirely different approach that does not involve time discretization. Incidentally, by comparing the form of the Gibbs measure (\ref{eq:gibbs}) with that of the widely employed Boltzmann exploration for learning in  discrete-time MDPs, \cite{GXZ2020} and \cite{zhou2021coo} {\it conjecture} that the continuous-time counterpart of the Q-function is the Hamiltonian.
Our approach will also provide a rigorous justification on and, indeed, a refinement of this conjecture.

Given $\bm{\pi}\in \bm{\Pi}$ and $(t,x,a)\in [0,T)\times \mathbb{R}^d\times \mathcal{A}$, consider a  ``perturbed" policy of $\bm{\pi}$, denoted by  $\tilde{\bm{\pi}}$, as follows: It takes the action $a\in \mathcal{A}$ on $[t,t+\Delta t)$ where $\Delta t>0$, and then follows $\bm{\pi}$ on $[t+\Delta t,T]$. 
The corresponding state process $X^{\tilde{\bm{\pi}}}$, given $X^{\tilde{\bm{\pi}}}_t=x$, can be broken into two pieces. On $[t,t+\Delta t)$, it is $X^a$ which is the solution to
\[\dd X_s^{{a}} = b(s,X_s^{{a}},a)\dd s + \sigma(s,X_s^{{a}},a) \dd W_s,\
s\in [t,t+\Delta t);  X_t^{{a}}=x,\]
while on $[t+\Delta t,T]$, it is $X^{\bm{\pi}}$ following  \eqref{eq:model pi} but with the initial time--state pair $(t+\Delta t, X_{t+\Delta t}^{a})$.

With $\Delta t>0$ fixed, we introduce the ($\Delta t$-parameterized)  Q-function, denoted by $Q_{\Delta t}(t,x,a;\bm{\pi} )$,  to be the expected reward drawn from the perturbed policy, $\tilde{\bm{\pi}}$:
\footnote{In the existing literature, Q-learning with  entropy regularization  is often referred to as the ``soft Q-learning'' with the associated ``soft Q-function''. A review of and more discussions on soft Q-learning in discrete time can be found in Appendix A as well as in  \citet{haarnoja2018soft, schulman2017equivalence}.
}
\[\begin{aligned}
& Q_{\Delta t}(t,x,a;\bm{\pi} )\\
= & \E^{\p}\bigg[ \int_t^{t+\Delta t} e^{-\beta(s-t)}r(s,X_s^{a},a)\dd s\\
& +  \int_{t+\Delta t}^T e^{-\beta(s-t)}[r(s,X_s^{\bm{\pi}},a_s^{\bm \pi})-\gamma\log \bm{\pi}(a_s^{\bm \pi}|s,X_s^{\bm{\pi}}) ]\dd s + e^{-\beta(T-t)}h(X_T^{\bm{\pi}})\Big|X_t^{\tilde{\bm{\pi}}} = x \bigg] .
\end{aligned} \]

Recall that  our formulation \eqref{eq:objective relaxed action} includes an entropy regularizer term that incentivizes exploration using stochastic policies. However, in defining $Q_{\Delta t}(t,x,a;\bm{\pi} )$ above we do not include such a term on  $[t,t+\Delta t)$ for $\tilde{\bm{\pi}}$ because a deterministic constant action $a$ is applied whose entropy is excluded.

The following proposition provides an expansion of this Q-function in $\Delta t$.
\begin{proposition}
\label{prop:Q expand}
We have
\begin{equation}\label{qdt}
\begin{aligned}
& Q_{\Delta t}(t,x,a;\bm{\pi} ) \\
= & J(t,x;\bm{\pi}) + \left[ \frac{\partial J}{\partial t}(t,x;\bm{\pi}) + H\big( t,x,a,\frac{\partial J}{\partial x}(t,x;\bm{\pi}), \frac{\partial^2 J}{\partial x^2}(t,x;\bm{\pi})\big)  -\beta J(t,x;\bm{\pi}) \right]  \Delta t + o( \Delta t ).
\end{aligned}
\end{equation}
\end{proposition}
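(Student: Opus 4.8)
The plan is to reduce \eqref{qdt} to an application of It\^o's formula on the short interval $[t,t+\Delta t]$, after first ``peeling off'' the tail of the perturbed policy $\tilde{\bm\pi}$. First I would use the structure of $\tilde{\bm\pi}$: since it applies the deterministic action $a$ on $[t,t+\Delta t)$ and then coincides with $\bm\pi$, the state $X^{\tilde{\bm\pi}}$ agrees with $X^a$ on $[t,t+\Delta t]$, and on $[t+\Delta t,T]$ it solves \eqref{eq:model pi} started from $(t+\Delta t,X^a_{t+\Delta t})$. Conditioning on $\f_{t+\Delta t}$ and invoking the flow property of \eqref{eq:model pi} together with the definition \eqref{eq:objective relaxed action} of $J(\cdot,\cdot;\bm\pi)$, the second and third terms in the definition of $Q_{\Delta t}$ collapse to $e^{-\beta\Delta t}J(t+\Delta t,X^a_{t+\Delta t};\bm\pi)$, so that
\[Q_{\Delta t}(t,x,a;\bm\pi)=\E^{\p}\!\left[\int_t^{t+\Delta t}e^{-\beta(s-t)}r(s,X^a_s,a)\,\dd s + e^{-\beta\Delta t}J(t+\Delta t,X^a_{t+\Delta t};\bm\pi)\,\Big|\,X^a_t=x\right].\]

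Next I would apply It\^o's formula to $s\mapsto e^{-\beta(s-t)}J(s,X^a_s;\bm\pi)$ on $[t,t+\Delta t]$ (assuming, consistent with the standing regularity invoked wherever these derivatives appear, that $J(\cdot,\cdot;\bm\pi)$ is smooth enough — say $C^{1,2}$ with polynomially growing derivatives — for It\^o to apply and, after a routine localization using the linear growth of $b,\sigma$ and the moment bounds on $X^a$, for the $\dd W$-integral to be a true martingale). Taking expectations and recognizing via the definition \eqref{eq:H} of the Hamiltonian that the drift of $J$ along $X^a$ equals $\frac{\partial J}{\partial t}+H(s,X^a_s,a,\frac{\partial J}{\partial x},\frac{\partial^2 J}{\partial x^2})-r(s,X^a_s,a)-\beta J$, the running-reward term cancels against the $-r$ in the drift, and I obtain the \emph{exact} identity
\[Q_{\Delta t}(t,x,a;\bm\pi)=J(t,x;\bm\pi)+\E^{\p}\!\left[\int_t^{t+\Delta t}e^{-\beta(s-t)}\,\Phi(s,X^a_s)\,\dd s\right],\]
where, writing $\Phi(s,y):=\frac{\partial J}{\partial t}(s,y;\bm\pi)+H\bigl(s,y,a,\frac{\partial J}{\partial x}(s,y;\bm\pi),\frac{\partial^2 J}{\partial x^2}(s,y;\bm\pi)\bigr)-\beta J(s,y;\bm\pi)$, the quantity $\Phi(t,x)$ is precisely the bracketed factor multiplying $\Delta t$ in \eqref{qdt}.

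It then remains to extract the first-order term. Writing $e^{-\beta(s-t)}=1+O(\Delta t)$ uniformly on $[t,t+\Delta t]$ and decomposing $\Phi(s,X^a_s)=\Phi(t,x)+\bigl(\Phi(s,X^a_s)-\Phi(t,x)\bigr)$, the identity above becomes $Q_{\Delta t}(t,x,a;\bm\pi)=J(t,x;\bm\pi)+\Phi(t,x)\Delta t+\E^{\p}\bigl[\int_t^{t+\Delta t}(\Phi(s,X^a_s)-\Phi(t,x))\,\dd s\bigr]+o(\Delta t)$, and I would show the remaining expectation is $o(\Delta t)$. This follows from the joint continuity of $\Phi$ at $(t,x)$ (from Assumption \ref{ass:dynamic} and the regularity of $J$), the standard SDE estimate $\sup_{t\le s\le t+\Delta t}\E^{\p}|X^a_s-x|^{p}\to 0$ as $\Delta t\downarrow 0$ for every $p\ge1$, and the polynomial growth of $\Phi$ in $y$, which together give uniform integrability and hence $\sup_{s\in[t,t+\Delta t]}\E^{\p}|\Phi(s,X^a_s)-\Phi(t,x)|\to0$; integrating over an interval of length $\Delta t$ makes the contribution $o(\Delta t)$. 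Substituting the explicit form of $\Phi(t,x)$ yields exactly \eqref{qdt}.

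I expect the delicate points to be concentrated in the last step — the $o(\Delta t)$ control — and, beneath it, securing enough regularity of $J(\cdot,\cdot;\bm\pi)$ both to justify It\^o in the second step and to guarantee that $\Phi$ is continuous with at most polynomial growth. Once those regularity facts are in hand (or posited, as elsewhere in the paper), the remaining estimates are routine Gr\"onwall-type SDE bounds and dominated convergence.
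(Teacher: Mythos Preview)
Your proposal is correct and follows essentially the same route as the paper's proof: peel off the tail using the flow property to reduce to $\E[\int_t^{t+\Delta t}e^{-\beta(s-t)}r\,\dd s + e^{-\beta\Delta t}J(t+\Delta t,X^a_{t+\Delta t};\bm\pi)]$, apply It\^o's formula to $e^{-\beta(s-t)}J(s,X^a_s;\bm\pi)$ to obtain the exact integral of $\Phi$, and then approximate the integral by $\Phi(t,x)\Delta t$. The paper compresses the last step into the single phrase ``due to the approximation of the integral involved,'' whereas you spell out the continuity, moment, and uniform-integrability ingredients behind the $o(\Delta t)$ control; otherwise the arguments are the same.
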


So, this $\Delta t$-based Q-function includes three terms. The leading term is the value function $J$, which is {\it independent} of the action $a$ taken. This
is natural because we only apply $a$  for a time window of length $\Delta t$ and hence the impact of this action is only of the order $\Delta t$.
Next, the first-order term of $\Delta t$  is the Hamiltonian plus the temporal change of the value function (consisting of the derivative of the value function in time and the discount term, both of which would disappear for stationary and non-discounted problems), in which {\it only} the Hamiltonian depends on the action $a$. It is interesting to note that these are the same terms in the Feynman--Kac PDE \eqref{eq:explore fk}, less the entropy term. Finally, the higher-order residual  term,  $o(\Delta t)$, comes from the approximation  error of the integral over $[t,t+\Delta t]$ and can be ignored when such errors are aggregated as $\Delta t$ gets smaller.

Proposition \ref{prop:Q expand} yields that this Q-function also collapses to the value function when $\Delta t\rightarrow 0$, similar to what \cite{tallec2019making} claim.
However, we must emphasize that our Q-function is {\it not} the same as the one used in \cite{tallec2019making}, who discretize time {\it upfront} and study the resulting discrete-time MDP. The corresponding Q-function and value function therein exist based on the discrete-time RL theory, denoted respectively by $Q^{\delta t},V^{\delta t}$ where $\delta t$ is the time discretization size.  Then \cite{tallec2019making} argue that the advantage function $A^{\delta t} = Q^{\delta t}-V^{\delta t}$ can be properly scaled by $\delta t$, leading to the existence of $\lim_{\delta t\to 0}\frac{A^{\delta t}}{\delta t}$.
In short, the Q-function  and the resulting algorithms in \cite{tallec2019making} are still in the realm of discrete-time MDPs. By contrast, $Q_{\Delta t}(t,x,a;\bm\pi)$ introduced here is a {\it continuous-time} notion. It is the value function of a policy that applies  a constant action in  $[t,t+\Delta t]$ and follows $\bm\pi$ thereafter. So $\Delta t$ here is a {\it parameter} representing the length of period in which the constant  action is taken, {\it not}
the time discretization size as in \cite{tallec2019making}. Most importantly,
our Q-function is just a tool used to introduce the q-function, the centerpiece  of this paper. Having said all these, we recognize that $\lim_{\delta t\to 0}\frac{A^{\delta t}}{\delta t}$  as identified by \cite{tallec2019making} is the counterpart of our q-function in their setting, and that we arrive at this same object in two different ways (discretizing upfront then taking $\delta t\to 0$, versus a true continuous-time analysis).

\subsection{q-function}
\label{subsec:q rate}
Since the leading term in the Q-function, $Q_{\Delta t}(t,x,a;\bm{\pi} )$,  coincides with the value function of $\bm{\pi}$ and hence can not be used to rank  action $a$, we focus on the first-order approximation, which gives an infinitesimal  state--action value. Motivated by this, we define the ``\textit{q-function}'' as follows:
\begin{definition}
\label{def:q rate function}
The q-function of the problem \eqref{eq:model pi}--\eqref{eq:objective relaxed action} associated with a given policy $\bm\pi\in {\bm\Pi}$ is defined as
\begin{equation}
\label{eq:q rate}
\begin{aligned}
q(t,x,a;\bm{\pi}) = &\frac{\partial J}{\partial t}(t,x;\bm{\pi}) + H\left( t,x,a,\frac{\partial J}{\partial x}(t,x;\bm{\pi}), \frac{\partial^2 J}{\partial x^2}(t,x;\bm{\pi})\right)  -\beta J(t,x;\bm{\pi}),\\
& \;\;\;\;\;(t,x,a)\in [0,T]\times \mathbb{R}^d\times \mathcal{A}.
\end{aligned}
\end{equation}
\end{definition}

Clearly, this function is the first-order {\it derivative} of the Q-function with respect to $\Delta t$, as an immediate consequence of Proposition \ref{prop:Q expand}:
\begin{corollary}
\label{coro:q as derivative}
We have
\begin{equation}
\label{eq:q rate derivative}
q(t,x,a;\bm{\pi}) = \lim_{\Delta t\to 0}\frac{Q_{\Delta t}(t,x,a;\bm{\pi} ) - J(t,x;\bm{\pi})}{\Delta t}.
\end{equation}
\end{corollary}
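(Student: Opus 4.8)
The plan is to read the result off directly from Proposition~\ref{prop:Q expand} together with Definition~\ref{def:q rate function}; there is essentially no independent content here beyond unwinding the Landau symbol. Proposition~\ref{prop:Q expand} gives the expansion
\[
Q_{\Delta t}(t,x,a;\bm{\pi}) = J(t,x;\bm{\pi}) + \Big[\tfrac{\partial J}{\partial t}(t,x;\bm{\pi}) + H\big(t,x,a,\tfrac{\partial J}{\partial x}(t,x;\bm{\pi}),\tfrac{\partial^2 J}{\partial x^2}(t,x;\bm{\pi})\big) - \beta J(t,x;\bm{\pi})\Big]\Delta t + o(\Delta t),
\]
and by Definition~\ref{def:q rate function} the bracketed coefficient of $\Delta t$ is exactly $q(t,x,a;\bm{\pi})$. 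So the first step is simply to subtract $J(t,x;\bm{\pi})$ from both sides and divide by $\Delta t>0$, obtaining
\[
\frac{Q_{\Delta t}(t,x,a;\bm{\pi}) - J(t,x;\bm{\pi})}{\Delta t} = q(t,x,a;\bm{\pi}) + \frac{o(\Delta t)}{\Delta t}.
\]

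The second and final step is to let $\Delta t\to 0$ and invoke the definition of $o(\Delta t)$: the residual is, by construction, a quantity $\varepsilon(t,x,a;\Delta t)$ with $\varepsilon(t,x,a;\Delta t)/\Delta t\to 0$ as $\Delta t\to 0$, so the right-hand side converges to $q(t,x,a;\bm{\pi})$ and hence the limit on the left-hand side exists and equals $q(t,x,a;\bm{\pi})$, which is the claim \eqref{eq:q rate derivative}.

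I do not expect any genuine obstacle at this stage, since all the analytic work — controlling the approximation error of the integral of the running reward over $[t,t+\Delta t]$ and the It\^o-type expansion of $J(t+\Delta t, X^a_{t+\Delta t};\bm{\pi})$ that produces the $\tfrac{\partial J}{\partial t}+H-\beta J$ coefficient and the $o(\Delta t)$ remainder — is carried out in the proof of Proposition~\ref{prop:Q expand}, which we take as given. The only point deserving a line of care is that, for fixed $(t,x,a)$, the little-$o$ bound supplied by that proof is of the right form to be divided by $\Delta t$ and sent to zero; this is immediate from the way the remainder arises there, so the corollary follows at once.
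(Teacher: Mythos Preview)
Your proposal is correct and matches the paper's approach exactly: the paper's proof of Corollary~\ref{coro:q as derivative} consists of the single sentence that it ``directly follows from the results of Proposition~\ref{prop:Q expand},'' and your write-up is precisely the unwinding of that remark via Definition~\ref{def:q rate function} and the meaning of the $o(\Delta t)$ symbol.
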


Some remarks are in order.
First,
the q-function is a function of the time--state--action triple under a given policy, analogous  to the conventional Q-function for MDPs; yet it is a continuous-time notion because  {\it it does not depend on any time-discretization}. This feature is a vital advantage for learning algorithm design as \cite{tallec2019making} point out that the performance of RL algorithms is very sensitive with respect to the time discretization. Second, the q-function is related to the advantage function in the MDP literature (e.g., \citealt{baird1993advantage,mnih2016asynchronous}), which describes  the difference between the state--action value and the state value. Here in this paper, the q-function reflects the \textit{instantaneous advantage rate} of a given action at a given time--state pair under a given policy.

We also notice that in (\ref{eq:q rate}) only the Hamiltonian depends on $a$; hence the improved policy presented in Theorem \ref{lemma:policy improvement hjb} can also be expressed in terms of the q-function:
\[ \bm\pi'(\cdot|t,x)\propto \exp\left\{ \frac{1}{\gamma}H\big( t,x,\cdot,\frac{\partial J}{\partial x}(t,x;\bm\pi), \frac{\partial^2 J}{\partial x^2}(t,x;\bm\pi) \big)    \right\} \propto \exp\left\{  \frac{1}{\gamma}q(t,x,\cdot;\bm{\pi}) \right\} . \]
Consequently, if we can learn the q-function $q(\cdot,\cdot,\cdot;\bm\pi)$ under any policy $\bm\pi$, then it follows from Theorem \ref{lemma:policy improvement hjb} that we can improve $\bm\pi$ by implementing a Gibbs measure over the q-values, analogous to, say, $\varepsilon$-greedy policy in classical  Q-learning \citep[Chapter 6]{sutton2011reinforcement}. Finally, the analysis so far justifies the aforementioned conjecture about the proper continuous-time version of the Q-function, and provides a theoretical interpretation to the widely used Boltzmann exploration in RL.

The main theoretical results of this paper are martingale characterizations of
the q-function, which can in turn be employed to devise algorithms for learning crucial functions including the q-function, in the same way as in applying the martingale approach for PE \citep{jia2021policy}.\footnote{Some discrete-time counterparts of these results are presented in Appendix A.}

The following first result characterizes the q-function associated with a given policy
$\bm\pi$, assuming that its value function has already been learned and known.

\begin{theorem}
\label{thm:q rate function property}
Let a policy $\bm\pi\in \bm\Pi$, its value function $J$  and a continuous function $\hat{q}:[0,T]\times \mathbb{R}^d\times \mathcal{A}\to \mathbb{R}$
be given. Then 
\begin{enumerate}[(i)]
\item $\hat{q}(t,x,a)=q(t,x,a;\bm\pi)$ for all $(t,x,a)\in[0,T]\times\mathbb{R}^d\times \mathcal{A}$ if and only if for all $(t,x)\in[0,T]\times\mathbb{R}^d$, the following process
\begin{equation}
\label{eq:martingale with q function}
e^{-\beta s} J(s,{X}_s^{\bm\pi};\bm\pi) + \int_t^s e^{-\beta u} [r(u,{X}_{u}^{\bm\pi},a^{\bm\pi}_{u}) - \hat{q}(u,{X}_{u}^{\bm\pi},a^{\bm\pi}_{u}) ]\dd u
\end{equation}
is an $(\{\f_s\}_{s\geq 0},\p)$-martingale, where $\{{X}_s^{\bm\pi}, t\leq s\leq T\}$ is the solution to (\ref{eq:model pi}) under $\bm\pi$ with ${X}_t^{\bm\pi}=x$.
\item If $\hat{q}(t,x,a)=q(t,x,a;\bm\pi)$ for all $(t,x,a)\in[0,T]\times\mathbb{R}^d\times \mathcal{A}$, then for any $\bm\pi'\in \bm\Pi$ and for all $(t,x)\in[0,T]\times\mathbb{R}^d$, the following process
\begin{equation}
\label{eq:martingale with q function-off}
e^{-\beta s} J(s,{X}_s^{\bm\pi'};\bm\pi) + \int_t^s e^{-\beta u} [r(u,{X}_{u}^{\bm\pi'},a^{\bm\pi'}_{u}) - \hat{q}(u,{X}_{u}^{\bm\pi'},a^{\bm\pi'}_{u}) ]\dd u
\end{equation}
is an $(\{\f_s\}_{s\geq 0},\p)$-martingale, where $\{{X}_s^{\bm\pi'}, t\leq s\leq T\}$ is the solution to (\ref{eq:model pi}) under $\bm\pi'$ with initial condition ${X}_{t}^{\bm\pi'} = x$.
\item If there exists $\bm\pi'\in \bm\Pi$ such that for all $(t,x)\in[0,T]\times\mathbb{R}^d$, \eqref{eq:martingale with q function-off} is an $(\{\f_s\}_{s\geq 0}, \p)$-martingale where ${X}_{t}^{\bm\pi'} = x$, then  $\hat{q}(t,x,a)=q(t,x,a;\bm\pi)$ for all $(t,x,a)\in[0,T]\times\mathbb{R}^d\times \mathcal{A}$.
\end{enumerate}
Moreover, in any of the three cases above, the q-function satisfies
\begin{equation}
\label{eq:q hjb}
\int_{\mathcal{A}} \big[ q(t,x,a;\bm{\pi} ) -\gamma \log\bm\pi(a|t,x) \big] \bm\pi(a|t,x)\dd a =0, \;\;\forall (t,x)\in[0,T]\times\mathbb{R}^d.
\end{equation}


\end{theorem}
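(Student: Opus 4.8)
The plan is to obtain all four assertions from a single It\^o decomposition. Let $\bm\rho\in\bm\Pi$ be an admissible policy and $\{X_s\}_{t\le s\le T}$ the solution of \eqref{eq:model pi} under $\bm\rho$ with $X_t=x$; we take $\bm\rho=\bm\pi$ for part (i) and $\bm\rho=\bm\pi'$ for parts (ii) and (iii). Applying It\^o's formula to $s\mapsto e^{-\beta s}J(s,X_s;\bm\pi)$ and invoking only the definition \eqref{eq:H} of the Hamiltonian and the definition \eqref{eq:q rate} of the q-function, the drift reduces to $e^{-\beta s}\bigl[q(s,X_s,a_s;\bm\pi)-r(s,X_s,a_s)\bigr]$, where $a_s$ is the action actually realized along the path. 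Adding the finite-variation term $\int_t^s e^{-\beta u}[r(u,X_u,a_u)-\hat q(u,X_u,a_u)]\dd u$ shows that the process in \eqref{eq:martingale with q function} (respectively \eqref{eq:martingale with q function-off}) equals
\begin{align*}
& J(t,x;\bm\pi)+\int_t^s e^{-\beta u}\bigl[q(u,X_u,a_u;\bm\pi)-\hat q(u,X_u,a_u)\bigr]\dd u\\
& \qquad +\int_t^s e^{-\beta u}\,\frac{\partial J}{\partial x}(u,X_u;\bm\pi)^\top\sigma(u,X_u,a_u)\,\dd W_u .
\end{align*}
Since $\sigma$ has linear growth in $x$ uniformly in $a$ (Assumption \ref{ass:dynamic}), $\frac{\partial J}{\partial x}(\cdot\,;\bm\pi)$ has polynomial growth, and $X$ has finite moments of every order, the stochastic integral is a genuine $(\{\f_s\}_{s\ge0},\p)$-martingale; this decomposition is the engine for both directions.

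\noindent\textbf{The sufficiency directions.} If $\hat q\equiv q(\cdot,\cdot,\cdot;\bm\pi)$ then the middle term above vanishes identically, so the process is the sum of a constant and a martingale, hence a martingale. This proves the implication ``$\hat q=q\Rightarrow$ martingale'' in part (i) and all of part (ii); for (ii) the very same computation is run along the $\bm\pi'$-trajectory, which is legitimate because $q(\cdot,\cdot,\cdot;\bm\pi)$ is built pointwise out of the derivatives of $J(\cdot,\cdot;\bm\pi)$, so the drift identity does not care which policy generated $a_s$.

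\noindent\textbf{The necessity directions and pointwise identification.} Conversely, if the process in \eqref{eq:martingale with q function} (respectively \eqref{eq:martingale with q function-off}) is a martingale, subtracting the stochastic-integral martingale leaves $s\mapsto\int_t^s e^{-\beta u}[q(u,X_u,a_u;\bm\pi)-\hat q(u,X_u,a_u)]\dd u$, a continuous martingale of finite variation, hence identically $0$; differentiating, $q(u,X_u,a_u;\bm\pi)=\hat q(u,X_u,a_u)$ for a.e.\ $u\in[t,T]$, $\p$-a.s. Promoting this to the pointwise identity $\hat q\equiv q(\cdot,\cdot,\cdot;\bm\pi)$ is the step I expect to be the main obstacle: fix such an $s$ and condition on $\f_s^W\vee\sigma(Z_u:u<s)$, the information carried by the Brownian noise up to $s$ and by the randomizations strictly before $s$. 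Then $X_s$ is measurable while, conditionally, $a_s$ is still drawn from $\bm\rho(\cdot|s,X_s)$, whose support is all of $\mathcal A$ by Definition \ref{ass:admissible}; hence $\int_{\mathcal A}\one\{q(s,X_s,a;\bm\pi)\ne\hat q(s,X_s,a)\}\bm\rho(a|s,X_s)\dd a=0$ a.s., and continuity of $q(\cdot,\cdot,\cdot;\bm\pi)$ and $\hat q$ forces $q(s,X_s,a;\bm\pi)=\hat q(s,X_s,a)$ for \emph{every} $a\in\mathcal A$, a.s., for a.e.\ $s$. Choosing a sequence $s_n\downarrow t$ of such times, using $X_{s_n}\to x$ a.s.\ and continuity in $(t,x)$, we get $q(t,x,a;\bm\pi)=\hat q(t,x,a)$ for all $a\in\mathcal A$; since $(t,x)$ was arbitrary this finishes the ``martingale $\Rightarrow\hat q=q$'' implication in (i) and all of (iii). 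Finally \eqref{eq:q hjb} is unconditional: substituting \eqref{eq:q rate} into the Feynman--Kac equation \eqref{eq:explore fk} for $J(\cdot;\bm\pi)$ and cancelling the common $\frac{\partial J}{\partial t}(t,x;\bm\pi)-\beta J(t,x;\bm\pi)$ leaves exactly $\int_{\mathcal A}[q(t,x,a;\bm\pi)-\gamma\log\bm\pi(a|t,x)]\bm\pi(a|t,x)\dd a=0$.

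\noindent\textbf{Remaining technical points.} Two routine but nontrivial items must be filled in. First, the It\^o step presupposes $J(\cdot,\cdot;\bm\pi)\in C^{1,2}$ with polynomially growing derivatives, whereas \eqref{eq:explore fk} is a priori only known to have a polynomially-growing viscosity solution; one closes this gap with interior parabolic (Schauder/Krylov) estimates or a mollification argument. Second, one needs the $L^p$-moment bounds on $X_s$, on $\frac{\partial J}{\partial x}(s,X_s;\bm\pi)$ and on the running rewards --- available from Assumption \ref{ass:dynamic} and Definition \ref{ass:admissible}(iii) --- to certify that the stochastic integral is a true (not merely local) martingale and to justify passing to the limit $s_n\downarrow t$ by dominated convergence. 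Both are handled exactly as in the policy-evaluation analysis of \citet{jia2021policy}.
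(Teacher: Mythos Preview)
Your proposal is correct and follows the same architecture as the paper: It\^o's formula yields the decomposition into a stochastic integral plus the finite-variation term $\int_t^s e^{-\beta u}[q(u,X_u,a_u;\bm\pi)-\hat q(u,X_u,a_u)]\,\dd u$, the latter is then a continuous finite-variation (local) martingale and hence identically zero, and finally the full-support requirement in Definition~\ref{ass:admissible}(i) promotes this pathwise identity to the pointwise one; equation~\eqref{eq:q hjb} is obtained exactly as you say, by substituting \eqref{eq:q rate} into the Feynman--Kac PDE~\eqref{eq:explore fk}.

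The only organizational difference is in how the pointwise promotion is executed. You condition on $\f_s^W\vee\sigma(Z_u:u<s)$, use that $a_s$ then has law $\bm\rho(\cdot|s,X_s)$ with full support to deduce $q(s,X_s,a;\bm\pi)=\hat q(s,X_s,a)$ for every $a$, and send $s\downarrow t$ along path continuity. The paper instead argues by contradiction: assuming $q(t^*,x^*,a^*;\bm\pi)\neq\hat q(t^*,x^*,a^*)$, it starts the process at $(t^*,x^*)$, stops it when $(s,X_s)$ leaves a small neighbourhood, observes that the action can then almost never land in $\mathcal B_\delta(a^*)$, and derives a contradiction from Fubini together with $\int_{\mathcal B_\delta(a^*)}\bm\pi(a|s,X_s)\,\dd a>0$. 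Both routes rest on the same fact---that the action at time $s$ is fresh randomness with full support given the strict past---and both implicitly use that $X_s$ does not depend on $Z_s$; the paper's stopping-time packaging simply avoids naming your conditioning $\sigma$-algebra explicitly, while your limit $s_n\downarrow t$ replaces its spatial localization.
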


\cite{jia2021policy} unify and characterize the learning of value function (i.e., PE) by martingale conditions of certain stochastic processes.
Theorem \ref{thm:q rate function property} shows that learning the q-function
again boils down to maintaining the martingality of the processes \eqref{eq:martingale with q function} or \eqref{eq:martingale with q function-off}.
However, there are subtle differences between these martingale conditions.
\cite{jia2021policy} consider only deterministic policies so the martingales therein are with respect to $(\{\f_s^W\}_{s\geq 0},\p^W)$. \cite{jia2021policypg} extends the policy evaluation to include stochastic policies but the related martingales are in terms of the {\it averaged} state $\tilde {X}^{\bm\pi}$ and hence also with respect to $(\{\f_s^W\}_{s\geq 0},\p^W)$.
By contrast, the martingality in Theorem \ref{thm:q rate function property} 
is with respect to $(\{\f_t\}_{t\geq 0},\p)$, where the filtration $\{\f_t\}_{t\geq 0}$ is the {\it enlarged} one that includes the randomness for generating actions. 
Note that the filtration determines the class of test functions necessary to characterize a martingale through the so-called martingale orthogonality conditions \citep{jia2021policy}. So the above martingale conditions  suggest that one ought to  choose test functions dependent of the past and current actions when designing algorithms to learn the q-function.
Moreover, the q-function can be interpreted as a compensator to guarantee the martingality
over this larger information field.\footnote{More precisely, $\int_0^t e^{-\beta s} q(s,X_s^{\bm\pi},a^{\bm\pi}_s;\bm\pi) \dd s$ is the compensator of $e^{-\beta t} J(t,{X}_t^{\bm\pi};\bm\pi) + \int_0^t e^{-\beta s} r(s,{X}_s^{\bm\pi},a^{\bm\pi}_s) \dd s$.  Recall that a compensator of an adapted stochastic process $Y_t$ is a predictable process $A_t$ such that $Y_t-A_t$ is a local martingale. Intuitively speaking, the compensator is the drift part of a diffusion process, extracting the trend of the process.}

Theorem \ref{thm:q rate function property}-(i) informs {\it on-policy} learning, namely, learning the q-function of the  given policy $\bm\pi$, called the {\it target policy}, based on data $\{(s,{X}_{s}^{\bm\pi},a^{\bm\pi}_{s}), t\leq s\leq T\}$ generated
by $\bm\pi$ itself. Nevertheless, one of the key advantages of classical Q-learning for MDPs is that it also works for {\it off-policy} learning, namely, learning the q-function of a given target policy $\bm\pi$ based on data generated by a different admissible policy $\bm\pi'$, called a {\it behavior policy}. On-policy and off-policy reflect two different learning settings depending on the availability of data and/or the choice of a learning agent. When data generation can be controlled by the agent, conducting on-policy learning is possible although she could still elect
off-policy learning. By contrast, when data generation is not controlled by the agent and she has to rely on data under other policies, then it becomes off-policy learning. Theorem \ref{thm:q rate function property}-(ii) and -(iii) stipulate that
our q-learning in continuous time can also be off-policy.

Finally, (\ref{eq:q hjb}) is a consistency condition to uniquely identify the q-function. If we only consider deterministic policies of the form $a(\cdot,\cdot)$
in which case $\gamma=0$ and $\bm\pi(\cdot|t,x)$ degenerates into the Dirac measure concentrating on $a(t,x)$, then  (\ref{eq:q hjb}) reduces to
\[ q(t,x,a(t,x);a)=0,\]
which corresponds to equation (23) in \citet{tallec2019making}.

The following result strengthens Theorem \ref{thm:q rate function property}, in the sense that it characterizes the value function and the q-function under a given policy {\it simultaneously}.
\begin{theorem}
\label{thm:qv learn}
Let a policy $\bm\pi\in \bm\Pi$, a function $\hat{J}\in C^{1,2}\big([0,T)\times \mathbb{R}^d \big) \cap C\big([0,T]\times \mathbb{R}^d \big)$ with polynomial growth, and a continuous function $\hat{q}:[0,T]\times \mathbb{R}^d\times \mathcal{A}\to \mathbb{R}$
be given satisfying
\begin{equation}
\label{eq:q hjb2}
\hat{J}(T,x) = h(x),\;\;\; \int_{\mathcal{A}} \big[ \hat{q}(t,x,a) -\gamma \log{\bm\pi}(a|t,x) \big] {\bm\pi}(a|t,x)\dd a =0,\;\;\forall (t,x)\in[0,T]\times\mathbb{R}^d.
\end{equation}
Then 
\begin{enumerate}[(i)]
\item $\hat{J}$ and $\hat{q}$ are respectively the value function and the q-function associated with ${\bm\pi}$ if and only if for all $(t,x)\in[0,T]\times\mathbb{R}^d$, the following process
\begin{equation}
\label{eq:martingale with q function2}
e^{-\beta s} \hat J(s,{X}_s^{\bm\pi}) + \int_t^s e^{-\beta u} [r(u,{X}_{u}^{\bm\pi},a^{\bm\pi}_{u}) - \hat{q}(u,{X}_{u}^{\bm\pi},a^{\bm\pi}_{u}) ]\dd u
\end{equation}
is an $(\{\f_s\}_{s\geq 0},\p)$-martingale, where $\{{X}_s^{\bm\pi}, t\leq s\leq T\}$ is the solution to (\ref{eq:model pi}) with ${X}_t^{\bm\pi}=x$.
\item If $\hat{J}$ and $\hat{q}$ are respectively the value function and the q-function associated with ${\bm\pi}$, then for any $\bm\pi'\in \bm\Pi$ and all $(t,x)\in[0,T]\times\mathbb{R}^d$, the following process
\begin{equation}
\label{eq:martingale with q function2-off}
e^{-\beta s} \hat J(s,{X}_s^{\bm\pi'}) + \int_t^s e^{-\beta u} [r(u,{X}_{u}^{\bm\pi'},a^{\bm\pi'}_{u}) - \hat{q}(u,{X}_{u}^{\bm\pi'},a^{\bm\pi'}_{u}) ]\dd u
\end{equation}
is an $(\{\f_s\}_{s\geq 0}, \p)$-martingale, where $\{{X}_s^{\bm\pi'}, t\leq s\leq T\}$ is the solution to (\ref{eq:model pi}) under $\bm\pi'$ with initial condition ${X}_{t}^{\bm\pi'} = x$.
\item If there exists $\bm\pi'\in \bm\Pi$ such that for all $(t,x)\in[0,T]\times\mathbb{R}^d$, \eqref{eq:martingale with q function2-off} is an $(\{\f_s\}_{s\geq 0}, \p)$-martingale where ${X}_{t}^{\bm\pi'} = x$,  then  $\hat{J}$ and $\hat{q}$ are respectively the value function and the q-function associated with ${\bm\pi}$.
\end{enumerate}
Moreover, in any of the three cases above,
if it holds further that ${\bm\pi}(a|t,x) = \frac{\exp\{  \frac{1}{\gamma}\hat{q}(t,x,a) \}}{\int_{{\cal A}} \exp\{  \frac{1}{\gamma}\hat{q}(t,x,a) \} \dd a }$, then ${\bm\pi}$ is the optimal policy and $\hat{J}$ is the optimal value function.
\end{theorem}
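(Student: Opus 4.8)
The plan is to reduce every assertion to a single Itô expansion of $e^{-\beta s}\hat J(s,X_s)$, combined with the consistency condition \eqref{eq:q hjb2} and the uniqueness of the (viscosity) solution to the Feynman--Kac PDE \eqref{eq:explore fk}. First note that statement (i) needs no separate treatment: its ``only if'' half is the case $\bm\pi'=\bm\pi$ of (ii), and its ``if'' half is the case $\bm\pi'=\bm\pi$ of (iii). So it suffices to establish (ii), (iii), and the ``moreover'' clause. Throughout, I would use that $W$ remains an $\{\f_s\}_{s\ge0}$-Brownian motion (the randomization device $Z$ being independent of $W$), so that stochastic integrals against $W$ with $\{\f_s\}_{s\ge0}$-progressively measurable integrands are local martingales.

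For (ii), assume $\hat J=J(\cdot,\cdot;\bm\pi)$ and $\hat q=q(\cdot,\cdot,\cdot;\bm\pi)$. Since $\hat J\in C^{1,2}$, applying Itô's formula to $e^{-\beta s}\hat J(s,X_s^{\bm\pi'})$ shows its finite-variation part equals $e^{-\beta s}\big[\frac{\partial\hat J}{\partial t}+b\circ\frac{\partial\hat J}{\partial x}+\frac12\sigma\sigma^\top\circ\frac{\partial^2\hat J}{\partial x^2}-\beta\hat J\big](s,X_s^{\bm\pi'},a_s^{\bm\pi'})\dd s$; by the definition \eqref{eq:H} of $H$ and \eqref{eq:q rate} of the q-function this is $e^{-\beta s}\big[q(s,X_s^{\bm\pi'},a_s^{\bm\pi'};\bm\pi)-r(s,X_s^{\bm\pi'},a_s^{\bm\pi'})\big]\dd s$, which is cancelled exactly by the drift of the added term $\int_t^s e^{-\beta u}[r-\hat q]\dd u$. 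Hence \eqref{eq:martingale with q function2-off} is an $(\{\f_s\}_{s\ge0},\p)$-local martingale, and standard moment estimates for $X^{\bm\pi'}$ under Assumption \ref{ass:dynamic}, together with the growth conditions in Definition \ref{ass:admissible}(iii), upgrade it to a true martingale.

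For (iii), suppose \eqref{eq:martingale with q function2-off} is a martingale for some $\bm\pi'\in\bm\Pi$. Reading the Itô computation of (ii) in reverse, the finite-variation part of a continuous martingale must vanish, so $\int_t^s e^{-\beta u}\phi(u,X_u^{\bm\pi'},a_u^{\bm\pi'})\dd u\equiv0$, where $\phi(t,x,a):=\frac{\partial\hat J}{\partial t}(t,x)+H\big(t,x,a,\frac{\partial\hat J}{\partial x}(t,x),\frac{\partial^2\hat J}{\partial x^2}(t,x)\big)-\hat q(t,x,a)-\beta\hat J(t,x)$; thus $\phi(u,X_u^{\bm\pi'},a_u^{\bm\pi'})=0$ for Lebesgue-a.e.\ $u$, $\p$-a.s., for every initial pair $(t,x)$. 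To pass to a pointwise identity, for bounded continuous $g$ I would start from $\int_t^{t+\varepsilon}\E^{\p}\big[\phi(u,X_u^{\bm\pi'},a_u^{\bm\pi'})g(a_u^{\bm\pi'})\big]\dd u=0$, divide by $\varepsilon$, and let $\varepsilon\downarrow0$, using path-continuity of $X^{\bm\pi'}$ at $u=t$, continuity of $\phi$, and dominated convergence (via the moment bounds), to obtain $\int_{\mathcal A}\phi(t,x,a)g(a)\bm\pi'(a|t,x)\dd a=0$; since $\operatorname{supp}\bm\pi'(\cdot|t,x)=\mathcal A$, this forces $\phi(t,x,a)=0$ for all $(t,x,a)$. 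Integrating $\phi(t,x,\cdot)\equiv0$ against $\bm\pi(a|t,x)\dd a$ and substituting the second identity of \eqref{eq:q hjb2} shows that $\hat J$ solves \eqref{eq:explore fk} for $\bm\pi$ with terminal value $\hat J(T,\cdot)=h$; as $\hat J$ has polynomial growth, uniqueness gives $\hat J=J(\cdot,\cdot;\bm\pi)$, whence $\phi\equiv0$ and \eqref{eq:q rate} give $\hat q=q(\cdot,\cdot,\cdot;\bm\pi)$. For the ``moreover'' clause, since $q(t,x,a;\bm\pi)$ and $H\big(t,x,a,\frac{\partial J}{\partial x}(t,x;\bm\pi),\frac{\partial^2 J}{\partial x^2}(t,x;\bm\pi)\big)$ differ only by $a$-independent terms, the hypothesis that $\bm\pi$ is the Gibbs density of $\frac1\gamma\hat q$ says precisely that $\bm\pi$ is a fixed point of the map $\mathcal I$ in Theorem \ref{lemma:policy improvement hjb}, so that theorem identifies $\bm\pi$ as optimal and $\hat J=J(\cdot,\cdot;\bm\pi)=J^*$.

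I expect the localization step in (iii) --- extracting the pointwise identity $\phi\equiv0$ on all of $[0,T]\times\mathbb R^d\times\mathcal A$ from the a.e.-in-time vanishing of $\phi$ along a single controlled trajectory --- to be the main obstacle. This is precisely where the full-support requirement $\operatorname{supp}\bm\pi'(\cdot|t,x)=\mathcal A$ and the continuity/moment conditions of Definition \ref{ass:admissible} are essential, and one must argue carefully that the shrinking-window conditional expectations converge; the remainder is routine Itô calculus together with the PDE uniqueness cited after \eqref{eq:explore fk}.
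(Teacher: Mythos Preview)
Your proposal is correct and follows essentially the same architecture as the paper's proof: apply It\^o's formula to $e^{-\beta s}\hat J(s,X_s)$, identify the drift as $e^{-\beta s}\phi$ with $\phi(t,x,a)=\frac{\partial\hat J}{\partial t}+H-\hat q-\beta\hat J$, argue that the drift must vanish pointwise on $[0,T]\times\mathbb R^d\times\mathcal A$, feed this into the consistency constraint \eqref{eq:q hjb2} to recover the Feynman--Kac PDE \eqref{eq:explore fk}, and invoke its uniqueness; the ``moreover'' clause via the fixed-point characterization in Theorem \ref{lemma:policy improvement hjb} is identical.

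The only substantive difference is the mechanism for the pointwise extraction of $\phi\equiv0$. The paper (via the proof of Theorem \ref{thm:q rate function property}) argues by contradiction: assume $\phi(t^*,x^*,a^*)\neq0$, localize with a stopping time to a neighborhood of $(t^*,x^*)$, use Lebesgue differentiation to get $\phi=0$ a.e.\ along the path, and derive a contradiction from the positive $\bm\pi'$-mass of a neighborhood of $a^*$. Your shrinking-window approach---average $\E^{\p}[\phi\cdot g(a_u)]$ over $[t,t+\varepsilon]$, integrate out the action first to get $\E[F(u,X_u^{\bm\pi'})]$ with $F(u,y)=\int_{\mathcal A}\phi(u,y,a)g(a)\bm\pi'(a|u,y)\dd a$, and pass to the limit using path-continuity at the \emph{deterministic} initial state---achieves the same conclusion and is arguably cleaner, since it avoids the stopping-time construction. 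Both rely on exactly the ingredients you identify: continuity of $\phi$, full support of $\bm\pi'$, and moment control from Assumption \ref{ass:dynamic} and Definition \ref{ass:admissible}(iii). The paper also proves part (i) directly (writing $\hat J$ as a conditional expectation and substituting the constraint), whereas you correctly observe that (i) is subsumed by (ii) and (iii) with $\bm\pi'=\bm\pi$; either route is fine.
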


Theorem \ref{thm:qv learn} characterizes the value function and the q-function in terms of a {\it single} martingale condition, in each of the on-policy and off-policy settings, which will be the foundation for designing learning algorithms in this paper. The conditions in  \eqref{eq:q hjb2} ensure  that  $\hat J$ corresponds to the correct terminal payoff function and $\hat{q}$  corresponds to the soft q-function with the entropy regularizer. In particular, if the policy is the Gibbs measure generated by   $\frac{1}{\gamma}\hat{q}$, then   $\hat J$ and $\hat{q}$ are respectively the value function and the q-function under the optimal policy. Finally,  note the (subtle) difference between Theorem \ref{thm:qv learn} and Theorem \ref{thm:q rate function property}. There may be multiple $(\hat J,\hat q)$ pairs satisfying the martingale conditions of \eqref{eq:martingale with q function2} or \eqref{eq:martingale with q function2-off} in Theorem \ref{thm:qv learn}; so the conditions in \eqref{eq:q hjb2} are {\it required} for identifying the correct value function and q-function.\footnote{If the terminal condition $\hat{J}(T,x) = h(x)$ is changed to $\hat{J}(T,x) = \hat h(x)$, then $(\hat{J},\hat{q})$ satisfying the martingale condition \eqref{eq:martingale with q function2} or \eqref{eq:martingale with q function2-off}  would be the value function and q-function corresponding to a different learning task with the pair of running and terminal reward functions being $(r,\hat{h})$. If the normalization condition on $\hat{q}$ is missing, say $\int_{\mathcal{A}} \big[ \hat{q}(t,x,a) -\gamma \log{\bm\pi}(a|t,x) \big] {\bm\pi}(a|t,x)\dd a  = \hat r(t,x)$ holds instead, then $(\hat{J},\hat{q})$ would be the value function and q-function corresponding to the learning task with the running and terminal reward functions $(r - \hat{r},h)$.} By contrast, these conditions are {\it implied} if the correct value function has already been known and given, as in Theorem \ref{thm:q rate function property}.

\subsection{Optimal q-function}

We now focus on  the q-function associated with the {\it optimal} policy $\bm\pi^*$ in \eqref{eq:gibbs}. Based on Definition \ref{def:q rate function}, it is defined as
\begin{equation}
\label{eq:optimal q}
q^*(t,x,a) = \frac{\partial J^*}{\partial t}(t,x) + H\left( t,x,a,\frac{\partial J^*}{\partial x}(t,x), \frac{\partial^2 J^*}{\partial x^2}(t,x)\right)  -\beta J^*(t,x),
\end{equation}
where $J^*$ is the optimal value function satisfying the exploratory HJB equation \eqref{eq:explore hjb simplified}.

\begin{proposition}\label{qstar1}
We have
\begin{equation}
\label{eq:optimal q hjb}
\int_{\mathcal A} \exp\{ \frac{1}{\gamma} q^*(t,x,a) \}\dd a = 1,
\end{equation}
for all $(t,x)$, and consequently
the optimal policy $\bm\pi^*$ is  
\begin{equation}
\label{eq:optimal pi and q}
\bm\pi^*(a|t,x) = \exp\{ \frac{1}{\gamma} q^*(t,x,a) \} .
\end{equation}
\end{proposition}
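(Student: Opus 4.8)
The plan is to obtain \eqref{eq:optimal q hjb} as a direct algebraic consequence of the simplified exploratory HJB equation \eqref{eq:explore hjb simplified}, and then derive \eqref{eq:optimal pi and q} by combining that identity with the Gibbs form \eqref{eq:gibbs} of the optimal policy. First I would substitute the definition \eqref{eq:optimal q} of $q^*$ into the integral $\int_{\mathcal A}\exp\{\frac1\gamma q^*(t,x,a)\}\dd a$. Since $\frac{\partial J^*}{\partial t}(t,x)$ and $-\beta J^*(t,x)$ do not depend on $a$, they pull out of the integral, leaving
\[
\exp\!\left\{\tfrac1\gamma\Big(\tfrac{\partial J^*}{\partial t}(t,x)-\beta J^*(t,x)\Big)\right\}\int_{\mathcal A}\exp\!\left\{\tfrac1\gamma H\Big(t,x,a,\tfrac{\partial J^*}{\partial x}(t,x),\tfrac{\partial^2 J^*}{\partial x^2}(t,x)\Big)\right\}\dd a .
\]

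Next I would rearrange \eqref{eq:explore hjb simplified} into the identity $\frac{\partial J^*}{\partial t}(t,x)-\beta J^*(t,x)=-\gamma\log\!\big[\int_{\mathcal A}\exp\{\frac1\gamma H(t,x,a,\ldots)\}\dd a\big]$. Exponentiating shows the prefactor in the display above is precisely the reciprocal of the remaining integral, so their product equals $1$, which is \eqref{eq:optimal q hjb}. For the consequence \eqref{eq:optimal pi and q}, I would feed the same identity into \eqref{eq:gibbs}: the normalizing denominator there is exactly $\exp\{-\frac1\gamma(\frac{\partial J^*}{\partial t}(t,x)-\beta J^*(t,x))\}$, so dividing by it and collecting exponents converts $\frac1\gamma H(t,x,a,\ldots)$ into $\frac1\gamma q^*(t,x,a)$ by the definition \eqref{eq:optimal q}. (Equivalently, one notes that $a\mapsto\exp\{\frac1\gamma q^*(t,x,a)\}$ is proportional to $a\mapsto\exp\{\frac1\gamma H(t,x,a,\ldots)\}$ and, by \eqref{eq:optimal q hjb}, already integrates to one, hence must coincide with the Gibbs density $\bm\pi^*(\cdot|t,x)$.)

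I do not anticipate a genuine obstacle here; the statement is essentially a rewriting of the simplified HJB equation in terms of $q^*$. The only points that need a word of care are that $J^*$ be regular enough — e.g.\ $C^{1,2}$ on $[0,T)\times\mathbb{R}^d$ — for $q^*$ in \eqref{eq:optimal q} to be pointwise well-defined, and that \eqref{eq:explore hjb simplified} hold in the classical (not merely viscosity) sense on the relevant region; both are guaranteed by the well-posedness results of \cite{tang2021exploratory} invoked earlier in the excerpt.
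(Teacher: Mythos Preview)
Your proposal is correct and follows essentially the same approach as the paper: both arguments substitute the definition \eqref{eq:optimal q} of $q^*$ and invoke the simplified exploratory HJB equation \eqref{eq:explore hjb simplified} to obtain \eqref{eq:optimal q hjb}, then deduce \eqref{eq:optimal pi and q} from the Gibbs form \eqref{eq:gibbs}. The paper's version is slightly terser (it divides \eqref{eq:optimal q} by $\gamma$, exponentiates, integrates, and compares directly with \eqref{eq:explore hjb simplified}), but the underlying computation is identical to yours.
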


As seen from its proof (in Appendix C), Proposition \ref{qstar1} is an immediate  consequence of the exploratory HJB equation \eqref{eq:explore hjb simplified}. Indeed, \eqref{eq:optimal q hjb} is equivalent to \eqref{eq:explore hjb simplified} when viewed as an equation for $J^*$ due to \eqref{eq:optimal q}. However, in terms of  $q^*$, satisfying \eqref{eq:optimal q hjb} is only a {\it necessary} condition or a minimum requirement for being the optimal q-function, and by no means sufficient, in the current RL setting. This is because in the absence of knowledge about the primitives $b,\sigma,r,h$, we are unable to determine $J^*$ from $q^*$ by their relationship \eqref{eq:optimal q} which can be viewed as a PDE for $J^*$.
To fully characterize $J^*$ as well as $q^*$, we will have to resort to martingale condition, as stipulated in the following result.

\begin{theorem}
\label{thm:q optimal}
Let a function $\widehat{J^*}\in C^{1,2}\big([0,T)\times \mathbb{R}^d \big) \cap C\big([0,T]\times \mathbb{R}^d \big)$ with polynomial growth and a continuous function $\widehat{q^*}:[0,T]\times \mathbb{R}^d\times \mathcal{A}\to \mathbb{R}$
be given satisfying
\begin{equation}
\label{eq:q hjb2 optimal}
\widehat{J^*}(T,x) = h(x),\;\;\; \int_{\mathcal{A}} \exp\{ \frac{1}{\gamma} \widehat{q^*}(t,x,a) \} \dd a =1,\;\;\forall (t,x)\in[0,T]\times\mathbb{R}^d.
\end{equation}
Then 
\begin{enumerate}[(i)]
\item If $\widehat{J^*}$ and $\widehat{q^*}$ are respectively the optimal value function and the optimal q-function, then for any $\bm\pi\in \bm\Pi$ and all $(t,x)\in[0,T]\times\mathbb{R}^d$, the following process
\begin{equation}
\label{eq:martingale with q function2 optimal}
e^{-\beta s} \widehat{J^*}(s,{X}_s^{\bm\pi}) + \int_t^s e^{-\beta u} [r(u,{X}_{u}^{\bm\pi},a^{\bm\pi}_{u}) - \widehat{q^*}(u,{X}_{u}^{\bm\pi},a^{\bm\pi}_{u}) ]\dd u
\end{equation}
is an $(\{\f_s\}_{s\geq 0},\p)$-martingale, where $\{{X}_s^{\bm\pi}, t\leq s\leq T\}$ is the solution to (\ref{eq:model pi}) under $\bm\pi$ with ${X}_t^{\bm\pi}=x$.
Moreover, in this case, $\widehat{\bm\pi^*}(a|t,x) = \exp\{  \frac{1}{\gamma}\widehat{q^*}(t,x,a) \}$ is the optimal policy.
\item If there exists one $\bm\pi\in \bm\Pi$ such that for all $(t,x)\in[0,T]\times\mathbb{R}^d$,  \eqref{eq:martingale with q function2 optimal} is an $(\{\f_s\}_{s\geq 0},\p)$-martingale, then $\widehat{J^*}$ and $\widehat{q^*}$ are respectively the optimal value function and the optimal q-function. 
\end{enumerate}
\end{theorem}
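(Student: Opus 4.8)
The plan is to reduce the whole statement to an application of Itô's formula along the state process of an arbitrary admissible policy, combined with the exploratory HJB equation \eqref{eq:explore hjb simplified} and its well-posedness; the structure closely parallels the proof of Theorem \ref{thm:qv learn}, with the normalization $\int_{\mathcal A}\exp\{\frac1\gamma\widehat{q^*}(t,x,a)\}\,\dd a=1$ playing the role that the condition $\int_{\mathcal A}[\hat q-\gamma\log\bm\pi]\bm\pi\,\dd a=0$ played there.

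\emph{Part (i).} Assume $\widehat{J^*}=J^*$ and $\widehat{q^*}=q^*$. Fix $\bm\pi\in\bm\Pi$, $(t,x)$, and the associated state process $X^{\bm\pi}$ with $X^{\bm\pi}_t=x$. Applying Itô's formula to $s\mapsto e^{-\beta s}J^*(s,X^{\bm\pi}_s)$, the drift term is $e^{-\beta s}\big[\partial_t J^*-\beta J^*+b\circ\partial_x J^*+\tfrac12\sigma\sigma^\top\circ\partial_{xx}J^*\big](s,X^{\bm\pi}_s,a^{\bm\pi}_s)$, which by the definition \eqref{eq:H} of $H$ equals $e^{-\beta s}\big[\partial_t J^*-\beta J^*+H(s,X^{\bm\pi}_s,a^{\bm\pi}_s,\partial_x J^*,\partial_{xx}J^*)-r(s,X^{\bm\pi}_s,a^{\bm\pi}_s)\big]$, and by \eqref{eq:optimal q} this is exactly $e^{-\beta s}\big[q^*(s,X^{\bm\pi}_s,a^{\bm\pi}_s)-r(s,X^{\bm\pi}_s,a^{\bm\pi}_s)\big]$. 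Hence the process \eqref{eq:martingale with q function2 optimal} equals $J^*(t,x)$ plus a stochastic integral against $\dd W$, so it is a local martingale; it is then a true martingale after a standard localization/uniform-integrability argument using the polynomial growth of $J^*$ and $r$ (Assumption \ref{ass:dynamic}) and the moment estimates for $X^{\bm\pi}$ guaranteed by admissibility, just as in the proof of Theorem \ref{thm:qv learn}. The ``moreover'' clause is immediate: with $\widehat{q^*}=q^*$, Proposition \ref{qstar1}, equation \eqref{eq:optimal pi and q}, says precisely that $\exp\{\frac1\gamma\widehat{q^*}(t,x,a)\}$ is the optimal policy.

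\emph{Part (ii).} Suppose \eqref{eq:martingale with q function2 optimal} is an $(\{\f_s\},\p)$-martingale for some $\bm\pi\in\bm\Pi$ and every starting point $(t,x)$. Running the same Itô computation on $e^{-\beta s}\widehat{J^*}(s,X^{\bm\pi}_s)$ (legitimate since $\widehat{J^*}\in C^{1,2}$) shows that \eqref{eq:martingale with q function2 optimal} is the sum of a local martingale and the continuous finite-variation term
\[ \int_t^s e^{-\beta u}\Big[\partial_t\widehat{J^*}-\beta\widehat{J^*}+H\big(u,X^{\bm\pi}_u,a^{\bm\pi}_u,\partial_x\widehat{J^*},\partial_{xx}\widehat{J^*}\big)-\widehat{q^*}(u,X^{\bm\pi}_u,a^{\bm\pi}_u)\Big]\dd u. \]
A continuous finite-variation local martingale is constant, so this drift vanishes $\dd u\otimes\dd\p$-a.e.; since $\bm\pi(\cdot|u,X^{\bm\pi}_u)$ has full support $\mathcal A$, $X^{\bm\pi}_u\to x$ as $u\downarrow t$, and all data are continuous, letting $(t,x)$ range over $[0,T)\times\mathbb R^d$ yields the pointwise identity
\[ \widehat{q^*}(t,x,a)=\partial_t\widehat{J^*}(t,x)+H\big(t,x,a,\partial_x\widehat{J^*}(t,x),\partial_{xx}\widehat{J^*}(t,x)\big)-\beta\widehat{J^*}(t,x)\quad\text{for all }(t,x,a). \]
Inserting this into the normalization in \eqref{eq:q hjb2 optimal} and taking $\gamma\log$ of both sides gives
\[ \partial_t\widehat{J^*}(t,x)-\beta\widehat{J^*}(t,x)+\gamma\log\!\int_{\mathcal A}\exp\Big\{\tfrac1\gamma H\big(t,x,a,\partial_x\widehat{J^*},\partial_{xx}\widehat{J^*}\big)\Big\}\dd a=0, \]
which together with $\widehat{J^*}(T,x)=h(x)$ is exactly the exploratory HJB equation \eqref{eq:explore hjb simplified}. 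By its well-posedness (uniqueness among polynomially growing solutions) established in \cite{tang2021exploratory}, $\widehat{J^*}=J^*$, and then the displayed pointwise identity is \eqref{eq:optimal q}, so $\widehat{q^*}=q^*$.

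The routine-but-delicate step is the promotion of the local martingale in (i) to a genuine martingale, which requires the $L^p$ bounds on $\sup_s|X^{\bm\pi}_s|$ and the resulting control of $\sup_s|J^*(s,X^{\bm\pi}_s)|$ and $\int_t^T e^{-\beta u}|r(u,X^{\bm\pi}_u,a^{\bm\pi}_u)|\dd u$ from Assumption \ref{ass:dynamic} and Definition \ref{ass:admissible}; the only genuinely new ingredient relative to Theorem \ref{thm:qv learn} is the appeal to uniqueness of the exploratory HJB solution, so the main point to verify carefully is that the cited well-posedness applies here — which it does, since $\widehat{J^*}$ is a classical, polynomially growing solution.
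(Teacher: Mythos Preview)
Your proof is correct, and Part (i) is essentially identical to the paper's argument (It\^o on $e^{-\beta s}J^*(s,X^{\bm\pi}_s)$, drift cancels by \eqref{eq:optimal q}).

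For Part (ii) you take a genuinely different route from the paper. The paper observes that the normalization $\int_{\mathcal A}\exp\{\frac1\gamma\widehat{q^*}\}\,\dd a=1$ makes $\widehat{\bm\pi^*}:=\exp\{\frac1\gamma\widehat{q^*}\}$ a probability density with $\widehat{q^*}=\gamma\log\widehat{\bm\pi^*}$, so the pair $(\widehat{J^*},\widehat{q^*})$ automatically satisfies the constraint \eqref{eq:q hjb2} \emph{with respect to this particular policy} $\widehat{\bm\pi^*}$; it then invokes Theorem~\ref{thm:qv learn}(iii) as a black box to conclude $\widehat{J^*}=J(\cdot,\cdot;\widehat{\bm\pi^*})$ and $\widehat{q^*}=q(\cdot,\cdot,\cdot;\widehat{\bm\pi^*})$, and finishes by noting $\mathcal I\widehat{\bm\pi^*}=\widehat{\bm\pi^*}$ so Theorem~\ref{lemma:policy improvement hjb} gives optimality. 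You instead unpack the It\^o/finite-variation argument directly (which is what lives inside the proof of Theorem~\ref{thm:qv learn}(iii)) to get the pointwise identity for $\widehat{q^*}$, feed it into the normalization to land on the exploratory HJB \eqref{eq:explore hjb simplified} for $\widehat{J^*}$, and appeal to its well-posedness from \cite{tang2021exploratory}. Your route is more self-contained and makes the role of \eqref{eq:explore hjb simplified} explicit; the paper's route is shorter because it recycles Theorems~\ref{lemma:policy improvement hjb} and~\ref{thm:qv learn}, and it only needs uniqueness for the \emph{linear} Feynman--Kac equation \eqref{eq:explore fk} (via Theorem~\ref{thm:qv learn}(iii)) rather than for the fully nonlinear HJB. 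Both are valid; just make sure the uniqueness you cite in \cite{tang2021exploratory} indeed covers classical polynomially growing solutions of \eqref{eq:explore hjb simplified}.
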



Theorem \ref{thm:q optimal} lays a foundation for off-policy learning without having to go through iterative policy improvement.
We emphasize here that to learn the optimal policy and the optimal q-function, it is infeasible to conduct on-policy learning because $\bm\pi^*$ is unknown and hence the constraints \eqref{eq:q hjb2} can not be checked nor enforced. The new constraints in  \eqref{eq:q hjb2 optimal} no longer depend on policies and instead give basic requirements for the candidate value function and q-function. Maintaining the martingality of \eqref{eq:martingale with q function2 optimal} under any admissible policy is equivalent to the optimality of the candidate value function and q-function. Moreover, Theorem \ref{thm:q optimal}-(ii) suggests that we do not need to use data generated by {\it all} admissible policies. Instead, those generated by {\it one} -- any one -- policy  should suffice; e.g. we could take $\bm\pi(a|t,x) = \exp\{  \frac{1}{\gamma}\widehat{q^*}(t,x,a) \}$ as the behavior policy.

Finally, let us remark that Theorems \ref{thm:qv learn} and \ref{thm:q optimal} are independent of each other in the sense that one does not imply the other. One may design different algorithms from them depending on different cases and needs.

\section{q-Learning Algorithms When Normalizing Constant Is Available}
\label{subsec:qv learn}

This section and the next discuss algorithm design based on the theoretical results in the previous section. 
Theorems \ref{thm:qv learn} and \ref{thm:q optimal} provide the theoretical foundation for designing both on-policy and off-policy algorithms to {\it simultaneously} learn and update the value function (critic) and the  policy (actor) with proper function approximations of $J$ and $q$. In this section, we present these actor--critic, q-learning algorithms when the normalizing constant involved in the Gibbs measure is available or computable.\footnote{Conventional Q-learning or SARSA is often said to be value-based in which the Q-function is a state--action value function serving as a critic. In q-learning, as Theorem \ref{thm:q rate function property} stipulates, the q-function is uniquely  determined by the value function $J$  as its compensator. Hence, $q$ plays a dual role here: it is both a critic (as it can be derived endogenously from the value function) and an actor (as it derives an  improved policy). This is why we call the q-learning algorithms actor--critic, if with a slight abuse of the term because the ``actor'' here is not purely exogenous as with, say, policy gradient.}

\subsection{q-Learning  algorithms}
\label{subsec:AC_algorithms}

Given a policy $\bm\pi$, to learn its associated value function and q-function, we denote by $J^{\theta}$ and ${q}^{\psi}$  the parameterized function approximators that satisfy the two constraints:
\begin{equation}
\label{eq:constraints qv}
{J}^{\theta}(T,x) = h(x),\ \int_{\mathcal{A}} \big[ q^{\psi}(t,x,a) -\gamma \log\bm\pi(a|t,x) \big] \bm\pi(a|t,x)\dd a =0,
\end{equation}
for all $\theta\in \Theta\subset\mathbb{R}^{L_{\theta}},\psi\in\Psi\subset\mathbb{R}^{L_{\psi}}$. Then Theorem \ref{lemma:policy improvement hjb} suggests that the policy can be improved by
\begin{equation}
\label{eq:pipsi} \bm\pi^{\psi}(a|t,x) = \frac{\exp\{  \frac{1}{\gamma}q^{\psi}(t,x,a) \}}{\int_{{\cal A}} \exp\{  \frac{1}{\gamma}q^{\psi}(t,x,a) \} \dd a }, \end{equation}
while by assumption the normalizing constant $\int_{{\cal A}} \exp\{  \frac{1}{\gamma}q^{\psi}(t,x,a) \} \dd a $ for any $\psi\in\Psi$ can be explicitly computed. Next, we can  use the data generated by $\bm\pi^{\psi}$ and apply the martingale condition in Theorem \ref{thm:qv learn} to learn its value function and q-function, leading to an iterative actor--critic algorithm.

Alternatively, we may choose to directly learn the optimal value function and q-function based on Theorem \ref{thm:q optimal}. In this case, the approximators $J^{\theta}$ and ${q}^{\psi}$ should satisfy
\begin{equation}
\label{eq:constraints qv optimal}
{J}^{\theta}(T,x) = h(x),\ \int_{\mathcal{A}} \exp\{  \frac{1}{\gamma}q^{\psi}(t,x,a) \}  \dd a = 1.
\end{equation}

Note that if the policy $\bm\pi$ in \eqref{eq:constraints qv} is taken in the form \eqref{eq:pipsi}, then the second equation in  \eqref{eq:constraints qv optimal} is automatically satisfied. Henceforth in this section, we focus  on deriving algorithms based on Theorem \ref{thm:q optimal} and hence always impose the constraints \eqref{eq:constraints qv optimal}. In this case, any approximator of the q-function directly gives that of the policy via $\bm\pi^{\psi}(a|t,x) = \exp\{  \frac{1}{\gamma}q^{\psi}(t,x,a) \}$;  thereby we avoid learning the q-function and the policy separately.
Moreover, making use of  \eqref{eq:constraints qv optimal} typically results in more special parametric form of the q-function approximator $q^{\psi}$, potentially facilitating more efficient learning. Here is an example.

\begin{eg}
\label{eg:linear quadratic}
{\rm
When the system dynamic is linear in action $a$ and reward is quadratic in $a$, the Hamiltonian, and hence the q-function, is quadratic in $a$. So we can parameterize
\[ q^{\psi}(t,x,a) = -\frac{1}{2}{q}_2^{\psi}(t,x) \circ \left( a -{q}_1^{\psi}(t,x) \right)\left( a -{q}_1^{\psi}(t,x) \right)^\top + {q}_0^{\psi}(t,x),\;\;(t,x,a)\in [0,T]\times \mathbb{R}^d \times \mathbb{R}^m,\]
with ${q}_1^{\psi}(t,x) \in \mathbb{R}^m$, ${q}_2^{\psi}(t,x)\in \mathbb{S}^m_{++}$ and ${q}_0^{\psi}(t,x)\in \mathbb{R}$. The corresponding policy is a multivariate normal distribution for which the normalizing constant can be computed:
\[ \bm\pi^\psi(\cdot|t,x) = \mathcal{N}\left( {q}_1^{\psi}(t,x), \gamma \left( {q}_2^{\psi}(t,x) \right)^{-1}  \right),  \]
with its entropy value being $-\frac{1}{2}\log\det{q}_2^{\psi}(t,x) + \frac{m}{2}\log2\pi e\gamma$. The second constraint  on $q^{\psi}$ in \eqref{eq:constraints qv optimal} then yields $q^{\psi}_0(t,x) = \frac{\gamma}{2}\log\left(\det{q}_2^{\psi}(t,x)\right) - \frac{m\gamma}{2}\log2\pi $. This in turn gives rise to a more specific parametric form
\[ q^{\psi}(t,x,a) = -\frac{1}{2}{q}_2^{\psi}(t,x) \circ \left( a -{q}_1^{\psi}(t,x) \right)\left( a -{q}_1^{\psi}(t,x) \right)^\top + \frac{\gamma}{2}\log\left(\det{q}_2^{\psi}(t,x)\right) - \frac{m\gamma}{2}\log2\pi . \]}
\end{eg}

The next step in algorithm design is to update $(\theta,\psi)$ by enforcing the martingale condition stipulated in Theorem \ref{thm:q optimal} and applying the techniques developed  in \cite{jia2021policy}. A number of algorithms can be developed based on  two types of objectives: to minimize the martingale loss function or to satisfy the martingale orthogonality  conditions. 
The latter calls for solving a system of equations for which there are further two different techniques:  applying stochastic approximation as in the classical temporal--difference (TD) algorithms or minimizing a quadratic function in lieu of the system of equations as in the generalized methods of moment (GMM). For reader's
convenience, we summarize these methods in the q-learning context below.
\begin{itemize}
\item Minimize the martingale loss function:
{\small \[\frac{1}{2} \E^{\p}\left[ \int_0^T  \left[e^{-\beta (T-t)} h(X_T^{\bm\pi^{\psi}}) -  J^{\theta}(t,X_t^{\bm\pi^{\psi}})  + \int_t^T e^{-\beta (s-t)} [r(s,X_s^{\bm\pi^{\psi}},a_s^{\bm\pi^{\psi}}) -q^{\psi}(s,X_s^{\bm\pi^{\psi}},a_s^{\bm\pi^{\psi}})] \dd s  \right]^2 \dd t \right].  \]}
This method is intrinsically  offline because the loss function involves the whole horizon $[0,T]$; however we are free to choose optimization algorithms to update the parameters $(\theta,\psi)$. For example, we can apply  stochastic gradient decent to update
\[\begin{aligned}
& \theta \leftarrow \theta + \alpha_{\theta} \int_0^T  \frac{\partial J^{\theta}}{\partial \theta}(t,X_t^{\bm\pi^{\psi}}) G_{t:T} \dd t \\
& \psi \leftarrow \psi + \alpha_{\psi}\int_0^T \int_t^T e^{-\beta(s-t)} \frac{\partial q^{\psi}}{\partial \psi}(s,X_s^{\bm\pi^{\psi}},a_s^{\bm\pi^{\psi}}) \dd s  G_{t:T} \dd t ,
\end{aligned}    \]
where $G_{t:T} = e^{-\beta (T-t)} h(X_T^{\bm\pi^{\psi}}) -  J^{\theta}(t,X_t^{\bm\pi^{\psi}}) + \int_t^T e^{-\beta (s-t)} [r(s,X_s^{\bm\pi^{\psi}},a_s^{\bm\pi^{\psi}}) -q^{\psi}(s,X_s^{\bm\pi^{\psi}},a_s^{\bm\pi^{\psi}})] \dd s$, and $\alpha_{\theta}$ and $\alpha_{\psi}$ are the learning rates. We present Algorithm \ref{algo:offline episodic ml} based on this updating rule. Note that this algorithm is analogous  to the classical gradient Monte Carlo method or TD(1) for MDPs \citep{sutton2011reinforcement} because  full sample trajectories are used to compute gradients.
\item Choose two different test functions $\xi_t$ and $\zeta_t$ that are both $\f_t$-adapted vector-valued stochastic processes, and consider the following system of equations in $(\theta,\psi)$:
\[ \E^{\p}\left[ \int_0^T  \xi_t\left[\dd J^{\theta}(t,X_t^{\bm\pi^{\psi}}) + r(t,X_t^{\bm\pi^{\psi}},a_t^{\bm\pi^{\psi}})\dd t -q^{\psi}(t,X_t^{\bm\pi^{\psi}},a_t^{\bm\pi^{\psi}})\dd t - \beta J^{\theta}(t,X_t^{\bm\pi^{\psi}})\dd t\right]   \right] = 0, \]
and
\[ \E^{\p}\left[ \int_0^T  \zeta_t\left[\dd J^{\theta}(t,X_t^{\bm\pi^{\psi}}) + r(t,X_t^{\bm\pi^{\psi}},a_t^{\bm\pi^{\psi}})\dd t -q^{\psi}(t,X_t^{\bm\pi^{\psi}},a_t^{\bm\pi^{\psi}})\dd t - \beta J^{\theta}(t,X_t^{\bm\pi^{\psi}})\dd t\right]   \right] = 0. \]

To solve these equations iteratively, we use stochastic approximation to update $(\theta,\psi)$ either offline by\footnote{Here, when implementing in a computer program, $\dd J^{\theta}$ is the timestep-wise difference in $J^{\theta}$ when time is discretized, or indeed it is the temporal-difference (TD) of the value function  approximator $J^{\theta}$.}
\[\begin{aligned}
&\theta \leftarrow \theta + \alpha_{\theta} \int_0^T  \xi_t\left[\dd J^{\theta}(t,X_t^{\bm\pi^{\psi}}) + r(t,X_t^{\bm\pi^{\psi}},a_t^{\bm\pi^{\psi}})\dd t -q^{\psi}(t,X_t^{\bm\pi^{\psi}},a_t^{\bm\pi^{\psi}})\dd t - \beta J^{\theta}(t,X_t^{\bm\pi^{\psi}})\dd t\right],   \\
&\psi \leftarrow \psi + \alpha_{\psi} \int_0^T  \zeta_t\left[\dd J^{\theta}(t,X_t^{\bm\pi^{\psi}}) + r(t,X_t^{\bm\pi^{\psi}},a_t^{\bm\pi^{\psi}})\dd t -q^{\psi}(t,X_t^{\bm\pi^{\psi}},a_t^{\bm\pi^{\psi}})\dd t - \beta J^{\theta}(t,X_t^{\bm\pi^{\psi}})\dd t\right] ,
\end{aligned}\]
or online by
\[\begin{aligned}
&\theta \leftarrow \theta + \alpha_{\theta}  \xi_t\left[\dd J^{\theta}(t,X_t^{\bm\pi^{\psi}}) + r(t,X_t^{\bm\pi^{\psi}},a_t^{\bm\pi^{\psi}})\dd t -q^{\psi}(t,X_t^{\bm\pi^{\psi}},a_t^{\bm\pi^{\psi}})\dd t - \beta J^{\theta}(t,X_t^{\bm\pi^{\psi}}) \dd t\right],   \\
&\psi \leftarrow \psi + \alpha_{\psi} \zeta_t\left[\dd J^{\theta}(t,X_t^{\bm\pi^{\psi}}) + r(t,X_t^{\bm\pi^{\psi}},a_t^{\bm\pi^{\psi}})\dd t-q^{\psi}(t,X_t^{\bm\pi^{\psi}},a_t^{\bm\pi^{\psi}})\dd t - \beta J^{\theta}(t,X_t^{\bm\pi^{\psi}}) \dd t\right] .
\end{aligned}\]
Typical choices of test functions are $\xi_t = \frac{\partial J^{\theta}}{\partial \theta}(t,X_t^{\bm\pi^{\psi}})$, $\zeta_t = \frac{\partial q^{\psi}}{\partial \psi}(t,X_t^{\bm\pi^{\psi}},a_t^{\bm\pi^{\psi}})$ yielding algorithms  that would be closest to TD-style policy evaluation and Q-learning (e.g., in \citealt{tallec2019making}) and at the same time belong to the more general semi-gradient methods \citep{sutton2011reinforcement}. We present the online and offline q-learning algorithms, Algorithms \ref{algo:offline episodic} and \ref{algo:online incremental} respectively, based on these test functions. We must, however,  stress that testing against these two specific functions is theoretically {\it not} sufficient  to guarantee the martingale condition. Using them with a rich, large-dimensional parametric
family for $J$ and $q$ makes approximation to the martingale condition finer and finer. Moreover, the corresponding stochastic approximation algorithms are not guaranteed to converge in general, and the test functions have to be carefully selected \citep{jia2021policy} depending on $(J,q)$. Some of the different test functions  leading to different types of algorithms in the context of policy evaluation are discussed in \cite{jia2021policy}.

\item Choose the same types of  test functions $\xi_t$ and $\zeta_t$ as above but now minimize the GMM objective functions:
\[ \begin{aligned}
& \E^{\p}\left[ \int_0^T  \xi_t\left[\dd J^{\theta}(t,X_t^{\bm\pi^{\psi}}) + r(t,X_t^{\bm\pi^{\psi}},a_t^{\bm\pi^{\psi}})\dd t -q^{\psi}(t,X_t^{\bm\pi^{\psi}},a_t^{\bm\pi^{\psi}})\dd t - \beta J^{\theta}(t,X_t^{\bm\pi^{\psi}})\dd t\right]   \right]^\top \\
& A_{\theta} \E^{\p}\left[ \int_0^T  \xi_t\left[\dd J^{\theta}(t,X_t^{\bm\pi^{\psi}}) + r(t,X_t^{\bm\pi^{\psi}},a_t^{\bm\pi^{\psi}})\dd t -q^{\psi}(t,X_t^{\bm\pi^{\psi}},a_t^{\bm\pi^{\psi}})\dd t - \beta J^{\theta}(t,X_t^{\bm\pi^{\psi}})\dd t\right]   \right],
\end{aligned} \]
and
\[\begin{aligned}
& \E^{\p}\left[ \int_0^T  \zeta_t\left[\dd J^{\theta}(t,X_t^{\bm\pi^{\psi}}) + r(t,X_t^{\bm\pi^{\psi}},a_t^{\bm\pi^{\psi}})\dd t -q^{\psi}(t,X_t^{\bm\pi^{\psi}},a_t^{\bm\pi^{\psi}})\dd t - \beta J^{\theta}(t,X_t^{\bm\pi^{\psi}})\dd t\right]   \right]^\top \\
& A_{\psi}\E^{\p}\left[ \int_0^T  \zeta_t\left[\dd J^{\theta}(t,X_t^{\bm\pi^{\psi}}) + r(t,X_t^{\bm\pi^{\psi}},a_t^{\bm\pi^{\psi}})\dd t -q^{\psi}(t,X_t^{\bm\pi^{\psi}},a_t^{\bm\pi^{\psi}})\dd t - \beta J^{\theta}(t,X_t^{\bm\pi^{\psi}})\dd t\right]   \right],
\end{aligned}  \]
where $A_{\theta}\in \mathbb{S}^{L_{\theta}}_{++},A_{\psi}\in \mathbb{S}^{L_{\psi}}_{++}$. Typical choices of these matrices are $A_{\theta}
=I_{L_{\theta}}$ and $A_{\psi}=I_{L_{\psi}}$, or
$A_{\theta} = (\E^{\p}[\int_0^T \xi_t\xi_t^\top\dd t])^{-1}$ and $A_{\psi} = (\E^{\p}[\int_0^T \zeta_t\zeta_t^\top\dd t])^{-1}$. Again, we refer the reader to
\cite{jia2021policy} for discussions on these choices and the connection with the classical GTD algorithms and GMM method.


\end{itemize}

%
%

\begin{algorithm}[hbtp]
\caption{Offline--Episodic q-Learning ML Algorithm}
\textbf{Inputs}: initial state $x_0$,  horizon $T$, time step $\Delta t$, number of episodes $N$, number of mesh grids $K$, initial learning rates $\alpha_{\theta},\alpha_{\psi}$ and a learning rate schedule function $l(\cdot)$ (a function of the number of episodes), functional forms of parameterized  value function $J^{\theta}(\cdot,\cdot)$ and  q-function $q^{\psi}(\cdot,\cdot,\cdot)$ satisfying \eqref{eq:constraints qv optimal}, and temperature parameter $\gamma$.

\textbf{Required program (on-policy)}: environment simulator $(x',r) = \textit{Environment}_{\Delta t}(t,x,a)$ that takes current time--state pair $(t,x)$ and action $a$ as inputs and generates state $x'$ at time $t+\Delta t$ and  instantaneous reward $r$ at time $t$ as outputs. Policy $\bm\pi^{\psi}(a|t,x) = \exp\{  \frac{1}{\gamma}q^{\psi}(t,x,a) \}$.

\textbf{Required program (off-policy)}: observations $ \{a_{t_k}, r_{t_{k}}, x_{t_{k+1}}\}_{k = 0,\cdots, K-1}\cup \{ x_{t_K}, h(x_{t_K})\} = \textit{Observation}(\Delta t)$ including the observed actions, rewards, and state trajectories under the given behavior policy  at the sampling time grids with step size $\Delta t$.

\textbf{Learning procedure}:
\begin{algorithmic}
\STATE Initialize $\theta,\psi$.
\FOR{episode $j=1$ \TO $N$} \STATE{Initialize $k = 0$. Observe  initial state $x_0$ and store $x_{t_k} \leftarrow  x_0$.

\COMMENT{\textbf{On-policy case}

\WHILE{$k < K$} \STATE{
Generate action $a_{t_k}\sim \bm{\pi}^{\psi}(\cdot|t_k,x_{t_k})$.


Apply $a_{t_k}$ to  environment simulator $(x,r) = Environment_{\Delta t}(t_k, x_{t_k}, a_{t_k})$, and observe  new state $x$ and reward $r$ as output. Store $x_{t_{k+1}} \leftarrow x$ and $r_{t_k} \leftarrow r$.

Update $k \leftarrow k + 1$.
}
\ENDWHILE	

}

\COMMENT{\textbf{Off-policy case}
Obtain one observation $\{a_{t_k}, r_{t_{k}}, x_{t_{k+1}}\}_{k = 0,\cdots, K-1}\cup \{ x_{t_K}, h(x_{t_K})\} = \textit{Observation}(\Delta t)$.

}

For every $k=0,1,\cdots,K-1$, compute
\[ G_{t_k:T} = e^{-\beta (T - t_k)}h(x_{t_K}) - J^{\theta}(t_k,x_{t_k}) + \sum_{i=k}^{K-1}e^{-\beta (t_i - t_k)}[r_{t_i} - q^{\psi}(t_i,x_{t_i},a_{t_i})]\Delta t.
\]

Update $\theta$ and $\psi$ by
\[ \theta \leftarrow \theta + l(j)\alpha_{\theta} \sum_{k=0}^{K-1} \frac{\partial J^{\theta}}{\partial \theta}(t_k,x_{t_k}) G_{t_k:T} \Delta t .\]
\[ \psi \leftarrow \psi + l(j)\alpha_{\psi} \sum_{k=0}^{K-1}\left[ \sum_{i=k}^{K-1}\frac{\partial q^{\psi}}{\partial \psi}(t_k,x_{t_k},a_{t_k})\Delta t \right] G_{t_k:T} \Delta t .  \]

}
\ENDFOR
\end{algorithmic}
\label{algo:offline episodic ml}
\end{algorithm}

\begin{algorithm}[hbtp]
\caption{Offline--Episodic q-Learning Algorithm}
\textbf{Inputs}: initial state $x_0$,  horizon $T$, time step $\Delta t$, number of episodes $N$, number of mesh grids $K$, initial learning rates $\alpha_{\theta},\alpha_{\psi}$ and a learning rate schedule function $l(\cdot)$ (a function of the number of episodes), functional forms  of parameterized  value function $J^{\theta}(\cdot,\cdot)$ and  q-function $q^{\psi}(\cdot,\cdot,\cdot)$ satisfying \eqref{eq:constraints qv optimal}, functional forms of test functions $\bm{\xi}(t,x_{\cdot \wedge t},a_{\cdot \wedge t})$ and $\bm{\zeta}(t,x_{\cdot \wedge t},a_{\cdot \wedge t})$, and temperature parameter $\gamma$.

\textbf{Required program (on-policy)}: environment simulator $(x',r) = \textit{Environment}_{\Delta t}(t,x,a)$ that takes current time--state pair $(t,x)$ and action $a$ as inputs and generates state $x'$ at time $t+\Delta t$ and  instantaneous reward $r$ at time $t$ as outputs. Policy $\bm\pi^{\psi}(a|t,x) = \exp\{  \frac{1}{\gamma}q^{\psi}(t,x,a) \}$.

\textbf{Required program (off-policy)}: observations $ \{a_{t_k}, r_{t_{k}}, x_{t_{k+1}}\}_{k = 0,\cdots, K-1}\cup \{ x_{t_K}, h(x_{t_K})\} = \textit{Observation}(\Delta t)$ including the observed actions, rewards, and state trajectories under the given behavior policy  at the sampling time grids with step size $\Delta t$.

\textbf{Learning procedure}:
\begin{algorithmic}
\STATE Initialize $\theta,\psi$.
\FOR{episode $j=1$ \TO $N$} \STATE{Initialize $k = 0$. Observe  initial state $x_0$ and store $x_{t_k} \leftarrow  x_0$.

\COMMENT{\textbf{On-policy case}

\WHILE{$k < K$} \STATE{
Generate action $a_{t_k}\sim \bm{\pi}^{\psi}(\cdot|t_k,x_{t_k})$.

Apply $a_{t_k}$ to environment simulator $(x,r) = Environment_{\Delta t}(t_k, x_{t_k}, a_{t_k})$, and observe new state $x$ and reward $r$ as outputs. Store $x_{t_{k+1}} \leftarrow x$ and $r_{t_k} \leftarrow r$.

Update $k \leftarrow k + 1$.
}
\ENDWHILE	

}

\COMMENT{\textbf{Off-policy case}
Obtain one observation $\{a_{t_k}, r_{t_{k}}, x_{t_{k+1}}\}_{k = 0,\cdots, K-1}\cup \{ x_{t_K}, h(x_{t_K})\} = \textit{Observation}(\Delta t)$.

}

For every $k = 0,1,\cdots,K-1$, compute and store test functions $\xi_{t_k} = \bm{\xi}(t_k, x_{t_0},\cdots, x_{t_k},a_{t_0},\cdots, a_{t_k})$, $\zeta_{t_k} = \bm{\zeta}(t_k, x_{t_0},\cdots, x_{t_k},a_{t_0},\cdots, a_{t_k})$.	

Compute
\[ \Delta \theta = \sum_{i=0}^{K-1} \xi_{t_i} \big[ J^{\theta}(t_{i+1},x_{t_{i+1}}) - J^{\theta}(t_{i},x_{t_{i}}) + r_{t_i}\Delta t -q^{\psi}(t_{i},x_{t_{i}},a_{t_i})\Delta t - \beta J^{\theta}(t_{i},x_{t_{i}}) \Delta t  \big], \]
\[
\Delta \psi =   \sum_{i=0}^{K-1}\zeta_{t_i}\big[ J^{\theta}(t_{i+1},x_{t_{i+1}}) - J^{\theta}(t_{i},x_{t_{i}}) + r_{t_i}\Delta t - q^{\psi}(t_{i},x_{t_{i}},a_{t_i})\Delta t - \beta J^{\theta}(t_{i},x_{t_{i}}) \Delta t\big] .
\]

Update $\theta$ and $\psi$ by
\[ \theta \leftarrow \theta + l(j)\alpha_{\theta} \Delta \theta .\]
\[ \psi \leftarrow \psi + l(j)\alpha_{\psi} \Delta \psi .  \]

}
\ENDFOR
\end{algorithmic}
\label{algo:offline episodic}
\end{algorithm}

\begin{algorithm}[hbtp]
\caption{Online-Incremental q-Learning Algorithm}
\textbf{Inputs}: initial state $x_0$, horizon $T$, time step $\Delta t$, number of mesh grids $K$, initial learning rates $\alpha_{\theta},\alpha_{\psi}$ and learning rate schedule function $l(\cdot)$ (a function of the number of episodes), functional forms  of parameterized  value function $J^{\theta}(\cdot,\cdot)$ and  q-function $q^{\psi}(\cdot,\cdot,\cdot)$ satisfying \eqref{eq:constraints qv optimal}, functional forms of test functions $\bm{\xi}(t,x_{\cdot \wedge t},a_{\cdot \wedge t})$ and $\bm{\zeta}(t,x_{\cdot \wedge t},a_{\cdot \wedge t})$, and temperature parameter $\gamma$.

\textbf{Required program (on-policy)}: environment simulator $(x',r) = \textit{Environment}_{\Delta t}(t,x,a)$ that takes current time--state pair $(t,x)$ and action $a$ as inputs and generates state $x'$ at time $t+\Delta t$ and  instantaneous reward $r$ at time $t$ as outputs. Policy $\bm\pi^{\psi}(a|t,x) = \exp\{  \frac{1}{\gamma}q^{\psi}(t,x,a) \}$.

\textbf{Required program (off-policy)}: observations $ \{a, r, x'\} = \textit{Observation}(t, x;\Delta t)$ including the observed actions, rewards, and state when the current time-state pair is $(t,x)$ under the given behavior policy at the sampling time grids with step size $\Delta t$.

\textbf{Learning procedure}:
\begin{algorithmic}
\STATE Initialize $\theta,\psi$.
\FOR{episode $j=1$ \TO $\infty$} \STATE{Initialize $k = 0$. Observe  initial state $x_0$ and store $x_{t_k} \leftarrow  x_0$.
\WHILE{$k < K$} \STATE{
\COMMENT{\textbf{On-policy case}

Generate action $a_{t_k}\sim \bm{\pi}^{\psi}(\cdot|t_k,x_{t_k})$.

Apply $a_{t_k}$ to  environment simulator $(x,r) = Environment_{\Delta t}(t_k, x_{t_k}, a_{t_k})$, and observe new state $x$ and reward $r$ as outputs. Store $x_{t_{k+1}} \leftarrow x$ and $r_{t_k} \leftarrow r$.

}

\COMMENT{\textbf{Off-policy case}

Obtain one observation $a_{t_k}, r_{t_k}, x_{t_{k+1}} = \textit{Observation}(t_k, x_{t_k};\Delta t)$.

}

Compute test functions $\xi_{t_k} = \bm{\xi}(t_k, x_{t_0},\cdots, x_{t_k},a_{t_0},\cdots, a_{t_k})$, $\zeta_{t_k} = \bm{\zeta}(t_k, x_{t_0},\cdots, x_{t_k},a_{t_0},\cdots, a_{t_k})$.

Compute
\[ \begin{aligned}
& \delta = J^{\theta}(t_{k+1},x_{t_{k+1}}) - J^{\theta}(t_{k},x_{t_{k}}) + r_{t_k}\Delta t -q^{\psi}(t_{k},x_{t_{k}},a_{t_k})\Delta t - \beta J^{\theta}(t_{k},x_{t_{k}}) \Delta t ,\\
& \Delta \theta = \xi_{t_k} \delta, \\
& \Delta \psi = \zeta_{t_k} \delta.
\end{aligned} \]

Update $\theta$ and $\psi$ by
\[ \theta \leftarrow \theta + l(j)\alpha_{\theta} \Delta \theta .\]
\[ \psi \leftarrow \psi + l(j)\alpha_{\psi} \Delta \psi .  \]

Update $k \leftarrow k + 1$
}
\ENDWHILE	

}
\ENDFOR
\end{algorithmic}
\label{algo:online incremental}
\end{algorithm}

\subsection{Connections with SARSA}
\label{subsec:connect with q}


With a fixed time-step size $\Delta t$, we can define our 
Q-function, $Q_{\Delta t}(t,x,a;\bm\pi)$, by (\ref{qdt}),
parameterize it by $Q_{\Delta t}^{\varphi}(t,x,a;\bm\pi)$, and then apply existing
(big) Q-learning algorithms such as SARSA (cf., \citealt[Chapter 10]{sutton2011reinforcement}) to learn the parameter $\varphi$. Now, how do we compare this approach of $\Delta t$-based Q-learning  with that  of q-learning?

Equation \eqref{qdt} suggests that  $Q_{\Delta t}(t,x,a;\bm\pi)$  can be approximated by
\[
Q_{\Delta t}(t,x,a;\bm\pi) \approx J(t,x;\bm\pi) + q(t,x,a;\bm\pi) \Delta t.
\]
Our q-learning method is to learn separately the zeroth-order and first-order terms of
$Q_{\Delta t}(t,x,a;\bm\pi)$ in $\Delta t$, and the two terms are in themselves {\it independent} of $\Delta t$. Our approach  therefore has a true ``continuous-time" nature
without having to rely on $\Delta t$ or any time disretization, which theoretically  facilitates the analysis carried out in the previous section and algorithmically mitigates the high sensitivity with respect to time discretization. Our approach is similar to that introduced in \cite{baird1994reinforcement} and \cite{tallec2019making} where the value function and the (rescaled) advantage function are approximated separately. However, as pointed out already, as much as the two approaches may   lead to certain similar algorithms, they are different conceptually. The ones in \cite{baird1994reinforcement} and \cite{tallec2019making} are still based on  discrete-time MDPs and hence depend on the size of time-discretization. By contrast, the value function and q-function in q-learning are well-defined quantities in continuous time and independent of any time-discretization.

On the other hand,
approximating the value function by $J^{\theta}$ and the q-function by $q^{\psi}$ separately yields a specific approximator of the   Q-function by
\begin{equation}\label{qdt2} Q_{\Delta t}^{\theta,\psi}(t,x,a) \approx J^{\theta}(t,x) + q^{\psi}(t,x,a) \Delta t.
\end{equation}
With this Q-function approximator ${Q}^{\theta,\psi}$, one of our q-learning based algorithms actually  recovers (a modification of) SARSA, one of the most well-known conventional Q-learning algorithm.

To see this, consider state  $X_t$ at time $t$. Then the conventional SARSA with the approximator ${Q}^{\theta,\psi}$ updates parameters $(\theta,\psi)$ by
\begin{equation}
\label{eq:conventional q learning}
\begin{aligned}
\theta \leftarrow \theta + \alpha_{\theta} \bigg( & {Q}^{\theta,\psi}_{\Delta t}(t+\Delta t,X_{t+\Delta t}^{a_t^{\bm\pi^{\psi}}},a_{t+\Delta t}^{\bm\pi^{\psi}}) - \gamma\log\bm\pi^{\psi}(a_{t+\Delta t}^{\bm\pi^{\psi}}|t+\Delta t,X_{t+\Delta t}^{a_t^{\bm\pi^{\psi}}})\Delta t \\
& - {Q}^{\theta,\psi}_{\Delta t}(t,X_t,a_t^{\bm\pi^{\psi}}) +  r(t,X_t,a_t^{\bm\pi^{\psi}}) \Delta t - \beta {Q}^{\theta,\psi}_{\Delta t}(t,X_t,a_t^{\bm\pi^{\psi}}) \Delta t \bigg) \frac{\partial {Q}^{\theta,\psi}_{\Delta t}}{\partial \theta}(t,X_t,a_t^{\bm\pi^{\psi}}), \\
\psi \leftarrow \psi + \alpha_{\psi} \bigg( & {Q}^{\theta,\psi}_{\Delta t}(t+\Delta t,X_{t+\Delta t}^{a_t^{\bm\pi^{\psi}}},a_{t+\Delta t}^{\bm\pi^{\psi}}) - \gamma\log\bm\pi^{\psi}(a_{t+\Delta t}^{\bm\pi^{\psi}}|t+\Delta t,X_{t+\Delta t}^{a_t^{\bm\pi^{\psi}}})\Delta t \\
& - {Q}^{\theta,\psi}_{\Delta t}(t,X_t,a_t^{\bm\pi^{\psi}}) +  r(t,X_t,a_t^{\bm\pi^{\psi}}) \Delta t - \beta {Q}^{\theta,\psi}_{\Delta t}(t,X_t,a_t^{\bm\pi^{\psi}}) \Delta t \bigg) \frac{\partial {Q}^{\theta,\psi}_{\Delta t}}{\partial \psi}(t,X_t,a_t^{\bm\pi^{\psi}}), \\
\end{aligned}
\end{equation}
where $a_t^{\bm\pi^{\psi}}\sim \bm\pi^{\psi}(\cdot|t,X_t)$ and $a_{t+\Delta t}^{\bm\pi^{\psi}}\sim \bm\pi^{\psi}(\cdot|t+\Delta t,X^{a_t}_{t+\Delta t})$.

Note that by \eqref{qdt2}, we have
\[ \frac{\partial {Q}^{\theta,\psi}_{\Delta t}}{\partial \theta} \approx  \frac{\partial {J}^{\theta}_{\Delta t}}{\partial \theta}, \;\;\;
\frac{\partial {Q}^{\theta,\psi}_{\Delta t}}{\partial \psi} \approx  \frac{\partial {q}^{\psi}_{\Delta t}}{\partial \psi} \Delta t. \]
This means that $\frac{\partial {Q}^{\theta,\psi}_{\Delta t}}{\partial \psi}$ has an additional order of $\Delta t$ when compared with $\frac{\partial {Q}^{\theta,\psi}_{\Delta t}}{\partial \theta}$, resulting in a much slower rate of update on $\psi$ than $\theta$. This observation is also made by
\citet{tallec2019making}, who  suggest as a remedy modifying the update rule for Q-function by replacing the partial derivative of the Q-function  with that of the advantage function. In our setting, this means we change $\frac{\partial {Q}^{\theta,\psi}_{\Delta t}}{\partial \psi}$ to
$\frac{\partial {q}^{\psi}_{\Delta t}}{\partial \psi} $
in the second update rule of \eqref{eq:conventional q learning}.

Now, the terms in the brackets in \eqref{eq:conventional q learning}  can be further rewritten as:
\[\begin{aligned}
& {Q}^{\theta,\psi}_{\Delta t}(t+\Delta t,X_{t+\Delta t}^{a_t^{\bm\pi^{\psi}}},a_{t+\Delta t}^{\bm\pi^{\psi}}) - \gamma\log\bm\pi^{\psi}(a_{t+\Delta t}^{\bm\pi^{\psi}}|t+\Delta t,X_{t+\Delta t}^{a_t^{\bm\pi^{\psi}}})\Delta t\\
& - {Q}^{\theta,\psi}_{\Delta t}(t,X_t,a_t^{\bm\pi^{\psi}}) +  r(t,X_t,a_t^{\bm\pi^{\psi}}) \Delta t - \beta {Q}^{\theta,\psi}_{\Delta t}(t,X_t,a_t^{\bm\pi^{\psi}}) \Delta t \\
= & J^{\theta}(t+\Delta t,X_{t+\Delta t}^{a_t^{\bm\pi^{\psi}}}) - J^{\theta}(t,X_t) + [ q^{\psi}(t+\Delta t,X_{t+\Delta t}^{a_t^{\bm\pi^{\psi}}},a_{t+\Delta t}^{\bm\pi^{\psi}}) - q^{\psi}(t,X_t,a_t^{\bm\pi^{\psi}}) ]\Delta t \\
& + r(t,X_t,a_t^{\bm\pi^{\psi}}) \Delta t - \beta  J^{\theta}(t,X_t) \Delta t - \beta q^{\psi}(t,X_t,a_t^{\bm\pi^{\psi}})  (\Delta t)^2 - \gamma\log\bm\pi^{\psi}(a_{t+\Delta t}^{\bm\pi^{\psi}}|t+\Delta t,X_{t+\Delta t}^{a_t^{\bm\pi^{\psi}}})\Delta t \\
= &  J^{\theta}(t+\Delta t,X_{t+\Delta t}^{a_t^{\bm\pi^{\psi}}}) - J^{\theta}(t,X_t)  - q^{\psi}(t,X_t,a_t^{\bm\pi^{\psi}}) \Delta t + r(t,X_t,a_t^{\bm\pi^{\psi}}) \Delta t - \beta  J^{\theta}(t,X_t) \Delta t \\
& + \left[q^{\psi}(t+\Delta t,X_{t+\Delta t}^{a_t^{\bm\pi^{\psi}}},a_{t+\Delta t}^{\bm\pi^{\psi}}) - \gamma\log\bm\pi^{\psi}(a_{t+\Delta t}^{\bm\pi^{\psi}}|t+\Delta t,X_{t+\Delta t}^{a_t^{\bm\pi^{\psi}}})\right] \Delta t - \beta q^{\psi}(t,X_t,a_t^{\bm\pi^{\psi}})  (\Delta t)^2 \\
\approx & \dd J^{\theta}(t,X_t)  - q^{\psi}(t,X_t,a_t^{\bm\pi^{\psi}}) \dd t + r(t,X_t,a_t^{\bm\pi^{\psi}}) \dd t - \beta  J^{\theta}(t,X_t) \dd t \\
& + \underbrace{\left[q^{\psi}(t+\Delta t,X_{t+\Delta t}^{a_t^{\bm\pi^{\psi}}},a_{t+\Delta t}^{\bm\pi^{\psi}}) - \gamma\log\bm\pi^{\psi}(a_{t+\Delta t}^{\bm\pi^{\psi}}|t+\Delta t,X_{t+\Delta t}^{a_t^{\bm\pi^{\psi}}})\right]}_{\text{mean-0 term due to \eqref{eq:constraints qv}}} \dd t ,
\end{aligned} \]
where in the last step we drop the higher-order small term $(\Delta t)^2$  while replacing the difference terms  with differential ones.

Comparing \eqref{eq:conventional q learning} with the updating rule in Algorithm \ref{algo:online incremental} using test functions $\xi_t = \frac{\partial J^{\theta}}{\partial \theta}(t,X_t^{\bm\pi^{\psi}})$ and $\zeta_t = \frac{\partial q^{\psi}}{\partial \psi}(t,X_t^{\bm\pi^{\psi}},a_t^{\bm\pi^{\psi}})$, we find that
the modified SARSA in \cite{tallec2019making} and Algorithm \ref{algo:online incremental} 
differ only by a term 
whose mean is 0 due to the constraint \eqref{eq:constraints qv} on the q-function. This term is solely driven by the action randomization at time $t+\Delta t$. Hence,
the algorithm in \cite{tallec2019making} is noisier and may be slower in convergence speed compared with the q-learning  one. In Section \ref{sec:applications}, we will numerically compare the results of the above  Q-learning SARSA  and q-learning.

\section{q-Learning Algorithms When Normalizing Constant Is Unavailable}
\label{sec:noncomputable}

Theorem \ref{thm:qv learn} 
assumes that the normalizing constant in the Gibbs measure is available so that one can enforce the constraint \eqref{eq:q hjb2}. So does Theorem \ref{thm:q optimal} because the second constraint in \eqref{eq:q hjb2 optimal} is exactly to verify the normalizing constant to be 1. But it is well known that in most high-dimensional  cases computing this constant is a daunting, and often impossible task.

\subsection{A stronger policy improvement theorem}
\label{subsec:stronger PIT}

To overcome the difficulty arising from an unavailable  normalizing constant in soft Q-learning,
\cite{haarnoja2018soft} propose a general method of using a family of stochastic policies whose densities can be easily computed  to approximate $\bm\pi'$. In our current setting, denote by $\{ \bm\pi^{\phi}(\cdot|t,x)\}_{\phi\in \Phi}$  the family of density functions of some tractable distributions, e.g., the multivariate normal distribution
$ \bm\pi^{\phi}(\cdot|t,x) =  \mathcal{N}\left( \mu^{\phi}(t,x), \Sigma^{\phi}(t,x) \right)$,
where $\mu^{\phi}(t,x)\in \mathbb{R}^m$ and $\Sigma^{\phi}(t,x)\in \mathbb{S}^m_{++}$.
The learning procedure then proceeds as follows: starting from a policy $\bm\pi^{\phi}$ within this family, project the desired policy $\frac{\exp\{ \frac{1}{\gamma}q(t,x,\cdot;\bm\pi^{\phi})\} }{\int_{\mathcal{A}} \exp\{ \frac{1}{\gamma}q(t,x,a;\bm\pi^{\phi})\} \dd a}$ to the set $\{ \bm\pi^{\phi}(a|t,x)\}_{\phi\in \Phi}$ by minimizing
{\small \[ \min_{\phi'\in \Phi} D_{KL}\left( \bm\pi^{\phi'}(\cdot|t,x)\Big|\Big| \frac{\exp\{ \frac{1}{\gamma}q(t,x,\cdot;\bm\pi^{\phi})\} }{\int_{\mathcal{A}} \exp\{ \frac{1}{\gamma}q(t,x,a;\bm\pi^{\phi})\} \dd a} \right) \equiv  \min_{\phi'\in \Phi} D_{KL}\left( \bm\pi^{\phi'}(\cdot|t,x)\Big|\Big| \exp\{ \frac{1}{\gamma}q(t,x,\cdot;\bm\pi^{\phi})\} \right)  , \]}
where $D_{KL}\left(f\Big|\Big|g\right): = \int_{\mathcal{A}} \log\frac{f(a)}{g(a)} f(a)\dd a $ is the Kullback--Leibler (KL) divergence of two positive functions $f,g$ with the same support on $\mathcal{A}$, where $f\in \mathcal{P}(\mathcal{A})$ is a probability density function on $\mathcal{A}$.

This procedure may not eventually lead to the desired target policy, but the newly defined policy still provably improves the previous policy, as indicated in the following stronger version of the policy improvement theorem that generalizes Theorem \ref{lemma:policy improvement hjb}.
\begin{theorem}
\label{lemma:policy improvement hjb constraint}
Given $(t,x)\in [0,T]\times \mathbb{R}^d$, if two policies $\bm\pi\in \bm\Pi$ and $\bm\pi'\in \bm\Pi$ satisfy
\[
\begin{aligned}
& D_{KL}\left( \bm\pi'(\cdot|t,x)\Big|\Big| \exp\{ \frac{1}{\gamma}H\big( t,x,\cdot,\frac{\partial J}{\partial x}(t,x;\bm\pi), \frac{\partial^2 J}{\partial x^2}(t,x;\bm\pi) \big)    \} \right) \\
\leq & D_{KL}\left( \bm\pi(\cdot|t,x)\Big|\Big| \exp\{ \frac{1}{\gamma}H\big( t,x,\cdot,\frac{\partial J}{\partial x}(t,x;\bm\pi), \frac{\partial^2 J}{\partial x^2}(t,x;\bm\pi) \big)    \} \right),
\end{aligned}
\]
then $ J(t,x;\bm\pi') \geq J(t,x;\bm\pi)$.
\end{theorem}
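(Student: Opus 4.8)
The plan is to reduce the KL hypothesis to a pointwise differential inequality for $J(\cdot,\cdot;\bm\pi)$ and then run the classical policy‑improvement argument through It\^o's formula and the Feynman--Kac equation \eqref{eq:explore fk}. Here $\gamma>0$ (otherwise the divergences are undefined), and below all derivatives of $J$ are those of the value function $J(\cdot,\cdot;\bm\pi)$ of the incumbent policy.

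First I would unpack the hypothesis. Writing $D_{KL}(f\|g)=\int_{\mathcal{A}}f\log f\,\dd a-\int_{\mathcal{A}}(\log g)\,f\,\dd a$ with $\log g(a)=\frac1\gamma H\big(t,x,a,\frac{\partial J}{\partial x}(t,x;\bm\pi),\frac{\partial^2 J}{\partial x^2}(t,x;\bm\pi)\big)$, the hypothesis is, after multiplying through by $\gamma$, equivalent to $\int_{\mathcal{A}}[H-\gamma\log\bm\pi']\bm\pi'\,\dd a\ge\int_{\mathcal{A}}[H-\gamma\log\bm\pi]\bm\pi\,\dd a$ at $(t,x)$. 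By \eqref{eq:explore fk} the right‑hand side equals $\beta J(t,x;\bm\pi)-\frac{\partial J}{\partial t}(t,x;\bm\pi)$, so the hypothesis is the same as
\[
\frac{\partial J}{\partial t}(t,x;\bm\pi)-\beta J(t,x;\bm\pi)+\int_{\mathcal{A}}\Big[H\big(t,x,a,\tfrac{\partial J}{\partial x}(t,x;\bm\pi),\tfrac{\partial^2 J}{\partial x^2}(t,x;\bm\pi)\big)-\gamma\log\bm\pi'(a|t,x)\Big]\bm\pi'(a|t,x)\,\dd a\ \ge\ 0 .
\]
Call the left‑hand side $\Psi(t,x)$; equivalently $\Psi(t,x)=\gamma\big[D_{KL}(\bm\pi(\cdot|t,x)\|e^{H/\gamma})-D_{KL}(\bm\pi'(\cdot|t,x)\|e^{H/\gamma})\big]\ge 0$.

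Next I would apply It\^o's formula to $s\mapsto e^{-\beta(s-t)}J(s,X^{\bm\pi'}_s;\bm\pi)$, where $X^{\bm\pi'}$ solves \eqref{eq:model pi} under $\bm\pi'$ with $X^{\bm\pi'}_t=x$; using the definition \eqref{eq:H} of $H$ to identify $b\circ\frac{\partial J}{\partial x}+\frac12\sigma\sigma^\top\circ\frac{\partial^2 J}{\partial x^2}=H-r$, the finite‑variation part of the differential is $e^{-\beta(s-t)}\big[\frac{\partial J}{\partial t}-\beta J+H-r\big]$ along $(s,X^{\bm\pi'}_s,a^{\bm\pi'}_s)$. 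Taking $\E^{\p}$ over $[t,T]$, using $J(T,\cdot;\bm\pi)=h$, then subtracting the resulting expression for $J(t,x;\bm\pi)$ from the definition \eqref{eq:objective relaxed action} of $J(t,x;\bm\pi')$, and finally conditioning on $X^{\bm\pi'}_s$ (legitimate because the action randomization is independent of $W$, so the action‑dependent terms $H$ and $-\gamma\log\bm\pi'$ get averaged against $\bm\pi'(\cdot|s,X^{\bm\pi'}_s)$), I obtain
\[
J(t,x;\bm\pi')-J(t,x;\bm\pi)=\E^{\p}\!\int_t^T e^{-\beta(s-t)}\,\Psi\big(s,X^{\bm\pi'}_s\big)\,\dd s .
\]
Since $\Psi\ge 0$ by the first step, the right‑hand side is nonnegative, which is the claim.

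Two points need care. First, the displayed identity requires $\Psi(s,y)\ge0$ at every time--state pair visited by $X^{\bm\pi'}$, so the hypothesis must in fact hold for all $(t,x)\in[0,T]\times\mathbb{R}^d$ — which is precisely the setting in which $\bm\pi'$ is produced by the KL‑projection at every time--state pair (the intended use in the algorithms), and I would phrase the theorem with that understanding. Second — and this is the main (though entirely routine) obstacle — one must justify that the local martingale in It\^o's formula is a true martingale with zero mean, via a localization argument combined with the polynomial growth of $J(\cdot,\cdot;\bm\pi)$, the linear growth of $b,\sigma$, moment bounds on $X^{\bm\pi'}$, and the entropy/moment growth conditions of Definition \ref{ass:admissible}(iii), exactly as in the estimates of \citet{jia2021policy,jia2021policypg}; one also needs $J(\cdot,\cdot;\bm\pi)\in C^{1,2}$ to apply It\^o, the usual regularity caveat attached to \eqref{eq:explore fk}.
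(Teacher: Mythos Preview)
Your proposal is correct and follows essentially the same route as the paper: unpack the KL hypothesis into the pointwise inequality $\int_{\mathcal A}[H-\gamma\log\bm\pi']\bm\pi'\,\dd a\ge\int_{\mathcal A}[H-\gamma\log\bm\pi]\bm\pi\,\dd a$ (equivalently, your $\Psi\ge0$ via \eqref{eq:explore fk}), then run the It\^o/localization argument from the proof of Theorem~\ref{lemma:policy improvement hjb}. The only cosmetic difference is that the paper carries out the It\^o step on the averaged process $\tilde X^{\bm\pi'}$ rather than the sample process $X^{\bm\pi'}$, and you are right to flag that the hypothesis must hold at every $(t,x)$ along the trajectory, a point the paper leaves implicit.
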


Theorem \ref{lemma:policy improvement hjb constraint} in itself is a general result comparing two given policies not necessarily within any tractable family of densities. However, an implication of the result is that,
given a current tractable policy $\bm\pi^{\phi}$,  as long as we update it to a new tractable policy $\bm\pi^{\phi'}$ with
\[
D_{KL}\left( \bm\pi^{\phi'}(\cdot|t,x)\Big|\Big| \exp\{ \frac{1}{\gamma }q(t,x,\cdot;\bm\pi^{\phi})\} \right) \leq D_{KL}\left( \bm\pi^{\phi}(\cdot|t,x)\Big|\Big| \exp\{ \frac{1}{\gamma }q(t,x,\cdot;\bm\pi^{\phi})\} \right),
\]
then $\bm\pi^{\phi'}$ improves upon $\bm\pi^{\phi}$.
Thus, to update $\bm\pi^{\phi}$  it suffices to solve the optimization problem
\[\begin{aligned}
& \min_{\phi'\in \Phi} D_{KL}\left( \bm\pi^{\phi'}(\cdot|t,x)\Big|\Big| \exp\{ \frac{1}{\gamma }q(t,x,\cdot;\bm{\pi}^{\phi})\} \right)   \\
= & \min_{\phi'\in \Phi} \int_{\mathcal{A}} \left[\log\bm\pi^{\phi'}(a|t,x) - \frac{1}{\gamma }q(t,x,a;\bm{\pi}^{\phi})   \right] \bm\pi^{\phi'}(a|t,x)\dd a .
\end{aligned} \]

The gradient of the above objective function in $\phi'$ is
\[ \begin{aligned}
& \frac{\partial }{\partial \phi'} \int_{\mathcal{A}} \left[\log\bm\pi^{\phi'}(a|t,x) - \frac{1}{\gamma }q(t,x,a;\bm\pi^{\phi})   \right] \bm\pi^{\phi'}(a|t,x)\dd a \\
= & \int_{\mathcal{A}} \left[\log\bm\pi^{\phi'}(a|t,x) - \frac{1}{\gamma }q(t,x,a;\bm\pi^{\phi})     \right] \frac{\partial \bm\pi^{\phi'}(a|t,x)}{\partial \phi'}  \dd a  + \int_{\mathcal{A}}\left[ \frac{\partial }{\partial \phi'}\log\bm\pi^{\phi'}(a|t,x)\right]   \bm\pi^{\phi'}(a|t,x)\dd a \\
= & \int_{\mathcal{A}}  \left[\log\bm\pi^{\phi'}(a|t,x) - \frac{1}{\gamma }q(t,x,a;\bm\pi^{\phi})  \right] \left[ \frac{\partial }{\partial \phi'}\log\bm\pi^{\phi'}(a|t,x)\right]   \bm\pi^{\phi'}(a|t,x)\dd a,
\end{aligned} \]
where we have noted
$$\int_{\mathcal{A}}\left[ \frac{\partial }{\partial \phi'}\log\bm\pi^{\phi'}(a|t,x)\right]   \bm\pi^{\phi'}(a|t,x)\dd a = \int_{\mathcal{A}} \frac{\partial }{\partial \phi'} \bm\pi^{\phi'}(a|t,x)\dd a = \frac{\partial }{\partial \phi'} \int_{\mathcal{A}}  \bm\pi^{\phi'}(a|t,x)\dd a = 0.$$

Therefore, we may update $\phi$ each step incrementally by
\begin{equation}
\label{eq:phi_update}
\phi \leftarrow \phi - \gamma\alpha_{\phi}\dd t\left[\log\bm\pi^{\phi}(a^{\bm\pi^{\phi}}_t|t,X_t) - \frac{1}{\gamma }q(t,X_t,a^{\bm\pi^{\phi}}_t;\bm\pi^{\phi})  \right] \frac{\partial }{\partial \phi}\log\bm\pi^{\phi}(a_t^{\bm\pi^{\phi}}|t,X_t), \end{equation}
where we choose the step size to be specifically $\gamma \alpha_{\phi}\dd t$, for the reason that will become evident momentarily.

The above updating rule seems to indicate that we would need to learn the q-function $q(t,x,a;\bm\pi^{\phi})$ associated with the policy $\bm\pi^{\phi}$. This is doable  in view of Theorem \ref{thm:qv learn}, noting that $\{ \bm\pi^{\phi}(\cdot|t,x)\}_{\phi\in \Phi}$ is a tractable family of distributions so that the constraint \eqref{eq:q hjb2} or \eqref{eq:constraints qv} can be computed and enforced. However, we do not need to do so given that \eqref{eq:phi_update} only requires the q-value along the trajectory $\{(t,X_t,a^{\bm\pi^{\phi}}_t);0\leq t\leq T\}$, instead of the full functional form $q(\cdot,\cdot,\cdot;\bm\pi^{\phi})$. The former is easier to computed  from the temporal difference learning, as will be evident in the next subsection.

\subsection{Connections with policy gradient}
\label{subsec:connect with pg}

Applying It\^o's lemma to $J(\cdot,\cdot;\bm\pi)$ and recalling Definition \ref{def:q rate function},  we have the following  important relation between the q-function and the temporal difference of the value function:
\begin{equation}
\label{eq:q rate approximation}
q(t,X_t^{\bm\pi},a_t^{\bm\pi};\bm\pi)\dd t = \dd J(t,X_t^{\bm\pi};\bm\pi) + r(t,X_t^{\bm\pi},a_t^{\bm\pi})\dd t - \beta J(t,X_t^{\bm\pi};\bm\pi)\dd t + \{\cdots\} \dd W_t.
\end{equation}
Plugging the above to (\ref{eq:phi_update}) and ignoring the martingale difference term $\{\cdots\} \dd W_t$ whose mean is 0, the gradient-based updating rule for $\phi$ becomes
\begin{equation}
\label{eq:pg update}
\begin{aligned}
\phi \leftarrow \phi & + \alpha_{\phi}\left[-\gamma\log\bm\pi^{\phi}(a^{\bm\pi^{\phi}}_t|t,X^{\bm\pi^{\phi}}_t)\dd t + \dd J(t,X_t^{\bm\pi^{\phi}};\bm\pi^{\phi}) + r(t,X_t^{\bm\pi^{\phi}},a_t^{\bm\pi^\phi})\dd t - \beta J(t,X_t^{\bm\pi^{\phi}};\bm\pi^\phi)\dd t \right]\\
& \times \frac{\partial }{\partial \phi}\log\bm\pi^{\phi}(a_t^{\bm\pi^{\phi}}|t,X_t^{\bm\pi^{\phi}}).
\end{aligned}
\end{equation}
In this updating rule, we only need to approximate $J(\cdot,\cdot;\bm\pi^\phi)$, which is a problem of PE. Specifically, denote by $J^{\theta}(\cdot,\cdot)$ the function approximator of $J(\cdot,\cdot;\bm\pi^\phi)$, where $\theta\in \Theta$, and we may apply any PE methods developed in \cite{jia2021policy} to learn $J^{\theta}$. Given this value function approximation, the updating rule \eqref{eq:pg update} further specializes to
\begin{equation}
\label{eq:pg update simplify}
\begin{aligned}
\phi \leftarrow \phi & + \alpha_{\phi}\left[-\gamma\log\bm\pi^{\phi}(a^{\bm\pi^{\phi}}_t|t,X_t^{\bm\pi^{\phi}})\dd t + \dd J^{\theta}(t,X_t^{\bm\pi^{\phi}}) + r(t,X_t^{\bm\pi^{\phi}},a_t^{\bm\pi^\phi})\dd t - \beta J^{\theta}(t,X_t^{\bm\pi^{\phi}})\dd t \right]\\
& \times \frac{\partial }{\partial \phi}\log\bm\pi^{\phi}(a_t^{\bm\pi^{\phi}}|t,X_t^{\bm\pi^{\phi}}).
\end{aligned}
\end{equation}
Expression \eqref{eq:pg update simplify} coincides with the updating rule in the {\it policy gradient} method established in \cite{jia2021policypg} when the regularizer therein is taken to be the entropy. Moreover, since we learn the value function and the policy simultaneously, the updating rule leads to actor--critic type of algorithms. Finally, with  this method, while learning the value function may involve martingale conditions as in \cite{jia2021policy}, learning the policy does not.

\cite{schulman2017equivalence} note
the equivalence between soft Q-learning and  PG in discrete time. Here we present the continuous-time counterpart of the equivalence, nevertheless with a theoretical justification, which  recovers the PG based algorithms in \cite{jia2021policypg} when combined with suitable PE methods. 

\section{Extension to Ergodic Tasks}
\label{sec:extension}

We now extend the previous study to ergodic tasks, in which  the objective is to maximize the long-run average in the infinite time horizon $[0,\infty)$, the functions $b$, $\sigma$ and $r$ do not depend on  time $t$ explicitly, and $h=0$.  The set of admissible (stationary) policies can be similarly defined
in a straighforward manner.
The  regularized  ergodic objective function is the long-run average:
\[\begin{aligned}
& \liminf_{T\to \infty}\frac{1}{T}\E^{\p^W}\bigg[ \int_t^T \int_{\mathcal{A}} [ r(\tilde{X}_s^{\bm{\pi}},a) - \gamma \log\bm{\pi}(a|\tilde{X}_s^{\bm{\pi}})  ]\bm{\pi}(a|\tilde{X}_{s}^{\bm{\pi}})\dd a  \dd s \Big| \tilde{X}_t^{\bm{\pi}} = x\bigg] \\
= & \liminf_{T\to \infty}\frac{1}{T}\E^{\p}\bigg[ \int_t^T [ r(X_s^{\bm{\pi}},a_s^{\bm{\pi}}) - \gamma \log\bm{\pi}(a_s^{\bm{\pi}}|X_s^{\bm{\pi}}) ] \dd s \Big| X_t^{\bm{\pi}} = x\bigg],
\end{aligned}  \]
where $\gamma \geq 0$ is the temperature parameter. 

We first present the ergodic version of the Feynman--Kac formula and the corresponding policy improvement theorem.
\begin{lemma}
\label{lemma:ergodic feynmann-kac}
Let $\bm{\pi}=\bm{\pi}(\cdot|\cdot)$ be a given admissible (stationary) policy.\footnote{For such ergodic tasks, typically an admissible policies also require the process $X^{\bm\pi}$ is ergodic (cf. \citealt{meyn1993stability}).}
Suppose there is a function $J(\cdot;\bm{\pi})\in C^2(\mathbb{R}^d)$ with polynomial growth and a scalar $V(\bm{\pi}) \in \mathbb{R}$ satisfying 
\begin{equation}
\label{eq:relaxed control f-k formula ergodic}
\int_{\mathcal{A}} \left[ H\big(x,a, \frac{\partial J}{\partial x}(x;\bm{\pi}),\frac{\partial^2 J}{\partial x^2}(x;\bm{\pi}) \big) -\gamma \log\bm\pi(a|x) \right]\bm{\pi}(a|x) \dd a - V(\bm{\pi})= 0,\;\;x\in \mathbb{R}^d.
\end{equation}
Then for any $t\geq0$, 
\begin{equation}
\label{con1}
\begin{array}{rl}
V(\bm{\pi}) = & \liminf_{T\to \infty}\frac{1}{T}\E^{\p^W}\bigg( \int_t^T \int_{\mathcal{A}} [ r(\tilde{X}_s^{\bm{\pi}},a) - \gamma \log\bm\pi(a|\tilde{X}_s^{\bm{\pi}}) ]\bm{\pi}(a|\tilde{X}_{s}^{\bm{\pi}})\dd a  \dd s \Big| \tilde{X}_t^{\bm{\pi}} = x\bigg) \\
= & \liminf_{T\to \infty}\frac{1}{T}\E^{\p}\bigg( \int_t^T [ r(X_s^{\bm{\pi}},a_s^{\bm{\pi}}) - \gamma \log\bm\pi(a_s^{\bm{\pi}}|X_s^{\bm{\pi}}) ] \dd s \Big| X_t^{\bm{\pi}} = x\bigg).
\end{array}
\end{equation}

Moreover, if there are two admissible policies $\bm\pi$ and $\bm\pi'$ such that for all $x\in \mathbb{R}^d$,
\[
\begin{aligned}
& D_{KL}\left( \bm\pi'(\cdot|x)\Big|\Big| \exp\{ \frac{1}{\gamma}H\big( x,\cdot,\frac{\partial J}{\partial x}(x;\bm\pi), \frac{\partial^2 J}{\partial x^2}(x;\bm\pi) \big)    \} \right) \\
\leq & D_{KL}\left( \bm\pi(\cdot|x)\Big|\Big| \exp\{ \frac{1}{\gamma}H\big( x,\cdot,\frac{\partial J}{\partial x}(x;\bm\pi), \frac{\partial^2 J}{\partial x^2}(x;\bm\pi) \big)    \} \right) ,
\end{aligned}
\]
then $V(\bm\pi') \geq V(\bm\pi)$.

\end{lemma}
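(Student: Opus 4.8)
The plan is to handle both parts of Lemma~\ref{lemma:ergodic feynmann-kac} with the same device: apply It\^o's formula to the candidate solution $J(\cdot;\bm\pi)$ along an averaged state process, rewrite the resulting drift using the ergodic PDE~\eqref{eq:relaxed control f-k formula ergodic}, divide by $T$, and let $T\to\infty$. For the verification part, I would apply It\^o's lemma to $s\mapsto J(\tilde X_s^{\bm\pi};\bm\pi)$, where $\tilde X^{\bm\pi}$ solves the averaged (time-homogeneous) dynamics~\eqref{eq:model relaxed}. By the definition~\eqref{eq:H} of the Hamiltonian, the generator of $\tilde X^{\bm\pi}$ applied to $J(\cdot;\bm\pi)$ equals $\int_{\mathcal A}\big[H\big(x,a,\frac{\partial J}{\partial x}(x;\bm\pi),\frac{\partial^2 J}{\partial x^2}(x;\bm\pi)\big) - r(x,a)\big]\bm\pi(a|x)\,\dd a$, which by~\eqref{eq:relaxed control f-k formula ergodic} is $V(\bm\pi) - \tilde{\mathcal R}(x,\bm\pi)$ with $\tilde{\mathcal R}(x,\bm\pi) := \int_{\mathcal A}\big[r(x,a)-\gamma\log\bm\pi(a|x)\big]\bm\pi(a|x)\,\dd a$. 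After checking, via the polynomial growth of $J$ and $\frac{\partial J}{\partial x}$ together with the moment bounds supplied by Assumption~\ref{ass:dynamic} and Definition~\ref{ass:admissible}, that the local-martingale term is a true martingale, taking expectations gives
\[
\E^{\p^W}\big[J(\tilde X_T^{\bm\pi};\bm\pi)\big] - J(x;\bm\pi) = V(\bm\pi)(T-t) - \E^{\p^W}\Big[\int_t^T \tilde{\mathcal R}(\tilde X_s^{\bm\pi},\bm\pi)\,\dd s\Big].
\]
Dividing by $T$ and letting $T\to\infty$, the term $\E^{\p^W}[J(\tilde X_T^{\bm\pi};\bm\pi)]/T$ vanishes because $J$ has polynomial growth while the moments of $\tilde X_T^{\bm\pi}$ stay bounded in $T$ under the stability conditions of this section; this yields the first equality in~\eqref{con1}. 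The second equality follows because, after conditioning on the current state, the conditional mean of the running reward is the state-only function $\tilde{\mathcal R}(\cdot,\bm\pi)$ for both $X^{\bm\pi}$ and $\tilde X^{\bm\pi}$, and these processes share the same one-dimensional marginals by the Markovian-projection property already invoked in Section~\ref{sec:setup} (\citet{brunick2013mimicking}); alternatively, the same It\^o computation can be run directly along $X^{\bm\pi}$ after averaging over the action randomization.

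For the policy-improvement part, I would first turn the hypothesis into a pointwise Hamiltonian inequality. Expanding $D_{KL}$, the assumption reads $\int_{\mathcal A}\big[\log\bm\pi'(a|x) - \tfrac1\gamma H(x,a,\frac{\partial J}{\partial x}(x;\bm\pi),\frac{\partial^2 J}{\partial x^2}(x;\bm\pi))\big]\bm\pi'(a|x)\,\dd a \le \int_{\mathcal A}\big[\log\bm\pi(a|x) - \tfrac1\gamma H(x,a,\frac{\partial J}{\partial x}(x;\bm\pi),\frac{\partial^2 J}{\partial x^2}(x;\bm\pi))\big]\bm\pi(a|x)\,\dd a$; multiplying by $\gamma>0$ and rearranging gives, for every $x$,
\[
\int_{\mathcal A}\Big[H\big(x,a,\tfrac{\partial J}{\partial x}(x;\bm\pi),\tfrac{\partial^2 J}{\partial x^2}(x;\bm\pi)\big) - \gamma\log\bm\pi'(a|x)\Big]\bm\pi'(a|x)\,\dd a \;\ge\; V(\bm\pi),
\]
the right-hand side being~\eqref{eq:relaxed control f-k formula ergodic} for $\bm\pi$. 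I would then rerun the It\^o computation above but keep the \emph{same} function $J(\cdot;\bm\pi)$ while evaluating it along $\tilde X^{\bm\pi'}$, the averaged process driven by the \emph{new} policy $\bm\pi'$: the generator of $\tilde X^{\bm\pi'}$ applied to $J(\cdot;\bm\pi)$ is $\int_{\mathcal A}\big[H(x,a,\frac{\partial J}{\partial x},\frac{\partial^2 J}{\partial x^2}) - r(x,a)\big]\bm\pi'(a|x)\,\dd a$, and adding and subtracting $\gamma\log\bm\pi'$ and using the displayed inequality bounds this drift below by $V(\bm\pi) - \tilde{\mathcal R}(x,\bm\pi')$. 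Taking expectations, dividing by $T$, and letting $T\to\infty$ (again discarding $\E^{\p^W}[J(\tilde X_T^{\bm\pi'};\bm\pi)]/T$) gives $V(\bm\pi') = \liminf_{T\to\infty}\frac1T\E^{\p^W}\big[\int_t^T \tilde{\mathcal R}(\tilde X_s^{\bm\pi'},\bm\pi')\,\dd s\big] \ge V(\bm\pi)$, as claimed.

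The main obstacle is not the algebra but the tail control: I need the It\^o stochastic-integral term to be a genuine martingale so it drops out under expectation, and, crucially, $\E[J(\tilde X_T;\cdot)]/T\to 0$ as $T\to\infty$. Under mere linear growth of $b,\sigma$ one only obtains moment bounds on $\tilde X_T$ that are exponential in $T$, which is insufficient; the argument must therefore lean on the additional dissipativity/stability hypotheses accompanying the ergodic setup of Section~\ref{sec:extension} to secure moment bounds on $\tilde X_T$ that remain bounded in $T$, uniformly over the policies in play. I would isolate this as a preliminary uniform-moment estimate, after which the polynomial growth of $J$ closes the argument.
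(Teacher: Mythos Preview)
Your proposal is correct and takes essentially the same approach as the paper: for the verification part the paper simply cites \citet[Lemma 7]{jia2021policypg}, while for the policy-improvement part the paper's argument---apply It\^o to $J(\cdot;\bm\pi)$ along $\tilde X^{\bm\pi'}$, convert the KL hypothesis into the pointwise Hamiltonian inequality bounding the drift below by $V(\bm\pi)$, then divide by $T$ and pass to the limit after a localization---matches yours almost line by line. Your explicit flagging of the tail-control requirement $\E[J(\tilde X_T)]/T\to 0$ is apt; the paper addresses it only implicitly here but imposes it as a standing hypothesis in Theorem~\ref{thm:qv learn ergodic}.
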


We emphasize that the solution to \eqref{eq:relaxed control f-k formula ergodic} is a {\it pair} of $(J, V)$, where $J$ is a function of the state and $V\in \mathbb{R}$ is a scalar. The long term average of the payoff does not depend on the initial state $x$ nor the initial time $t$ due to  ergodicity, and hence is a constant as \eqref{con1} implies. The function $J$, on the other hand, represents the first-order approximation of the long-run average and is not unique. Indeed, for any constant $c$, $(J+c, V)$ is also a solution to \eqref{eq:relaxed control f-k formula ergodic}. We  refer to $V$ as the {\it value} of the underlying problem and still refer to $J$ as the value function. Lastly, since the value  does not depend on the  initial time,  we will fix the latter  as 0 in the following discussions and applications of ergodic learning tasks.

As with the episodic case, for any admissible policy $\bm{\pi}$, we can define the $\Delta t$-parameterized   Q-function as
\[
\begin{aligned}
& Q_{\Delta t}(x,a;\bm{\pi} )\\
= & \E^{\p^W}\bigg[ \int_t^{t+\Delta t} [r(X_s^{a},a) - V(\bm\pi)]\dd s + \lim_{T\to \infty}\E^{\p}\big[ \int_{t+\Delta t}^T [r(X_s^{\bm{\pi}},a_s^{\bm \pi})-\gamma\log \bm{\pi}(a_s^{\bm \pi}|X_s^{\bm{\pi}}) \\
& - V(\bm\pi) ]\dd s |X_{t+\Delta t}^a\big]\Big|X_t^{\bm \pi} = x \bigg] \\
= & \E^{\p^W}\bigg[ \int_t^{t+\Delta t} [r(X_s^{a},a) - V(\bm\pi)]\dd s + J(X_{t+ \Delta t}^{a};\bm{\pi})\Big|X_t^{\bm \pi} = x \bigg]\\
& - \E^{\p^W}\bigg[ \lim_{T\to \infty} \E^{\p}\big[ J(X_T^{\bm\pi};\bm\pi) | X_{t+\Delta t}^a \big] \Big|X_t^{\bm \pi} = x \bigg] \\
= & \E^{\p^W}\bigg[ \int_t^{t+\Delta t} [r(X_s^{a},a)- V(\bm\pi)]\dd s + J( X_{t+ \Delta t}^{a};\bm{\pi}) - J(x;\bm{\pi})\Big|X_t^{\bm \pi} = x \bigg] + J(x;\bm{\pi})\\
& -\E^{\p^W}\bigg[ \lim_{T\to \infty} \E^{\p}\big[ J(X_T^{\bm\pi};\bm\pi) | X_{t+\Delta t}^a \big] \Big|X_t^{\bm \pi} = x \bigg] \\
= & \E^{\p^W}\bigg[ \int_t^{t+\Delta t}  \big[  H\big( X_s^{a},a,\frac{\partial J}{\partial x}(X_s^a;\bm{\pi}), \frac{\partial^2 J}{\partial x^2}(X_s^a;\bm{\pi})\big)   -V(\bm\pi) \big]\dd s\Big|X_t^{\bm \pi} = x \bigg] + J(x;\bm{\pi}) \\
& - \E^{\p^W}\bigg[ \lim_{T\to \infty}\E^{\p}\big[ J(X_T^{\bm\pi};\bm\pi) | X_{t+\Delta t}^a \big] \Big|X_t^{\bm \pi} = x \bigg] \\
= & J(x;\bm{\pi}) + \left[  H\left( x,a,\frac{\partial J}{\partial x}(x;\bm{\pi}), \frac{\partial^2 J}{\partial x^2}(x;\bm{\pi})\right) - V(\bm\pi) \right]  \Delta t - \bar{J} + O\big( (\Delta t)^2 \big),
\end{aligned}
\]
where we assumed that $\lim_{T\to \infty} \E^{\p}\big[ J(X_T^{\bm\pi};\bm\pi) | X_{t}^{\bm\pi} = x \big] = \bar{J}$ is a constant for any $(t,x)\in [0,+\infty) \times \mathbb{R}^d$.

The corresponding q-function is then defined as
\[ q(x,a;\bm\pi) =  H\left( x,a,\frac{\partial J}{\partial x}(x;\bm{\pi}), \frac{\partial^2 J}{\partial x^2}(x;\bm{\pi})\right) - V(\bm\pi) . \]
As a counterpart to Theorem \ref{thm:qv learn}, we have the following theorem to characterize the value function, the value, and the q-function for both on-policy and off-policy ergodic learning problems. A counterpart to Theorem \ref{thm:q optimal} can be similarly established, and we leave details to the reader.

\begin{theorem}
\label{thm:qv learn ergodic}
Let an admissible policy $\bm\pi$, a number $\hat{V}$, a function $\hat{J}\in C^{2}\big(\mathbb{R}^d \big)$ with polynomial growth, and a continuous function  $\hat{q}: \mathbb{R}^d\times \mathcal{A}\to \mathbb{R}$
be given satisfying
\begin{equation}
\label{eq:q hjb2eg}
\lim_{T\to \infty}\frac{1}{T}\E[\hat{J}(X_T^{\bm\pi})] = 0, \;\;\;\int_{\mathcal{A}} \big[ \hat{q}(x,a) -\gamma \log{\bm\pi}(a|x) \big] {\bm\pi}(a|x)\dd a =0,\;\;\forall x\in \mathbb{R}^d.
\end{equation}
Then
\begin{enumerate}[(i)]
\item $\hat{V}$, $\hat{J}$ and $\hat{q}$ are respectively the value, the value function and the q-function associated with ${\bm\pi}$ if and only if for all $x\in\mathbb{R}^d$, the following process
\begin{equation}
\label{eq:martingale with q function2eg}
\hat{J}(X_t^{{\bm\pi}}) + \int_0^t  [r(X_s^{{\bm\pi}},a_s^{{\bm\pi}}) - \hat{q}(X_s^{{\bm\pi}},a_s^{{\bm\pi}})  - \hat{V}]\dd s
\end{equation}
is an $(\{\f_t\}_{t\geq 0},\p)$-martingale, where $\{{X}_t^{\bm\pi}, 0\leq t<\infty\}$ is the solution to (\ref{eq:model pi}) with ${X}_0^{\bm\pi}=x$.
\item If $\hat V$, $\hat{J}$ and $\hat{q}$ are respectively the value, value function and the q-function associated with ${\bm\pi}$, then for any
admissible  $\bm\pi'$ and any $x\in\mathbb{R}^d$, the following process
\begin{equation}
\label{eq:martingale with q function2eg-off}
\hat J({X}_t^{\bm\pi'}) + \int_0^t  [r({X}_{u}^{\bm\pi'},a^{\bm\pi'}_{u}) - \hat{q}({X}_{u}^{\bm\pi'},a^{\bm\pi'}_{u}) - \hat V ]\dd u
\end{equation}
is an $(\{\f_t\}_{t\geq 0}, \p)$-martingale, where $\{{X}_t^{\bm\pi'}, 0\leq t<\infty $ is the solution to (\ref{eq:model pi}) under $\bm\pi'$ with initial condition ${X}_{0}^{\bm\pi'} = x$.
\item If there exists an admissible $\bm\pi'$ such that for all $x\in\mathbb{R}^d$, \eqref{eq:martingale with q function2eg-off} is an $(\{\f_t\}_{t\geq 0}, \p)$-martingale where ${X}_{0}^{\bm\pi'} = x$, then  $\hat V$, $\hat{J}$ and $\hat{q}$ are respectively the value, value function and the q-function associated with ${\bm\pi}$.
\end{enumerate}
Moreover, in any of the three cases above,
if it holds further that  ${\bm\pi}(a|x) = \frac{\exp\{  \frac{1}{\gamma}\hat{q}(x,a) \}}{\int_{{\cal A}} \exp\{  \frac{1}{\gamma}\hat{q}(x,a) \} \dd a }$, then ${\bm\pi} $ is the optimal policy and $\hat{V}$ is the optimal value.

\end{theorem}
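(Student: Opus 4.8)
The plan is to follow the blueprint of the episodic result Theorem~\ref{thm:qv learn}: turn each martingale statement into a statement about the drift of a continuous semimartingale via It\^o's formula, and then identify the triple $(\hat V,\hat J,\hat q)$ using the ergodic Feynman--Kac Lemma~\ref{lemma:ergodic feynmann-kac}. Applying It\^o to $\hat J(X_t^{\bm\pi})$ and using the definition \eqref{eq:H} of $H$ (so that $\frac{\partial\hat J}{\partial x}\circ b+\frac12\frac{\partial^2\hat J}{\partial x^2}\circ\sigma\sigma^\top = H\big(\cdot,\cdot,\frac{\partial\hat J}{\partial x},\frac{\partial^2\hat J}{\partial x^2}\big)-r$), the process in \eqref{eq:martingale with q function2eg} equals its initial value plus $\int_0^t\Phi(X_s^{\bm\pi},a_s^{\bm\pi})\,\dd s$ plus the stochastic integral $\int_0^t\frac{\partial\hat J}{\partial x}(X_s^{\bm\pi})^\top\sigma(X_s^{\bm\pi},a_s^{\bm\pi})\,\dd W_s$, where
\[ \Phi(x,a):=H\left(x,a,\tfrac{\partial\hat J}{\partial x}(x),\tfrac{\partial^2\hat J}{\partial x^2}(x)\right)-\hat q(x,a)-\hat V. \]
The polynomial growth of $\hat J$, Assumption~\ref{ass:dynamic}, and admissibility of the path-generating policy make the stochastic integral a true martingale on every finite interval and give $\E^{\p}[\int_0^t|\Phi(X_s^{\bm\pi},a_s^{\bm\pi})|\,\dd s]<\infty$; since a continuous finite-variation martingale started at $0$ vanishes identically, the process in \eqref{eq:martingale with q function2eg} is an $(\{\f_t\}_{t\ge0},\p)$-martingale if and only if $\Phi(X_s^{\bm\pi},a_s^{\bm\pi})=0$ for $\dd s\times\p$-a.e.\ $(s,\omega)$, and likewise for \eqref{eq:martingale with q function2eg-off}.

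For part (i), the ``only if'' direction is immediate: if $\hat V=V(\bm\pi)$, $\hat J$ is a value function of $\bm\pi$, and $\hat q$ is the associated q-function, then by definition $\Phi\equiv 0$ as a function and the process is a martingale. For ``if'', I would run the standard short-time localization argument: fixing the initial state $x$, from $X_s^{\bm\pi}\to x$ as $s\downarrow 0$ and the convergence of the conditional law of $a_s^{\bm\pi}$ to $\bm\pi(\cdot|x)$ one obtains $\frac1\varepsilon\int_0^\varepsilon\E^{\p}\big[|\Phi(X_s^{\bm\pi},a_s^{\bm\pi})|\wedge1\,\big|\,X_0^{\bm\pi}=x\big]\,\dd s\to\int_{\mathcal A}(|\Phi(x,a)|\wedge1)\bm\pi(a|x)\,\dd a=0$; since $\bm\pi(\cdot|x)$ has full support $\mathcal A$ and $\Phi(x,\cdot)$ is continuous, $\Phi(x,\cdot)\equiv0$, and as $x$ is arbitrary $\Phi\equiv0$ on $\mathbb R^d\times\mathcal A$. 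Integrating the resulting identity $H\big(x,a,\frac{\partial\hat J}{\partial x},\frac{\partial^2\hat J}{\partial x^2}\big)=\hat q(x,a)+\hat V$ against $\bm\pi(\cdot|x)$ and subtracting $\gamma\log\bm\pi(\cdot|x)$, the normalization in \eqref{eq:q hjb2eg} yields $\int_{\mathcal A}\big[H\big(x,a,\frac{\partial\hat J}{\partial x},\frac{\partial^2\hat J}{\partial x^2}\big)-\gamma\log\bm\pi(a|x)\big]\bm\pi(a|x)\,\dd a=\hat V$ for all $x$; that is, $(\hat J,\hat V)$ solves the ergodic Feynman--Kac equation \eqref{eq:relaxed control f-k formula ergodic}. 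Lemma~\ref{lemma:ergodic feynmann-kac} then forces $\hat V$ to equal the long-run average, i.e.\ $\hat V=V(\bm\pi)$, identifies $\hat J$ as a value function of $\bm\pi$, and hence $\hat q(x,a)=H\big(x,a,\frac{\partial\hat J}{\partial x},\frac{\partial^2\hat J}{\partial x^2}\big)-\hat V=q(x,a;\bm\pi)$ (only the derivatives of $\hat J$ enter, so the additive constant left free by the ergodic equation does not matter).

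For (ii) and (iii) I would exploit that $\Phi$ is a function of $(x,a)$ that does not reference the path-generating policy. Under the hypotheses of (ii) we have $\Phi\equiv0$, so for every admissible $\bm\pi'$ the process \eqref{eq:martingale with q function2eg-off} has zero drift and, by the same integrability bounds now for $X^{\bm\pi'}$, is a martingale. For (iii), running the localization argument along the $\bm\pi'$-trajectory (using that $\bm\pi'$ admissible implies full support) again gives $\Phi\equiv0$; then integrating against the \emph{target} density $\bm\pi(\cdot|x)$ and using the normalization in \eqref{eq:q hjb2eg} (which is phrased in terms of $\bm\pi$, not $\bm\pi'$) produces the ergodic Feynman--Kac equation for $\bm\pi$, and the conclusion follows as in (i). Finally, for the ``Moreover'' assertion, if in addition $\bm\pi(\cdot|x)\propto\exp\{\frac1\gamma\hat q(x,\cdot)\}$, then since $\hat q(x,a)=H\big(x,a,\frac{\partial\hat J}{\partial x},\frac{\partial^2\hat J}{\partial x^2}\big)-\hat V$ the policy $\bm\pi$ is the Gibbs measure generated by $\frac1\gamma H\big(x,\cdot,\frac{\partial\hat J}{\partial x}(x),\frac{\partial^2\hat J}{\partial x^2}(x)\big)$; substituting this into the Feynman--Kac equation and using that the Gibbs measure attains $\sup_{\pi\in\mathcal P(\mathcal A)}\int_{\mathcal A}[H-\gamma\log\pi]\pi\,\dd a=\gamma\log\int_{\mathcal A}e^{H/\gamma}\,\dd a$, one finds that $\hat J$ solves the ergodic exploratory HJB equation $\gamma\log\int_{\mathcal A}\exp\{\frac1\gamma H(x,a,\frac{\partial\hat J}{\partial x}(x),\frac{\partial^2\hat J}{\partial x^2}(x))\}\,\dd a=\hat V$ for all $x$; the corresponding ergodic verification theorem (the long-run-average analogue of the reasoning behind \eqref{eq:explore hjb simplified}) then identifies $\hat V$ with the optimal value and $\bm\pi$ with the optimal policy.

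The step I expect to be the main obstacle is the sufficiency direction, namely the passage from ``the drift vanishes $\dd s\times\p$-a.e.\ along trajectories started at every $x$'' to the \emph{pointwise} identity $\Phi\equiv0$. In the infinite-horizon ergodic setting this requires care: one must justify the short-time limit (continuity of the coefficients, convergence in law of $(X_s^{\bm\pi},a_s^{\bm\pi})$ as $s\downarrow0$, and uniform integrability coming from the polynomial-growth and moment conditions of Definition~\ref{ass:admissible} and Assumption~\ref{ass:dynamic}), and one must verify that the stochastic-integral part is genuinely a martingale on each finite interval, so that the semimartingale decomposition is unique and ``martingale'' truly forces ``zero drift''. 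Everything downstream of $\Phi\equiv0$ --- extracting the Feynman--Kac equation, invoking Lemma~\ref{lemma:ergodic feynmann-kac}, and reading off $\hat q$ --- is routine.
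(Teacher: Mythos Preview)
Your proposal is correct and follows precisely the blueprint the paper intends: the paper's own proof reads in full ``The proof is parallel to that of Theorems~\ref{thm:q rate function property} and~\ref{thm:qv learn}, which is omitted,'' and your It\^o-decomposition-plus-ergodic-Feynman--Kac route is exactly that parallel.

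One small methodological difference is worth flagging. For the crucial passage from ``$\Phi(X_s,a_s)=0$ for $\dd s\times\p$-a.e.\ $(s,\omega)$'' to ``$\Phi\equiv0$ on $\mathbb R^d\times\mathcal A$,'' you use a short-time averaging limit at the initial state, namely $\frac1\varepsilon\int_0^\varepsilon\E^{\p}[|\Phi(X_s,a_s)|\wedge1]\,\dd s\to\int_{\mathcal A}(|\Phi(x,a)|\wedge1)\bm\pi(a|x)\,\dd a$, and then invoke full support of $\bm\pi(\cdot|x)$ and continuity of $\Phi$. The paper, in its proof of Theorem~\ref{thm:q rate function property}, instead argues by contradiction: it fixes a point where $\Phi>\varepsilon$, localizes with the stopping time $\tau=\inf\{u:|X_u^{\bm\pi}-x^*|>\delta\}\wedge(t^*+\delta)$, uses Lebesgue differentiation to get $\Phi=0$ a.e.\ along each path, and then combines Fubini with the full-support condition to reach a contradiction. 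Both arguments hinge on the same three ingredients (continuity of $\Phi$, path continuity of $X$, and $\operatorname{supp}\bm\pi(\cdot|x)=\mathcal A$); yours is more direct, while the paper's avoids any appeal to convergence of laws at $s\downarrow0$ and works entirely with the realized trajectory. Everything else in your outline---extracting the ergodic Feynman--Kac equation from $\Phi\equiv0$ via the normalization constraint, invoking Lemma~\ref{lemma:ergodic feynmann-kac} to pin down $\hat V$, and the fixed-point/HJB reasoning for the ``Moreover'' part---matches the paper's template.
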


Based on Theorem \ref{thm:qv learn ergodic}, we can design corresponding q-learning algorithms for ergodic problems, in which we learn $V,J^{\theta}$ and $q^{\psi}$ at the same time. These algorithms have natural connections with SARSA as well as  the PG-based algorithms developed in \cite{jia2021policypg}. The discussions are similar to those  in Subsections \ref{subsec:connect with q} and \ref{subsec:connect with pg} and hence are omitted here. An online algorithm for ergodic tasks is presented as an example; see Algorithm \ref{algo:ergodic incremental}.

\begin{algorithm}[h]
\caption{q-Learning Algorithm for Ergodic Tasks}
\textbf{Inputs}: initial state $x_0$, time step $\Delta t$, initial learning rates $\alpha_{\theta},\alpha_{\psi},\alpha_{V}$ and learning rate schedule function $l(\cdot)$ (a function of time), functional forms of the parameterized value function $J^{\theta}(\cdot)$ and q-function $q^{\psi}(\cdot,\cdot)$ satisfying \eqref{eq:q hjb2eg}, functional forms of  test functions $\bm{\xi}(t,x_{\cdot \wedge t},a_{\cdot \wedge t})$, $\bm{\zeta}(t,x_{\cdot \wedge t},a_{\cdot \wedge t})$, and temperature parameter $\gamma$.

\textbf{Required program (on-policy)}: an environment simulator $(x',r) = \textit{Environment}_{\Delta t}(x,a)$ that takes initial state $x$ and action $a$ as inputs and generates a new state $x'$ at $\Delta t$ and an instantaneous reward $r$ as outputs. Policy $\bm\pi^{\psi}(a|x) = \frac{\exp\{  \frac{1}{\gamma}q^{\psi}(x,a) \}}{\int_{{\cal A}} \exp\{  \frac{1}{\gamma}q^{\psi}(x,a) \} \dd a }$.

\textbf{Required program (off-policy)}: observations $ \{a, r, x'\} = \textit{Observation}( x;\Delta t)$ including  the observed actions, rewards, and state when the current state is $x$ under the given behavior policy at the sampling time grids with step size $\Delta t$.

\textbf{Learning procedure}:
\begin{algorithmic}
\STATE Initialize $\theta,\psi,V$. Initialize $k = 0$. Observe the initial state $x_0$ and store $x_{t_k} \leftarrow  x_0$.

\LOOP \STATE{
\COMMENT{\textbf{On-policy case}

Generate action $a\sim \bm{\pi}^{\psi}(\cdot|x)$.

Apply $a$ to  environment simulator $(x',r) = Environment_{\Delta t}(x, a)$, and observe new state $x'$ and reward $r$ as outputs. Store $x_{t_{k+1}} \leftarrow x'$.

}

\COMMENT{\textbf{Off-policy case}

Obtain one observation $a_{t_k}, r_{t_k}, x_{t_{k+1}} = \textit{Observation}(x_{t_k};\Delta t)$.

}

Compute test functions $\xi_{t_k} = \bm{\xi}(t_k,x_{t_0},\cdots, x_{t_k},a_{t_0},\cdots, a_{t_k})$, $\zeta_{t_k} = \bm{\zeta}(t_k, x_{t_0},\cdots, x_{t_k},a_{t_0},\cdots, a_{t_k})$.		

Compute
\[\begin{aligned}
& \delta = J^{\theta}(x') - J^{\theta}(x) + r\Delta t -q^{\psi}(x,a)\Delta t - V \Delta t, \\
& \Delta \theta = \xi_{t_k} \delta, \\
& \Delta V = \delta, \\
& \Delta \psi =\zeta_{t_k} \delta.
\end{aligned}  \]

Update $\theta$, $V$ and $\psi$ by
\[ \theta \leftarrow \theta + l(k\Delta t)\alpha_{\theta} \Delta \theta,\]
\[ V \leftarrow V + l(k\Delta t)\alpha_{V} \Delta V,\]
\[ \psi \leftarrow \psi + l(k\Delta t)\alpha_{\psi}   \Delta \psi.  \]

Update $x\leftarrow x'$ and $k \leftarrow k + 1$.
}
\ENDLOOP	

\end{algorithmic}
\label{algo:ergodic incremental}
\end{algorithm}

\section{Applications}
\label{sec:applications}
\subsection{Mean--variance portfolio selection}
\label{sec:application mv}
We first review the formulation of the exploratory mean--variance portfolio selection problem, originally  proposed by \citet{wang2020continuous} and later revisited by
\citet{jia2021policypg}.\footnote{There is a vast literature on classical model-based (non-exploratory) continuous-time  mean--variance models formulated as stochastic control problems; see e.g.  \cite{zhou2000continuous,lim2002mean, zhou2003markowitz} and the references therein. } The investment universe consists of one risky asset (e.g., a stock index) and one risk-free asset (e.g., a saving account) whose risk-free interest rate is $r$. The price of the risky asset $\{S_t, 0\leq t \leq T\}$ is governed by a geometric Brownian motion with drift $\mu$ and volatility $\sigma>0$, defined on a filtered probability space $(\Omega,\f,\p^W;\{\f_t^W\}_{0\leq t\leq T})$.
An agent has a fixed investment horizon $0<T<\infty$ and an initial endowment $x_0$.
A self-financing portfolio is represented by the real-valued adapted process $a = \{a_t, 0\leq t \leq T\}$, where $a_t$ is the discounted dollar value invested in the risky asset at time $t$.
Then her discounted wealth process follows
\[ \dd X_t = a_t[(\mu-r)\dd t+\sigma\dd W_t] = a_t \frac{\dd (e^{-rt}S_t)}{e^{-rt} S_t},\;\; X_0 = x_0. \]

The agent aims to minimize the variance of the discounted value of the portfolio at time $T$ while maintaining the expected return to be a certain level; that is,
\begin{equation}
\label{eq:mv} \min_{a} \text{Var}(X_T^{a}),\ \mbox{subject to }\ \E[X_T^{a}] = z,
\end{equation}
where $z$ is the target expected return, and the variance and expectation throughout this subsection are with respect to the probability measure $\p^W$.

The exploratory formulation of this problem with entropy regularizer is equivalent to
\begin{equation}
\label{eq:entropy mv objective function}
J(t,x;w) = -\max_{\bm{\pi}} \E\bigg[-(\tilde{X}_T^{\bm{\pi}} - w)^2 - \gamma \int_t^T \log\bm\pi(a_s^{\bm\pi}|s,\tilde{X}^{\bm\pi})\dd s\Big| \tilde{X}_t^{\bm{\pi}} = x \bigg] + (w-z)^2,
\end{equation}
subject to
\[ \dd \tilde{X}^{\bm{\pi}}_s = (\mu-r)\int_{\mathbb{R}} a \bm{\pi}(a|s, \tilde{X}^{\bm{\pi}}_s)  \dd a \dd s + \sigma \sqrt{\int_{\mathbb{R}} a^2 \bm{\pi}(a|s, \tilde{X}^{\bm{\pi}}_s)  \dd a } \dd W_s;\;\tilde{X}_t^{\bm{\pi}} = x, \]
where $w$ is the Lagrange multiplier introduced to relax the expected return constraint; see \citet{wang2020continuous} for a derivation of this formulation.  Note here we artificially add the minus sign to transform the variance minimization to a  maximization problem to be consistent with our general formulation.

We now follow \cite{wang2020continuous} and \cite{jia2021policypg} to parameterize the value function by
\[ J^{\theta}(t,x;w) = (x - w)^2 e^{-\theta_3(T-t)} + \theta_2(t^2 - T^2) + \theta_1(t-T) - (w-z)^2.  \]
Moreover, we parameterize the q-function by
\[ q^{\psi}(t,x,a;w) = -\frac{e^{-\psi_1 - \psi_3(T-t)}}{2} \big(a + \psi_2 (x - w) \big)^2 - \frac{\gamma}{2}[\log2\pi\gamma + \psi_1 + \psi_3(T-t) ], \]
which is derived to satisfy the constraint \eqref{eq:constraints qv}, as explained in Example \ref{eg:linear quadratic}. The policy associated with this parametric q-function is $\bm\pi^{\psi}(\cdot|x;w) = \mathcal{N}(-\psi_2(x - w), \gamma e^{\psi_1 + \psi_3(T-t)})$. In addition to $\theta=(\theta_1,\theta_2,\theta_3)^\top$ and $\psi=(\psi_1,\psi_2,\psi_3)^\top$, we also learn the Lagrange multiplier $w$ by stochastic approximation in the same way as in \cite{wang2020continuous} and \cite{jia2021policypg}. The full algorithm is summarized in Algorithm \ref{algo:offline episodic mv}.\footnote{Here we present an offline algorithm as example. Online algorithms can also be devised following the general
study in the previous sections.}

\begin{algorithm}[htbp]
\caption{Offline--Episodic q-Learning Mean--Variance Algorithm}
\textbf{Inputs}: initial state $x_0$, horizon $T$, time step $\Delta t$, number of episodes $N$, number of time grids $K$, initial learning rates $\alpha_{\theta},\alpha_{\psi},\alpha_w$ and learning rate schedule function $l(\cdot)$ (a function of the number of episodes), and temperature parameter $\gamma$.

\textbf{Required program}: a market simulator $x' = \textit{Market}_{\Delta t}(t,x,a)$ that takes current time-state pair $(t,x)$ and action $a$ as inputs and generates state $x'$ at time $t+\Delta t$.

\textbf{Learning procedure}:
\begin{algorithmic}
\STATE Initialize $\theta,\psi,w$.
\FOR{episode $j=1$ \TO $N$} \STATE{Initialize $k = 0$. Observe the initial state $x$ and store $x_{t_k} \leftarrow  x$.
\WHILE{$k < K$} \STATE{
Generate action $a_{t_k}\sim \bm{\pi}^{\psi}(\cdot|t_k,x_{t_k};w)$.

Compute and store the test function $\xi_{t_k} = \frac{\partial J^{\theta}}{\partial \theta}(t_k,x_{t_k};w)$, $\zeta_{t_k} = \frac{\partial q^{\psi}}{\partial \psi}(t_k,x_{t_k},a_{t_k};w)$.

Apply $a_{t_k}$ to the market simulator $x = Market_{\Delta t}(t_k, x_{t_k}, a_{t_k})$, and observe the output new state $x$. Store $x_{t_{k+1}}$.

Update $k \leftarrow k + 1$.
}
\ENDWHILE	

Store the terminal wealth $X_T^{(j)} \leftarrow x_{t_K}$.

Compute
\[ \Delta \theta = \sum_{i=0}^{K-1} \xi_{t_i} \big[ J^{\theta}(t_{i+1},x_{t_{i+1}};w) - J^{\theta}(t_{i},x_{t_{i}};w)  -q^{\psi}(t_i,x_{t_i},a_{t_i};w)\Delta t  \big], \]
\[ \Delta \psi = \sum_{i=0}^{K-1} \zeta_{t_i} \big[ J^{\theta}(t_{i+1},x_{t_{i+1}};w) - J^{\theta}(t_{i},x_{t_{i}};w)  -q^{\psi}(t_i,x_{t_i},a_{t_i};w)\Delta t  \big]. \]

Update $\theta$ and $\psi$ by
\[ \theta \leftarrow \theta + l(j)\alpha_{\theta} \Delta \theta .\]
\[ \psi \leftarrow \psi + l(j)\alpha_{\psi} \Delta \psi .  \]

\textbf{Update $w$ (Lagrange multiplier) every $m$ episodes:}
\IF{$j \equiv 0 \mod m$} \STATE{
\[ w\leftarrow w - \alpha_w \frac{1}{m}\sum_{i=j-m+1}^{j} X_T^{(i)}.\]
}
\ENDIF
}
\ENDFOR
\end{algorithmic}
\label{algo:offline episodic mv}
\end{algorithm}

We then compare by simulations the performances of three learning algorithms: the q-learning based Algorithm \ref{algo:offline episodic mv} in this paper, the PG-based  Algorithm 4 in \cite{jia2021policypg}, and the $\Delta t$-parameterized Q-learning algorithm presented in Appendix B.
We conduct simulations with the following configurations: $\mu\in\{0,\pm 0.1,\pm 0.3,\pm 0.5\}$, $\sigma\in \{0.1,0.2,0.3,0.4\}$, $T = 1$, $x_0 = 1$, $z=1.4$. Other tuning parameters in all the algorithms are chosen as $\gamma=0.1$, $m=10$, $\alpha_{\theta} = \alpha_{\psi} = 0.001$, $\alpha_w = 0.005$, and $l(j) = \frac{1}{j^{0.51}}$. To have more realistic scenarios, we generate 20 years of training data and compare the three algorithms with the same dataset for $N=20,000$ episodes with a batch size 32. More precisely, 32 trajectories with length $T$ are drawn from the training set to update the parameters to be learned for $N$ times. After training completes, for each market configuration we apply the learned policy out-of-sample repeatedly for 100 times and compute the average mean, variance and Sharpe ratio for each algorithm. We are particularly interested in the impact of time discretization; so we experiment with three different time discretization steps: $\Delta t\in\{\frac{1}{25}, \frac{1}{250}, \frac{1}{2500}\}$. Note that all the algorithms rely on time discretization at the implementation stage, where $\Delta t$ is the sampling frequency required for execution or computation of numerical integral for PG and q-learning. For the Q-learning, $\Delta t$ plays a dual role: it is both the parameter in its definition and the time discretization size in implementation.

The numerical results are reported in Tables \ref{tab:mv 1-25} -- \ref{tab:mv 1-2500}, each table corresponding to a different value of $\Delta t$ . 
We observe that for any market configuration, the Q-learning is almost always  the worst performer in terms of all the metrics, and even divergent in certain high volatility and low return environment (when keeping the same learning rate as in the other cases). 
The other two algorithms have very close overall performances, although the q-learning tends to outperform in high volatile market environments. On the other hand, notably, the results change only slightly with different time discretizations, for {\it all} the three algorithms. This observation does not align with the claim in \cite{tallec2019making} that the classical Q-learning is very sensitive to time-discretization, at least for this specific application problem. In addition, we notice that q-learning and PG-based algorithm produce comparable results under most market scenarios, while the terminal variance produced by q-learning tends to be significantly higher than that by PG when $|\mu|$ is small or $\sigma$ is large.
In such cases, the ground truth solution requires higher leverage in risky assets in order to reach the high target expected return (40\% annual return).
However, it appears that overall q-learning strives to learn to keep up with the high target return and as a result ends up with high terminal variance in those cases.
By contrast, the PG tends to maintain lower variance at the cost of significantly underachieving the target return.

\begin{table}[htbp]
\centering
\caption{\textbf{Out-of-sample performance comparison in terms of mean, variance and Sharpe ratio when data are generated by geometric Brownian motion with $\Delta t = \frac{1}{25}$.} We compare three algorithms: ``Policy Gradient" described in
Algorithm 4  in \cite{jia2021policypg}, ``Q-Learning" described  in Appendix B, and ``q-Learning" described in Algorithm \ref{algo:offline episodic mv}.
The first two columns specify market configurations (parameters of the geometric Brownian motion simulated). ``Mean'', ``Variance'', and ``SR'' refer respectively to the average mean, variance, and  Sharpe ratio of the terminal wealth under  the corresponding learned policy over 100 independent runs.  We highlight the largest average Sharpe ratio in \textbf{bold}. ``NA" refers to the case where the algorithm diverges. The discretization size is $\Delta t = \frac{1}{25}$. }
\begin{tabular}{ccccccccccc}
\toprule
\multirow{2}{*}{$\mu$}    & \multirow{2}{*}{$\sigma$}  & \multicolumn{3}{c}{Policy Gradient} & \multicolumn{3}{c}{Q-Learning}   & \multicolumn{3}{c}{q-Learning}   \\
\cline{3-11}
&  & Mean    & Variance      &  SR     &  Mean    & Variance      &   SR     & Mean    &   Variance    & SR  \\
\midrule
-0.5  & 0.1   & 1.409 & 0.003 & \textbf{7.134} & 1.409 & 0.004 & 6.663 & 1.408 & 0.004 & 6.805 \\
-0.3  & 0.1   & 1.421 & 0.012 & \textbf{3.880} & 1.409 & 0.012 & 3.707 & 1.412 & 0.012 & 3.762 \\
-0.1  & 0.1   & 1.382 & 0.094 & \textbf{1.285} & 1.327 & 0.070 & 1.263 & 1.351 & 0.081 & 1.272 \\
0.0   & 0.1   & 1.090 & 0.216 & 0.193 & 1.098 & 0.302 & 0.201 & 1.108 & 0.321 & \textbf{0.202} \\
0.1   & 0.1   & 1.312 & 0.147 & \textbf{0.835} & 1.265 & 0.108 & 0.827 & 1.292 & 0.133 & 0.831 \\
0.3   & 0.1   & 1.420 & 0.016 & \textbf{3.312} & 1.402 & 0.016 & 3.181 & 1.407 & 0.016 & 3.224 \\
0.5   & 0.1   & 1.405 & 0.004 & \textbf{6.422} & 1.403 & 0.005 & 6.026 & 1.403 & 0.004 & 6.147 \\
-0.5  & 0.2   & 1.417 & 0.014 & \textbf{3.526} & 1.416 & 0.016 & 3.308 & 1.416 & 0.016 & 3.377 \\
-0.3  & 0.2   & 1.445 & 0.060 & \textbf{1.915} & 1.428 & 0.059 & 1.841 & 1.434 & 0.059 & 1.867 \\
-0.1  & 0.2   & 1.335 & 0.341 & 0.608 & 1.402 & 0.801 & 0.628 & 1.405 & 0.589 & \textbf{0.631} \\
0.0   & 0.2   & 1.053 & 0.539 & 0.070 & 1.168 & 7.995 & 0.100 & 1.136 & 3.422 & \textbf{0.100} \\
0.1   & 0.2   & 1.248 & 0.437 & 0.380 & 1.349 & 2.046 & 0.411 & 1.341 & 1.005 & \textbf{0.412} \\
0.3   & 0.2   & 1.445 & 0.083 & \textbf{1.634} & 1.422 & 0.078 & 1.580 & 1.431 & 0.081 & 1.600 \\
0.5   & 0.2   & 1.413 & 0.018 & \textbf{3.173} & 1.412 & 0.020 & 2.992 & 1.411 & 0.019 & 3.050 \\
-0.5  & 0.3   & 1.433 & 0.039 & \textbf{2.305} & 1.432 & 0.043 & 2.180 & 1.432 & 0.041 & 2.223 \\
-0.3  & 0.3   & 1.458 & 0.162 & \textbf{1.247} & 1.468 & 0.196 & 1.214 & 1.484 & 0.217 & 1.229 \\
-0.1  & 0.3   & 1.226 & 0.569 & 0.335 & 1.624 & 8.959 & 0.415 & 1.556 & 5.558 & \textbf{0.415} \\
0.0   & 0.3   & 1.030 & 0.777 & 0.036 & NA    & NA    & NA    & 1.187 & 23.269 & \textbf{0.066} \\
0.1   & 0.3   & 1.151 & 0.670 & 0.202 & 1.539 & 13.741 & 0.271 & 1.482 & 13.321 & \textbf{0.271} \\
0.3   & 0.3   & 1.448 & 0.231 & 1.046 & 1.475 & 0.322 & 1.042 & 1.485 & 0.313 & \textbf{1.053} \\
0.5   & 0.3   & 1.431 & 0.048 & \textbf{2.073} & 1.428 & 0.052 & 1.972 & 1.429 & 0.050 & 2.008 \\
-0.5  & 0.4   & 1.457 & 0.093 & \textbf{1.679} & 1.460 & 0.102 & 1.608 & 1.462 & 0.102 & 1.638 \\
-0.3  & 0.4   & 1.417 & 0.346 & 0.877 & 1.670 & 2.363 & 0.898 & 1.602 & 1.176 & \textbf{0.905} \\
-0.1  & 0.4   & 1.147 & 0.805 & 0.205 & NA    & NA    & NA    & 1.711 & 20.852 & \textbf{0.306} \\
0.0   & 0.4   & 1.017 & 0.893 & 0.016 & NA    & NA    & NA    & 1.211 & 76.040 & \textbf{0.048} \\
0.1   & 0.4   & 1.084 & 0.825 & 0.109 & NA    & NA    & NA    & 1.539 & 25.371 & \textbf{0.196} \\
0.3   & 0.4   & 1.408 & 0.379 & 0.747 & NA    & NA    & NA    & 1.609 & 1.992 & \textbf{0.776} \\
0.5   & 0.4   & 1.446 & 0.102 & \textbf{1.510} & 1.456 & 0.125 & 1.455 & 1.461 & 0.128 & 1.479 \\
\bottomrule
\end{tabular}%
\label{tab:mv 1-25}%
\end{table}%

\begin{table}[htbp]
\centering
\caption{\textbf{Out-of-sample performance comparison in terms of mean, variance and Sharpe ratio when data are generated by geometric Brownian motion with $\Delta t = \frac{1}{250}$.} We compare three algorithms: ``Policy Gradient" described in
Algorithm 4  in \cite{jia2021policypg}, ``Q-Learning" described  in Appendix B, and ``q-Learning" described in Algorithm \ref{algo:offline episodic mv}.
The first two columns specify market configurations (parameters of the geometric Brownian motion simulated). ``Mean'', ``Variance'', and ``SR'' refer respectively to the average mean, variance, and  Sharpe ratio of the terminal wealth under  the corresponding learned policy over 100 independent runs.  We highlight the largest average Sharpe ratio in \textbf{bold}. ``NA" refers to the case where the algorithm diverges. The discretization size is $\Delta t = \frac{1}{250}$. }
\begin{tabular}{ccccccccccc}
\toprule
\multirow{2}{*}{$\mu$}    & \multirow{2}{*}{$\sigma$}  & \multicolumn{3}{c}{Policy Gradient} & \multicolumn{3}{c}{Q-Learning}   & \multicolumn{3}{c}{q-Learning}   \\
\cline{3-11}
&  & Mean    & Variance      &  SR     &  Mean    & Variance      &   SR     & Mean    &   Variance    & SR  \\
\midrule
-0.5  & 0.1   & 1.409 & 0.003 & \textbf{7.130} & 1.407 & 0.004 & 6.671 & 1.407 & 0.004 & 6.805 \\
-0.3  & 0.1   & 1.419 & 0.012 & \textbf{3.878} & 1.407 & 0.012 & 3.710 & 1.410 & 0.012 & 3.762 \\
-0.1  & 0.1   & 1.380 & 0.092 & \textbf{1.285} & 1.325 & 0.069 & 1.264 & 1.347 & 0.079 & 1.272 \\
0     & 0.1   & 1.092 & 0.218 & \textbf{0.201} & 1.096 & 0.253 & 0.201 & 1.106 & 0.306 & 0.202 \\
0.1   & 0.1   & 1.314 & 0.148 & \textbf{0.835} & 1.267 & 0.109 & 0.827 & 1.293 & 0.133 & 0.831 \\
0.3   & 0.1   & 1.424 & 0.017 & \textbf{3.311} & 1.406 & 0.017 & 3.183 & 1.411 & 0.017 & 3.224 \\
0.5   & 0.1   & 1.411 & 0.004 & \textbf{6.421} & 1.409 & 0.005 & 6.032 & 1.409 & 0.004 & 6.147 \\
-0.5  & 0.2   & 1.415 & 0.014 & \textbf{3.524} & 1.414 & 0.016 & 3.312 & 1.413 & 0.015 & 3.377 \\
-0.3  & 0.2   & 1.439 & 0.057 & \textbf{1.915} & 1.423 & 0.056 & 1.843 & 1.428 & 0.056 & 1.867 \\
-0.1  & 0.2   & 1.343 & 0.329 & 0.620 & 1.369 & 0.460 & 0.628 & 1.390 & 0.493 & \textbf{0.631} \\
0     & 0.2   & 1.054 & 0.546 & 0.070 & 1.173 & 9.036 & 0.100 & 1.128 & 2.519 & \textbf{0.100} \\
0.1   & 0.2   & 1.241 & 0.413 & 0.372 & 1.350 & 1.356 & 0.411 & 1.340 & 0.925 & \textbf{0.412} \\
0.3   & 0.2   & 1.450 & 0.084 & \textbf{1.634} & 1.426 & 0.079 & 1.581 & 1.435 & 0.081 & 1.600 \\
0.5   & 0.2   & 1.420 & 0.018 & \textbf{3.173} & 1.418 & 0.020 & 2.995 & 1.418 & 0.019 & 3.050 \\
-0.5  & 0.3   & 1.428 & 0.037 & \textbf{2.304} & 1.426 & 0.041 & 2.183 & 1.426 & 0.039 & 2.223 \\
-0.3  & 0.3   & 1.467 & 0.171 & \textbf{1.248} & 1.454 & 0.169 & 1.215 & 1.466 & 0.178 & 1.229 \\
-0.1  & 0.3   & 1.230 & 0.552 & 0.326 & 1.661 & 10.321 & 0.415 & 1.495 & 2.852 & \textbf{0.415} \\
0     & 0.3   & 1.031 & 0.845 & 0.036 & 1.236 & 38.868 & 0.066 & 1.184 & 21.710 & \textbf{0.066} \\
0.1   & 0.3   & 1.142 & 0.716 & 0.181 & 1.541 & 16.335 & 0.266 & 1.456 & 6.433 & \textbf{0.271} \\
0.3   & 0.3   & 1.466 & 0.230 & \textbf{1.064} & 1.470 & 0.260 & 1.043 & 1.486 & 0.281 & 1.053 \\
0.5   & 0.3   & 1.438 & 0.049 & \textbf{2.073} & 1.434 & 0.053 & 1.974 & 1.435 & 0.051 & 2.008 \\
-0.5  & 0.4   & 1.451 & 0.085 & \textbf{1.679} & 1.448 & 0.090 & 1.611 & 1.449 & 0.087 & 1.638 \\
-0.3  & 0.4   & 1.424 & 0.281 & 0.890 & 1.535 & 0.638 & 0.898 & 1.544 & 0.612 & \textbf{0.905} \\
-0.1  & 0.4   & 1.153 & 0.777 & 0.192 & NA    & NA    & NA    & 1.747 & 22.299 & \textbf{0.306} \\
0     & 0.4   & 1.019 & 1.000 & 0.021 & NA    & NA    & NA    & 1.233 & 77.875 & \textbf{0.048} \\
0.1   & 0.4   & 1.091 & 1.006 & 0.110 & NA    & NA    & NA    & 1.696 & 48.274 & \textbf{0.200} \\
0.3   & 0.4   & 1.400 & 0.384 & 0.704 & 1.685 & 2.737 & 0.771 & 1.595 & 1.189 & \textbf{0.776} \\
0.5   & 0.4   & 1.463 & 0.114 & \textbf{1.510} & 1.463 & 0.122 & 1.457 & 1.466 & 0.121 & 1.479 \\
\bottomrule
\end{tabular}%
\label{tab:mv 1-250}%
\end{table}%

\begin{table}[htbp]
\centering
\caption{\textbf{Out-of-sample performance comparison in terms of mean, variance and Sharpe ratio when data are generated by geometric Brownian motion with $\Delta t = \frac{1}{2500}$.} We compare three algorithms: ``Policy Gradient" described in
Algorithm 4  in \cite{jia2021policypg}, ``Q-Learning" described  in Appendix B, and ``q-Learning" described in Algorithm \ref{algo:offline episodic mv}.
The first two columns specify market configurations (parameters of the geometric Brownian motion simulated). ``Mean'', ``Variance'', and ``SR'' refer respectively to the average mean, variance, and Sharpe ratio of the terminal wealth under  the corresponding learned policy over 100 independent runs.  We highlight the largest average Sharpe ratio in \textbf{bold}. ``NA" refers to the case where the algorithm diverges. The discretization size is $\Delta t = \frac{1}{2500}$.}
\begin{tabular}{ccccccccccc}
\toprule
\multirow{2}{*}{$\mu$}    & \multirow{2}{*}{$\sigma$}  & \multicolumn{3}{c}{Policy Gradient} & \multicolumn{3}{c}{Q-Learning}   & \multicolumn{3}{c}{q-Learning}   \\
\cline{3-11}
&  & Mean    & Variance      &  SR     &  Mean    & Variance      &   SR     & Mean    &   Variance    & SR  \\
\midrule
-0.5  & 0.1   & 1.408 & 0.003 & \textbf{7.130} & 1.406 & 0.004 & 6.677 & 1.406 & 0.004 & 6.805 \\
-0.3  & 0.1   & 1.418 & 0.012 & \textbf{3.878} & 1.406 & 0.012 & 3.713 & 1.409 & 0.012 & 3.762 \\
-0.1  & 0.1   & 1.376 & 0.090 & \textbf{1.285} & 1.324 & 0.069 & 1.264 & 1.345 & 0.077 & 1.272 \\
0.0   & 0.1   & 1.092 & 0.218 & 0.201 & 1.096 & 0.259 & 0.201 & 1.105 & 0.300 & \textbf{0.202} \\
0.1   & 0.1   & 1.316 & 0.149 & \textbf{0.835} & 1.269 & 0.111 & 0.827 & 1.294 & 0.133 & 0.831 \\
0.3   & 0.1   & 1.425 & 0.017 & \textbf{3.311} & 1.407 & 0.017 & 3.185 & 1.412 & 0.017 & 3.224 \\
0.5   & 0.1   & 1.412 & 0.004 & \textbf{6.420} & 1.410 & 0.005 & 6.037 & 1.410 & 0.004 & 6.147 \\
-0.5  & 0.2   & 1.413 & 0.014 & \textbf{3.524} & 1.411 & 0.016 & 3.315 & 1.411 & 0.015 & 3.377 \\
-0.3  & 0.2   & 1.435 & 0.056 & \textbf{1.915} & 1.419 & 0.055 & 1.844 & 1.424 & 0.056 & 1.867 \\
-0.1  & 0.2   & 1.328 & 0.295 & \textbf{0.632} & 1.366 & 0.455 & 0.629 & 1.384 & 0.486 & 0.631 \\
0.0   & 0.2   & 1.059 & 0.610 & 0.074 & 1.174 & 9.894 & 0.100 & 1.125 & 2.521 & \textbf{0.100} \\
0.1   & 0.2   & 1.254 & 0.468 & 0.380 & 1.360 & 1.867 & 0.411 & 1.340 & 0.952 & \textbf{0.412} \\
0.3   & 0.2   & 1.450 & 0.084 & \textbf{1.634} & 1.428 & 0.080 & 1.582 & 1.436 & 0.081 & 1.600 \\
0.5   & 0.2   & 1.422 & 0.018 & \textbf{3.173} & 1.419 & 0.020 & 2.998 & 1.419 & 0.019 & 3.050 \\
-0.5  & 0.3   & 1.425 & 0.036 & \textbf{2.304} & 1.422 & 0.040 & 2.185 & 1.422 & 0.039 & 2.223 \\
-0.3  & 0.3   & 1.459 & 0.169 & \textbf{1.248} & 1.449 & 0.167 & 1.216 & 1.459 & 0.177 & 1.229 \\
-0.1  & 0.3   & 1.225 & 0.597 & 0.327 & 1.690 & 11.223 & 0.415 & 1.487 & 2.943 & \textbf{0.415} \\
0.0   & 0.3   & 1.039 & 0.857 & 0.044 & 1.206 & 36.278 & 0.066 & 1.183 & 23.798 & \textbf{0.066} \\
0.1   & 0.3   & 1.169 & 0.745 & 0.202 & 1.457 & 14.390 & 0.266 & 1.462 & 8.590 & \textbf{0.271} \\
0.3   & 0.3   & 1.455 & 0.215 & \textbf{1.064} & 1.488 & 0.356 & 1.044 & 1.486 & 0.296 & 1.053 \\
0.5   & 0.3   & 1.438 & 0.049 & \textbf{2.073} & 1.435 & 0.053 & 1.975 & 1.436 & 0.051 & 2.008 \\
-0.5  & 0.4   & 1.446 & 0.084 & \textbf{1.679} & 1.443 & 0.089 & 1.612 & 1.444 & 0.086 & 1.638 \\
-0.3  & 0.4   & 1.395 & 0.231 & 0.873 & 1.529 & 0.619 & 0.899 & 1.537 & 0.622 & \textbf{0.905} \\
-0.1  & 0.4   & 1.175 & 0.846 & 0.229 & NA    & NA    & NA    & 1.754 & 24.745 & \textbf{0.306} \\
0.0   & 0.4   & 1.026 & 0.967 & 0.025 & NA    & NA    & NA    & 1.170 & 1.66E+09 & \textbf{0.048} \\
0.1   & 0.4   & 1.110 & 0.973 & 0.129 & NA    & NA    & NA    & 1.661 & 59.872 & \textbf{0.200} \\
0.3   & 0.4   & 1.402 & 0.416 & 0.720 & 1.646 & 3.402 & 0.771 & 1.607 & 1.743 & \textbf{0.776} \\
0.5   & 0.4   & 1.456 & 0.108 & \textbf{1.510} & 1.469 & 0.135 & 1.458 & 1.467 & 0.128 & 1.479 \\
\bottomrule
\end{tabular}%
\label{tab:mv 1-2500}%
\end{table}%

\subsection{Ergodic linear--quadratic control}
\label{sec:application lq ergodic}	
Consider the ergodic LQ control problem where state responds to actions in a linear way
\[ \dd X_t = (AX_t + Ba_t)\dd t + (CX_t + D a_t)\dd W_t,\ X_0 = x_0, \]
and the goal is  to maximize the long term average quadratic payoff
\[ \liminf_{T\to \infty}\frac{1}{T}\E\left[\int_0^T r(X_t,a_t)\dd t | X_0 = x_0 \right], \]
with $r(x,a) = -(\frac{M}{2}x^2 + Rxa + \frac{N}{2}a^2 + Px + Qa)$.

The exploratory formulation of this problem with entropy regularizer is equivalent to
\[ \liminf_{T\to \infty}\frac{1}{T}\E\bigg[ \int_0^T  r({X}_t^{\bm{\pi}},a_t^{\bm\pi}) \dd t - \gamma \log\bm{\pi}(a_t^{\bm\pi}|{X}_t^{\bm{\pi}}) \dd t \Big| {X}_0^{\bm{\pi}} = x_0\bigg] .  \]

\begin{figure}[h]
\centering
\includegraphics[width=0.75\textwidth]{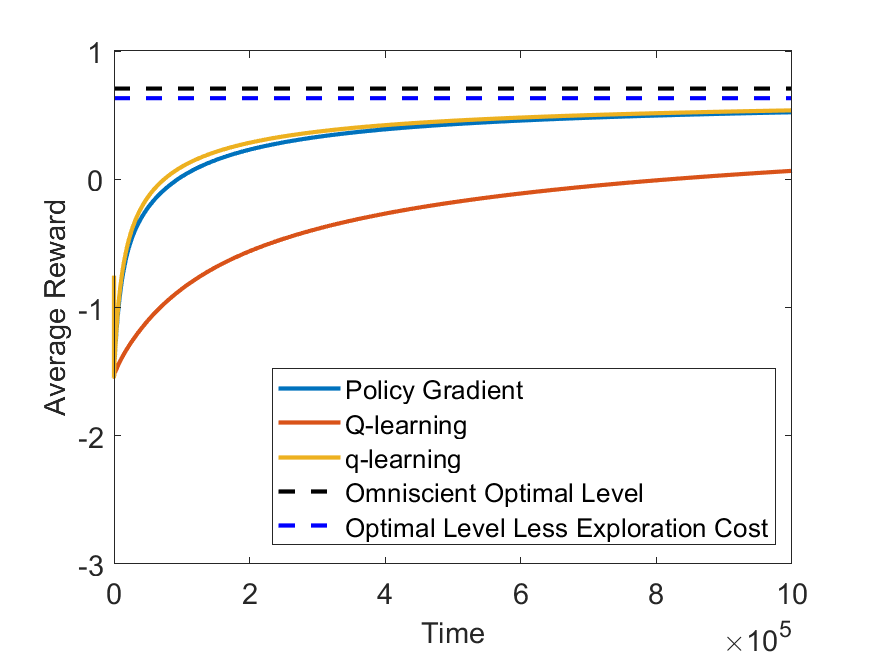}
\caption{\textbf{Running average rewards of three RL algorithms.} A single state trajectory is generated with length $T = 10^6$ and discretized at $\Delta t=0.1$ to which three  online algorithms apply: ``Policy Gradient" described in
Algorithm 3  in \cite{jia2021policypg}, ``Q-Learning" described  in Appendix B, and ``q-Learning" described in Algorithm \ref{algo:ergodic incremental}.
We repeat the experiments for 100 times for each method and plot the average reward over time with the shaded area indicating standard deviation. Two dashed horizontal lines are respectively the omniscient optimal average reward without exploration when the model parameters are known and the omniscient optimal average reward less the exploration cost. }
\label{fig:ergodic lq}
\end{figure}

\begin{figure}[h]
\centering
\begin{subfigure}{0.32\textwidth}
\centering
\includegraphics[width = 1\textwidth]{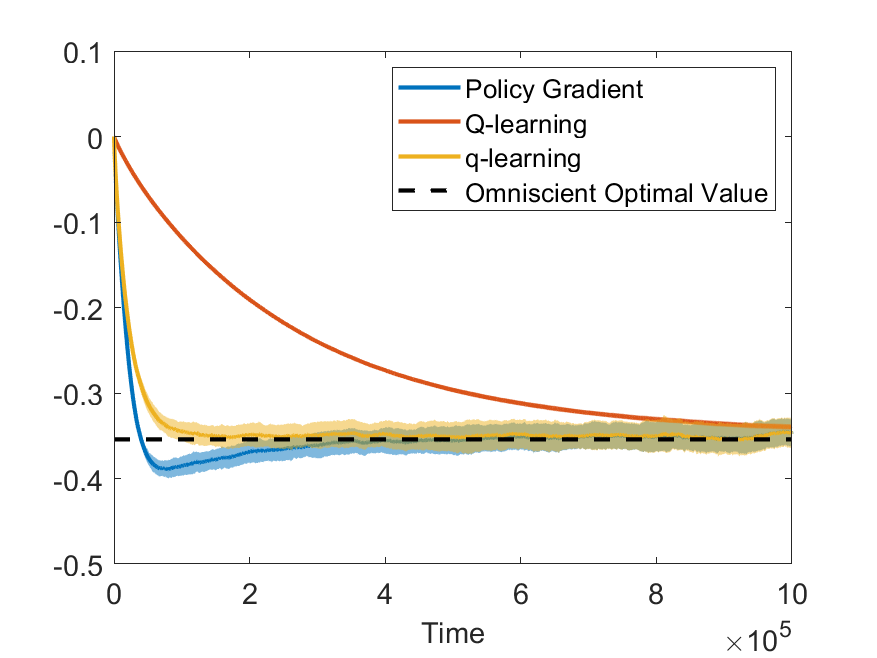}
\caption{The path of learned $\psi_1$.}
\end{subfigure}
\begin{subfigure}{0.32\textwidth}
\centering
\includegraphics[width = 1\textwidth]{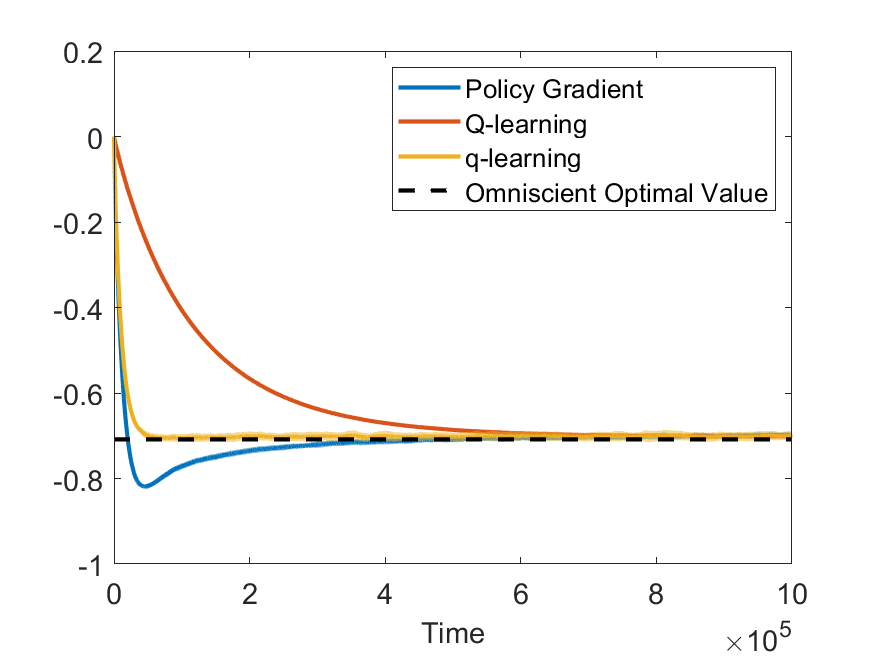}
\caption{The path of learned $\psi_2$.}
\end{subfigure}
\begin{subfigure}{0.32\textwidth}
\centering
\includegraphics[width = 1\textwidth]{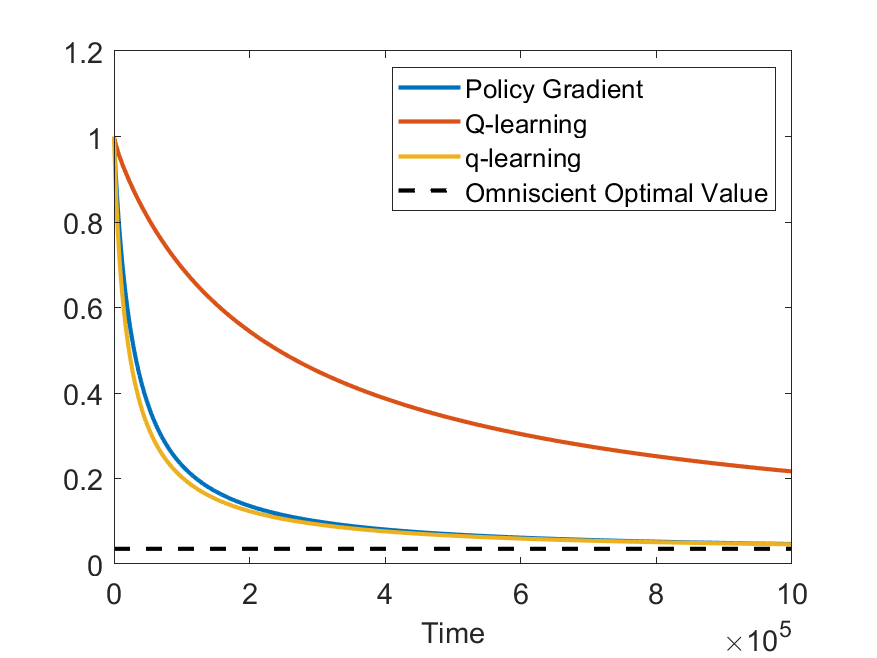}
\caption{The path of learned $e^{\psi_3}$.}
\end{subfigure}
\caption{\textbf{Paths of learned parameters of three RL algorithms.} A single state trajectory is generated with length $T = 10^6$ and discretized at $\Delta t=0.1$ to which three  online algorithms apply: ``Policy Gradient" described in
Algorithm 3  in \cite{jia2021policypg}, ``Q-Learning" described  in Appendix B, and ``q-Learning" described in Algorithm \ref{algo:ergodic incremental}. All the policies are restricted to be in the parametric form of $\bm\pi^{\psi}(\cdot|x) = \mathcal{N}(\psi_1 x + \psi_2, e^{\psi_3})$. The omniscient optimal policy is $\psi_1^*\approx-0.354$, $\psi_2^*\approx-0.708,e^{\psi_3^*}\approx 0.035$, shown in the dashed line. We repeat the experiments for 100 times for each method and plot as the shaded area the standard deviation of the learned parameters. The width of each shaded area is twice the corresponding standard deviation.}

\label{fig:lq parameter}
\end{figure}

\begin{figure}[h]
\centering
\begin{subfigure}{0.32\textwidth}
\centering
\includegraphics[width = 1\textwidth]{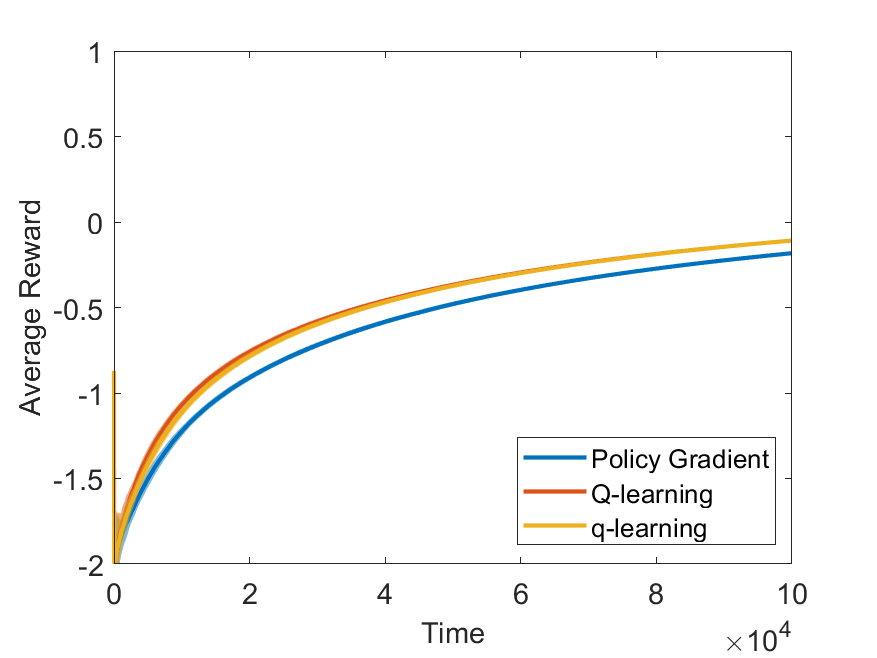}
\caption{$\Delta t= 1$}
\end{subfigure}
\begin{subfigure}{0.32\textwidth}
\centering
\includegraphics[width = 1\textwidth]{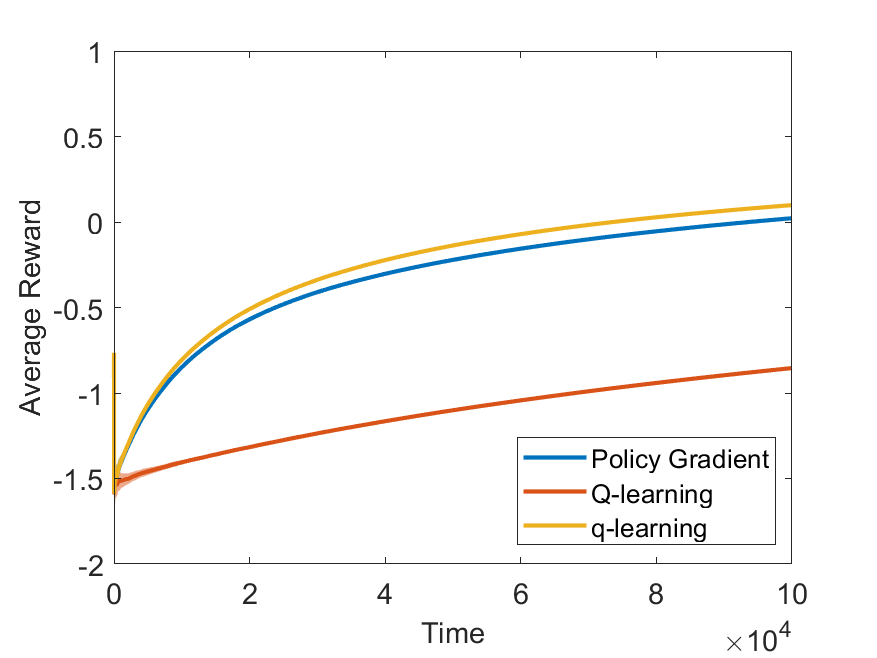}
\caption{$\Delta t= 0.1$}
\end{subfigure}
\begin{subfigure}{0.32\textwidth}
\centering
\includegraphics[width = 1\textwidth]{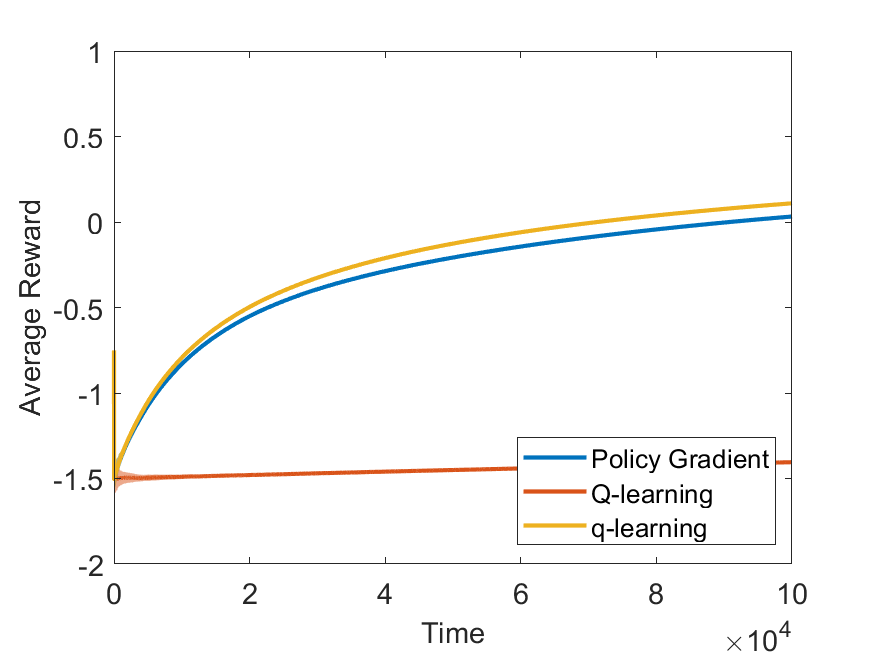}
\caption{$\Delta t= 0.01$}
\end{subfigure}
\caption{\textbf{Running average rewards of three RL algorithms with different time discretization sizes.} A single state trajectory is generated with length $T = 10^5$ and discretized at different step sizes: $\Delta t=1$ in (a), $\Delta t=0.1$ in (b), and $\Delta t=0.01$ in (c). For each step size, we apply  three  online algorithms: ``Policy Gradient" described in
Algorithm 3  in \cite{jia2021policypg}, ``Q-Learning" described  in Appendix B, and ``q-Learning" described in Algorithm \ref{algo:ergodic incremental}.
We repeat the experiments for 100 times for each method and plot the average reward over time with the shaded area indicating standard deviation. }
\label{fig:lq time scales}
\end{figure}

This problem falls into the general formulation of an ergodic task studied in Section \ref{sec:extension}; so we directly implement Algorithm \ref{algo:ergodic incremental} in our simulation. In particular, our function approximators for $J$ and $q$ are quadratic functions:
\[J^{\theta}(x) = \theta_1 x^2 + \theta_2 x,\;\;\;q^{\psi}(x,a) = -\frac{e^{-\psi_3}}{2}(a - \psi_1 x - \psi_2)^2 - \frac{\gamma}{2}(\log2\pi\gamma + \psi_3 ) ,\]
where the form of the q-function is derived to satisfy the constraint \eqref{eq:constraints qv}, as discussed in Example \ref{eg:linear quadratic}. The policy associated with this parameterized q-function is $\bm\pi^{\psi}(\cdot|x) = \mathcal{N}(\psi_1 x + \psi_2, \gamma e^{\psi_3})$. In addition, we have another parameter $V$ that stands for the long-term average.

We then compare our online q-learning algorithm against two theoretical benchmarks and two other online algorithms, in terms of the running average reward during the learning process. The first benchmark is the omniscient optimal level, namely, the maximum long term average reward that can be achieved with perfect knowledge about the environment and reward (and hence exploration is unnecessary and only deterministic policies are considered). The second benchmark is the omniscient optimal level less the exploration cost, which is the maximum long term average reward that can be achieved by the agent who  is however forced to explore under entropy regularization. The other two algorithms are the PG-based ergodic algorithm proposed in \cite[Algorithm 3]{jia2021policypg} and the $\Delta t$-based  Q-learning algorithm presented in Appendix B. 

In our simulation, to ensure the stationarity of the controlled state process, we set $A = -1,B = C= 0$ and $D = 1$. 
Moreover, we set $x_0=0$,  $M = N=Q =2$, $R=P =1$, and  $\gamma = 0.1$. Learning rates for all algorithms are initialized as $\alpha_{\psi} =0.001$, and decay according to $l(t) = \frac{1}{\max\{1,\sqrt{\log t} \}}$. We also repeat the experiment for 100 times under different seeds to generate samples.

First, we fix $\Delta t = 0.1$ and run the three learning algorithms for sufficiently long time $T=10^6$. All the parameters to be learned are initialized as 0 except for the initial variance of the policies which is set to be 1. Figure \ref{fig:ergodic lq} plots  the running average reward trajectories, along with their standard deviations (which are all too small to be visible in the figure), as the learning process proceeds. Among the three methods, both the PG and q-learning algorithms perform significantly better and converge to the optimal level much faster than the Q-learning.   The q-learning algorithm is indeed slightly better than the PG one. Figure \ref{fig:lq parameter} shows the iterated values of the leaned parameters for the three algorithms. Eminently, the Q-learning has the slowest  convergence. The other two algorithms, while converging at similar speed eventually, exhibit quite different behaviors on their ways to  convergence in learning $\psi_1$ and $\psi_2$.  The PG seems to learn these parameters faster than the q-learning at the initial stage,  but then quickly overshoot the optimal level and take a long time to correct it, while the q-learning appears much smoother and does not have such an issue. 

At last, we vary the $\Delta t$ value to examine its impact on the performances of the three algorithms. We vary $\Delta t\in\{1,0.1,0.01\}$ and set $T=10^5$ for each experiment. The learning rates and parameter initializations are fixed and the same for all the there methods. We repeat the experiment for 100 times and present the results in terms of the average running reward in Figure \ref{fig:lq time scales}. We use the shaded area to denote the standard deviation. It is clear that the  Q-learning algorithm is very  sensitive to $\Delta t$. In particular, its performance worsens significantly as $\Delta t$  becomes smaller. When $\Delta t= 0.01$,
the algorithm almost has no improvement at all over time. This drawback of sensitivity in $\Delta t$  is consistent with the observations made in \cite{tallec2019making}. On the other hand, the other two algorithms show remarkable robustness across different discretization sizes.

\subsection{Off-policy ergodic linear--quadratic control}
\label{sec:application lq ergodic offpolicy}	
The experiments reported  in Subsection \ref{sec:application lq ergodic} are for  on-policy learning.
In this subsection we revisit the problem but assuming that we now have to work off-policy. Specifically,  we have a sufficiently long observation of the trajectories of state, action and reward generated under a behavior policy that is not optimal. In our experiment, we take the behavior policy to be $a_t\sim \mathcal{N}(0, 1)$.

We still examine the three algorithms: ``Policy Gradient" described in
Algorithm 3  in \cite{jia2021policypg}, ``Q-Learning" described  in Appendix B, and ``q-Learning" described in Algorithm \ref{algo:ergodic incremental}, with different time discretization sizes. Because  off-policy learning  is often necessitated  in cases where online interactions are costly \citep{uehara2022review}, we restrict ourselves to an {\it offline} dataset generated under the behavior policy. We vary $\Delta t\in\{1,0.1,0.01\}$ and set $T=10^6$ for each experiment. We repeat the experiments for 100 times whose results are presented in Figure \ref{fig:lq off policy}. Because it is generally hard or impossible to implement the learned policy and observe the resulting state and reward data  in an off-policy setting, we  focus on the learned {\it parameters} of the policy to see if they converge to the desired optimal value. The horizontal axis in Figure \ref{fig:lq off policy} stands for the number of iterations that are used to update the policy parameters.

In general, policy gradient methods are not applicable in the off-policy setting. This  is confirmed by our experiments which show that the corresponding algorithm diverges quickly in all the cases. Q-learning still suffers from the sensitivity with respect to time discretization. The q-learning algorithm is the most stable one and converges in all scenarios. It is worth noting that in Figure \ref{fig:lq off policy}-(a),  the Q-learning and q-learning algorithms converge to the same limits, which are however different from  the true optimal parameter values of the continuous-time problem. This is because the theoretical integrals involved in the value functions are not well approximated by finite sums with coarse time discretization ($\Delta t=1$); hence the corresponding martingale conditions under the learned parameters are not  close to the theoretical continuous-time martingale conditions.

\begin{figure}[h]
\centering
\begin{subfigure}{0.32\textwidth}
\centering
\includegraphics[width = 1\textwidth]{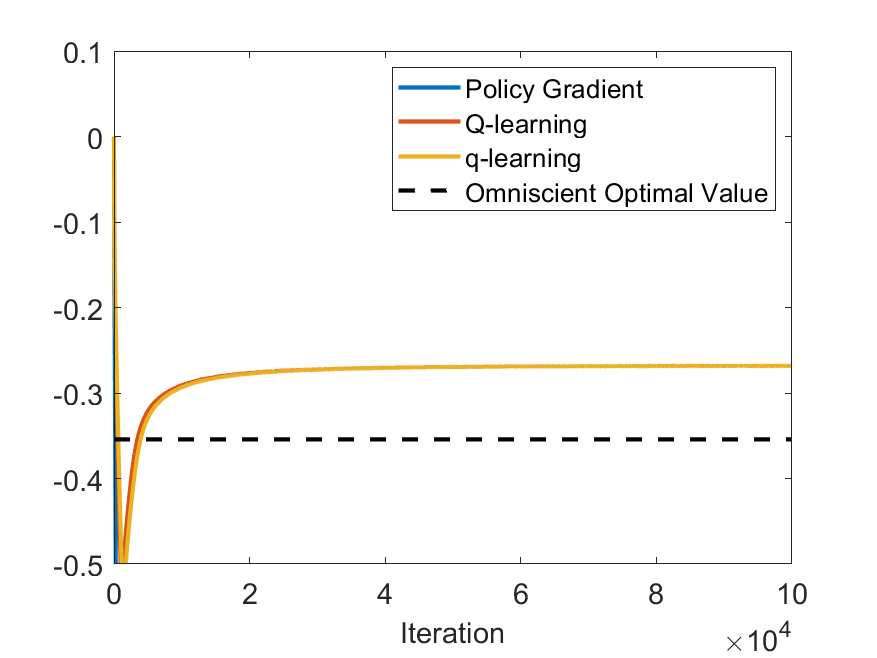}
\end{subfigure}
\begin{subfigure}{0.32\textwidth}
\centering
\includegraphics[width = 1\textwidth]{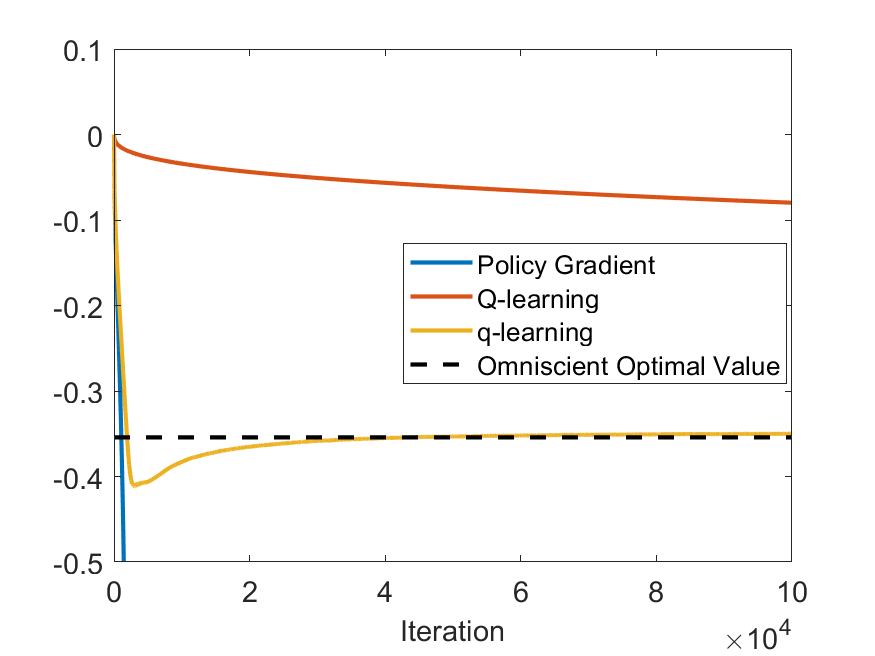}
\end{subfigure}
\begin{subfigure}{0.32\textwidth}
\centering
\includegraphics[width = 1\textwidth]{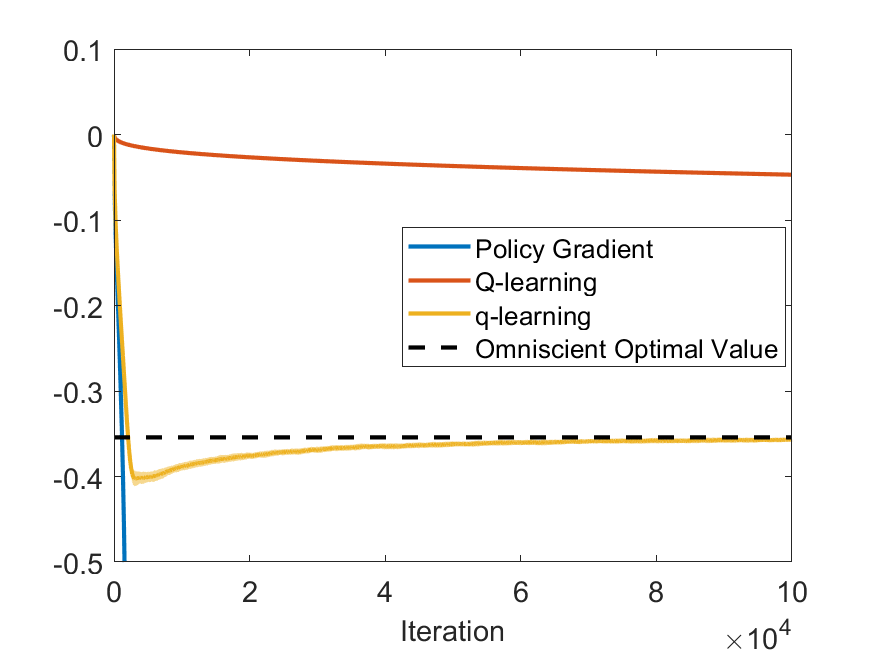}
\end{subfigure}
\begin{subfigure}{0.32\textwidth}
\centering
\includegraphics[width = 1\textwidth]{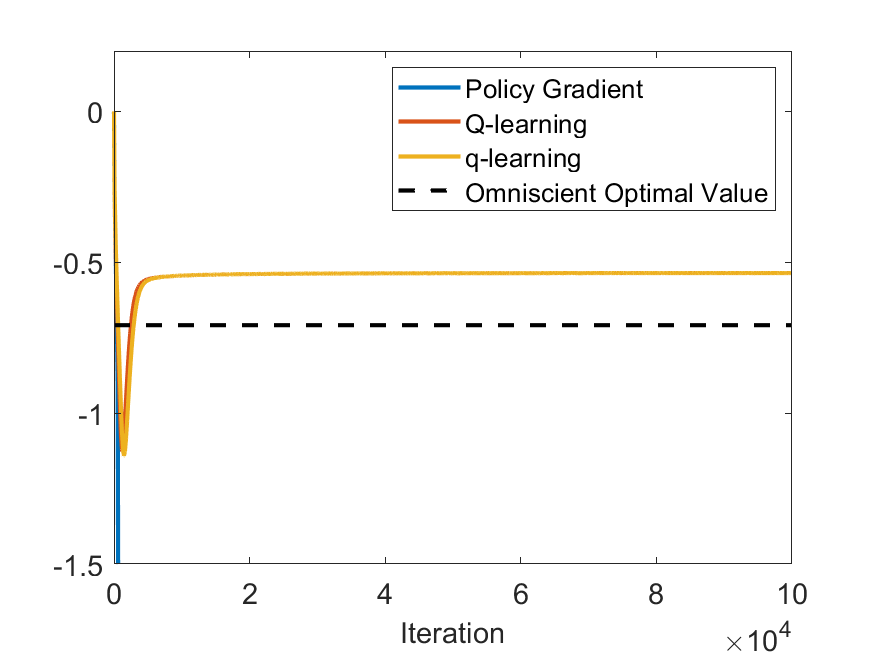}
\end{subfigure}
\begin{subfigure}{0.32\textwidth}
\centering
\includegraphics[width = 1\textwidth]{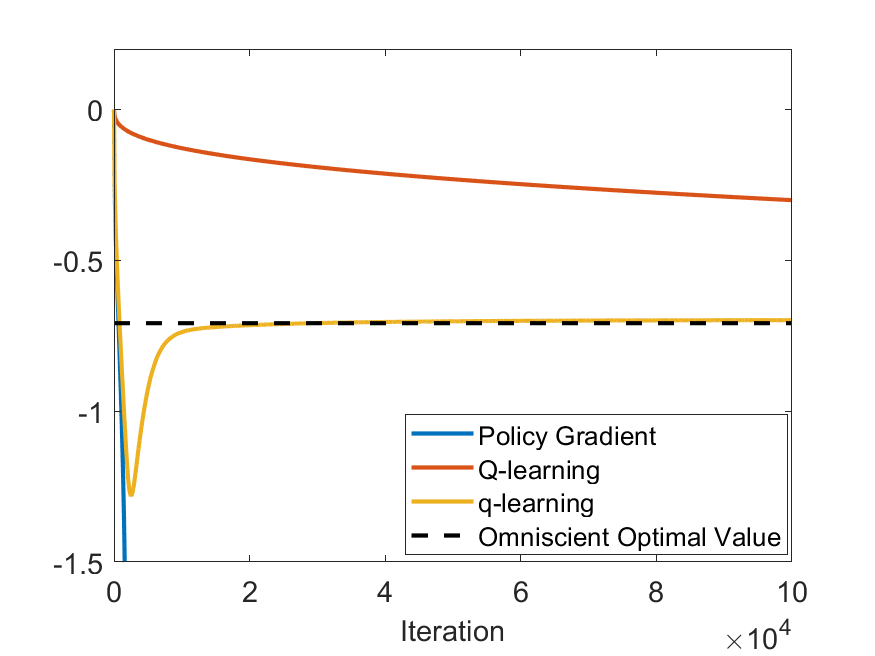}
\end{subfigure}
\begin{subfigure}{0.32\textwidth}
\centering
\includegraphics[width = 1\textwidth]{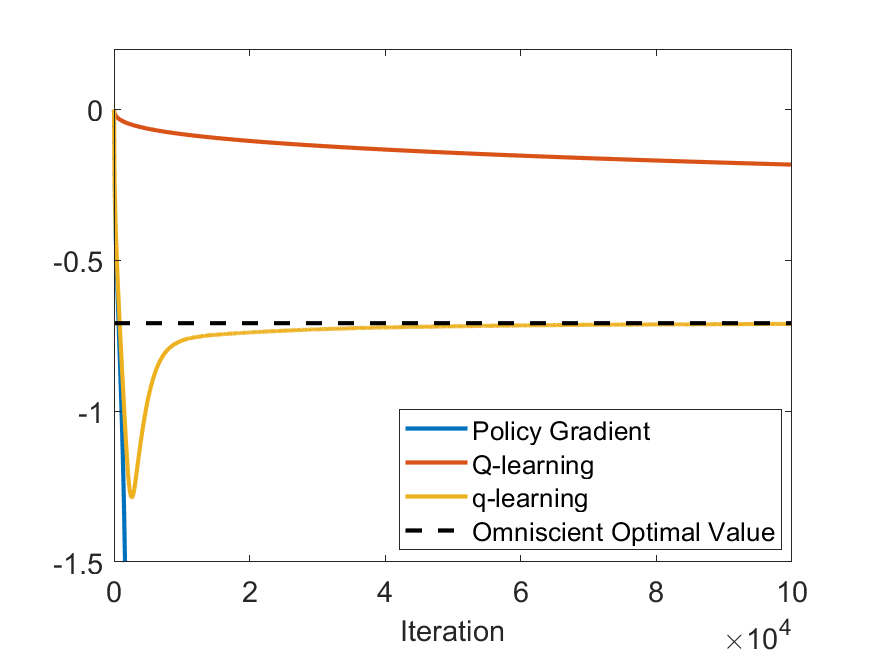}
\end{subfigure}
\begin{subfigure}{0.32\textwidth}
\centering
\includegraphics[width = 1\textwidth]{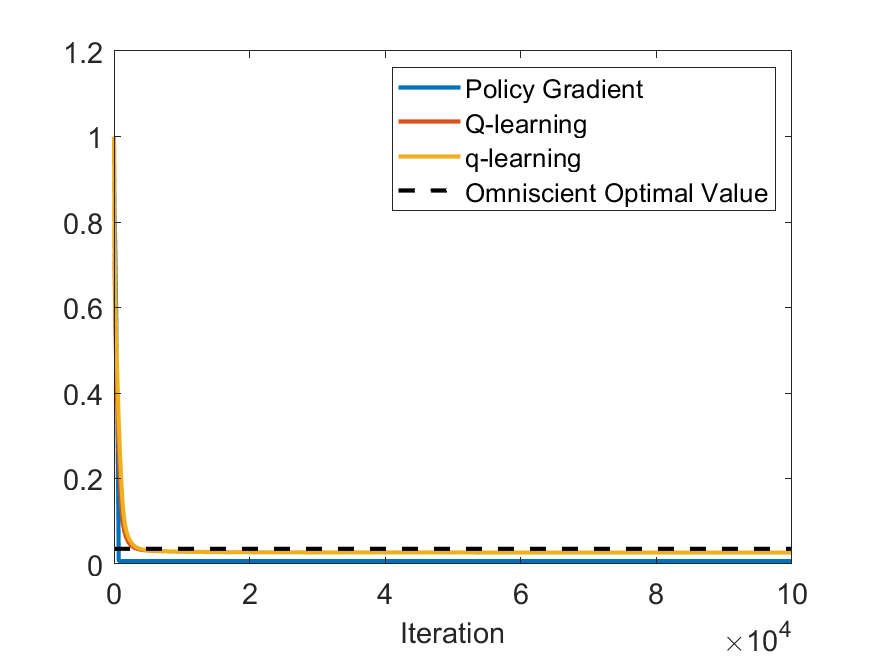}
\caption{$\Delta t= 1$}
\end{subfigure}
\begin{subfigure}{0.32\textwidth}
\centering
\includegraphics[width = 1\textwidth]{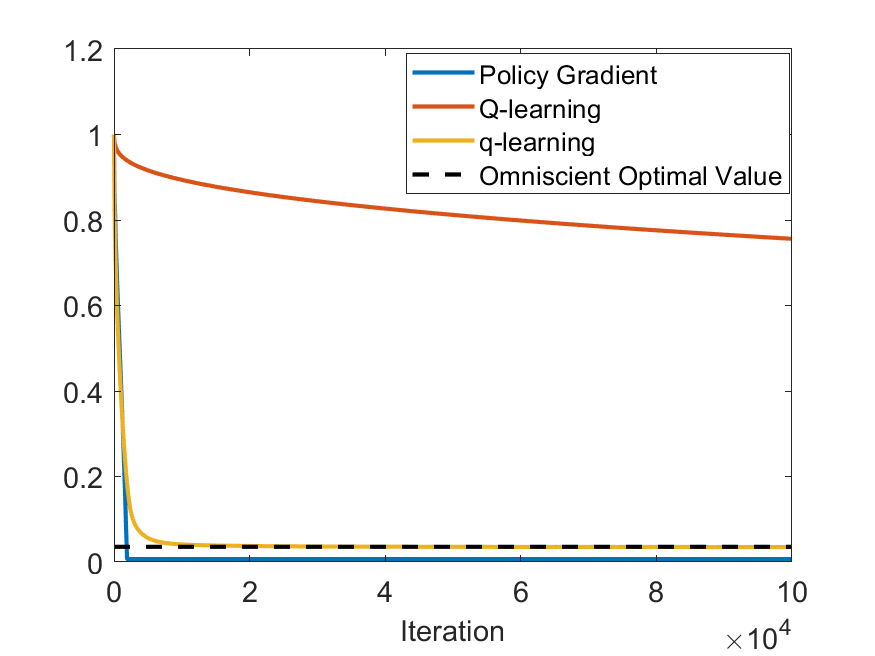}
\caption{$\Delta t= 0.1$}
\end{subfigure}
\begin{subfigure}{0.32\textwidth}
\centering
\includegraphics[width = 1\textwidth]{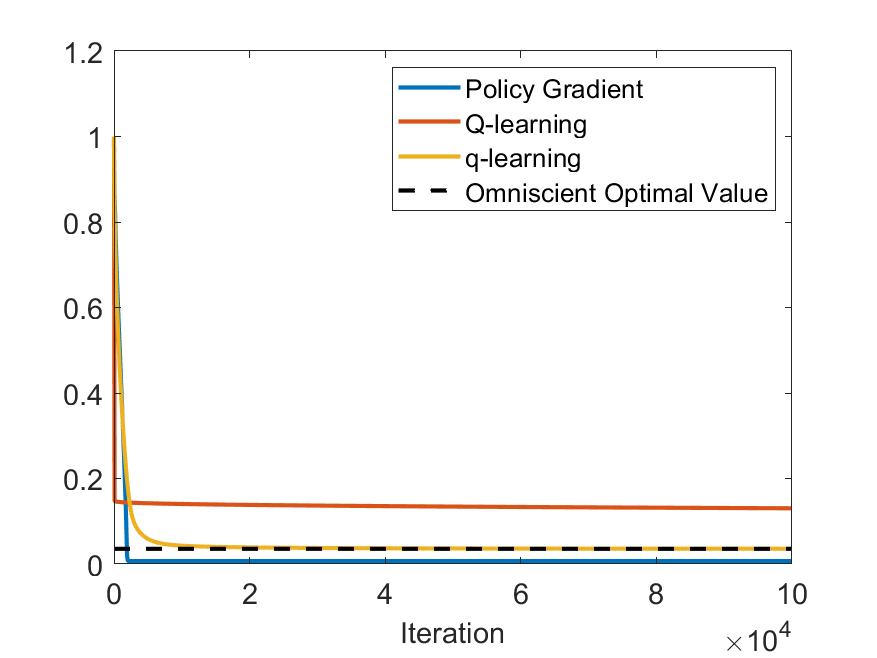}
\caption{$\Delta t= 0.01$}
\end{subfigure}
\caption{\textbf{Paths of learned parameters of three RL algorithms with different time discretization sizes.} A single state trajectory is generated with length $T = 10^6$ and discretized at different step sizes: $\Delta t=1$ in (a), $\Delta t=0.1$ in (b), and $\Delta t=0.01$ in (c),  under the behavior policy $a_t\sim \mathcal{N}(0,1)$. From top to bottom are the paths of learned $\psi_1,\psi_2,e^{\psi_3}$ respectively. For each step size, we apply  three algorithms: ``Policy Gradient" described in
Algorithm 3  in \cite{jia2021policypg}, ``Q-Learning" described  in Appendix B, and ``q-Learning" described in Algorithm \ref{algo:ergodic incremental}.
We repeat the experiments for 100 times for each method and plot the average reward over time with the shaded area indicating standard deviation. }
\label{fig:lq off policy}
\end{figure}

\section{Conclusion}
\label{sec:conclusion}

The previous trilogy on continuous-time RL under the entropy-regularized exploratory diffusion process framework \citep{wang2020reinforcement,jia2021policy,jia2021policypg} have respectively studied three key elements: optimal samplers for exploration, PE and PG. PG is a special instance of policy improvement, and in the discrete-time MDP literature, it can be done through Q-function. However, there was no satisfying Q-learning theory in continuous time, so a different route was taken in \citet{jia2021policypg} -- to turn the PG task into a PE problem.

This paper fills this gap and adds yet another essential building block to the theoretical foundation of continuous-time RL, by developing a q-learning theory commensurate with the continuous-time setting, attacking the general policy improvement problem, and covering both on- and off-policy learning. Although the conventional Q-function provides no information
about actions in continuous time, its first-order approximation, which we call the q-function, does. Moreover, it turns out that the essential component  of the q-function relevant to policy updating is the Hamiltonian associated with the problem, the latter having already played a vital role in the classical model-based stochastic control theory. This fact, together with the expression of the optimal stochastic policies explicitly derived in
\cite{wang2020reinforcement}, in turn, explains and justifies the widely used Boltzmann exploration in classical RL for MDPs.

We characterize the q-function as the compensator to maintain the martingality of a process consisting of the value function and cumulative reward, both in the on-policy and off-policy settings.  This characterization enables us to use the martingale-based PE techniques developed in \cite{jia2021policy} to simultaneously learn the q-function and the value function. Interestingly, the martingality is on the same process as in PE \citep{jia2021policy} but with respect to an enlarged filtration containing the policy randomization. The TD version of the resulting algorithm links to the well-known SARSA algorithm in classical  Q-learning.

A significant and outstanding research question is an analysis on the convergence rates of q-learning algorithms. The existing convergence rate results for discrete-time Q-learning cannot be extended to continuous-time q-learning due to the continuous state space and general nonlinear function approximations, along with the fact that the behaviors of the Q-function and the q-function can be fundamentally different. A possible remedy is to carry out the convergence analysis within the general framework of stochastic approximation, which is poised to  be the subject of a substantial future study.



We make a final observation to conclude this paper. The classical model-based approach typically separates ``estimation" and ``optimization", {\it \`a la} Wonham's separation theorem \citep{W68} or the ``plug-in" method. This approach first learns a model (including formulating and estimating its coefficients/parameters) and then optimizes over the learned model. Model-free (up to some basic dynamic structure such as diffusion processes or Markov chains) RL takes a different route: it skips estimating a model and learns optimizing policies directly via PG or Q/q-learning. The q-learning theory in this paper suggests that, to learn policies, one needs to learn the q-function or the Hamiltonian. In other words, it is the Hamiltonian which is a specific {\it aggregation} of the model coefficients, rather than each and every individual model coefficient, that needs to be learned/estimated for optimization. Clearly, from a pure computational standpoint, estimating a single function -- the Hamiltonian -- is much more efficient and robust than estimating multiple functions ($b,\sigma,r,h$ in the present paper) in terms of avoiding or reducing over-parameterization, sensitivity to errors and accumulation of errors.
More importantly, however, \eqref{eq:q rate approximation} hints that the Hamiltonian/q-function can be learned through temporal differences of the value function, so the task of learning and optimizing can be accomplished in a data-driven way. This would not be the case if we chose to learn individual model coefficients separately. This observation, we believe, highlights the fundamental differences between the model-based and model-free approaches.

\acks{Zhou is supported  by a start-up grant and the Nie Center
for Intelligent Asset Management at Columbia University.
His work is also part of a Columbia-CityU/HK collaborative project that is supported by the InnoHK Initiative, The Government of the HKSAR, and the AIFT Lab. We  are grateful for comments from seminar and conference participants at Chinese University of Hong Kong, University of Iowa, Columbia University, Seoul National University, POSTECH, Ritsumeikan University, Shanghai University of Finance and Economics, Fudan University, The 2022 INFORMS Annual Meeting in Indianapolis, The 11th Annual Meeting of FE Branch in OR Society of China in Shijiazhuang, Conference in Memory of Tomas Bj\"ork in Stockholm, and Post-Bachelier Congress Workshop in Hong Kong. In particular, we  benefit from discussions with Jiro Akahori, Xuefeng Gao, Bong-Gyu Jang, Hyeng Keun Koo, Lingfei Li, Hideo Nagai, Jun Sekine,  Wenpin Tang, and David Yao.
We are especially indebted  to the three anonymous referees for their constructive and detailed  comments that have led to an improved version of the paper.
}


\newpage

\appendix

\section*{Appendix A. Soft Q-learning in Discrete Time}
We review the soft Q-learning for discrete-time Markov decision processes (MDPs) here and present the analogy to some of the major  results developed in the main text. It is interesting that, to our best knowledge, the martingale perspective for MDPs is not explicitly presented. This in turn suggests that a continuous-time study may provide new aspects and insights even for discrete-time RL.

For simplicity, we consider a time-homogeneous MDP $X=\{X_t,t=0,1,2,\cdots\}$ with a state space $\mathcal{X}$, an action space $\mathcal{A}$, and a transition matrix $\p(X_1 = x'|X_0 = x, a_0 = a) = : p(x'|x,a)$. Both $\mathcal{X}$ and $\mathcal{A}$ are finite sets. The expected reward at $(x,a)$ is $r(x,a)$ with a discount factor $\beta\in (0,1)$. The agent's total expected reward is $\E\left[\sum_{t=0}^{\infty}\beta^t r(X_t,a_t) \right]$. A (stochastic) policy is denoted by $\bm\pi(\cdot|x)\in \mathcal{P}(\mathcal{A})$, which is a probability mass function on  $\mathcal{A}$.

\subsection*{Appendix A1. Q-function associated with an arbitrary policy}
Define the value function associated with a given policy $\bm\pi$ by
\begin{equation}
\label{eq:value mdp}
\begin{aligned}
J(x;\bm\pi) = & \E\left[\sum_{t=0}^{\infty}\beta^t \left[ r(X_t^{\bm\pi},a^{\bm\pi}_t) - \gamma\log\bm\pi(a^{\bm\pi}_t|X^{\bm\pi}_t)\right]\Big| X_0^{\bm\pi} = x \right] \\
= & \E\left[ r(x,a_0^{\bm\pi}) - \gamma\log\bm\pi(a_0^{\bm\pi}|x) \right] + \beta \E\left[J(X_1^{\bm\pi};\bm\pi)\Big| X_0^{\bm\pi} = x\right],
\end{aligned}
\end{equation}
and the Q-function associated with $\bm\pi$ by
\begin{equation}
\label{eq:Q mdp}
\begin{aligned}
Q(x,a;\bm\pi) = & r(x,a) +  \E\left[\sum_{t=1}^{\infty}\beta^t \left[ r(X_t^{\bm\pi},a^{\bm\pi}_t) - \gamma\log\bm\pi(a^{\bm\pi}_t|X^{\bm\pi}_t)\right]\Big| X_0^{\bm\pi} = x, a_0^{\bm\pi} = a \right] \\
= & r(x,a) + \beta \E\left[J(X_1^{a};\bm\pi)\Big| X_0^{\bm\pi} = x, a_0^{\bm\pi} = a \right].
\end{aligned}
\end{equation}
Adding the entropy term $- \gamma\log\bm\pi(a|x)$ to both sides of \eqref{eq:Q mdp}, integrating over $a$ and noting \eqref{eq:value mdp}, we obtain a relation between the Q-function and the value function :
\begin{equation}
\label{eq:constraint mdp}
\E\left[ Q(x,a^{\bm\pi};\bm\pi) - J(x;\bm\pi) - \gamma\log\bm\pi(a^{\bm\pi}|x) \right] = 0 .
\end{equation}
Moreover, substituting $J(X_1^{a};\bm\pi)$ with \eqref{eq:constraint mdp} to the right hand side of \eqref{eq:Q mdp}, we further obtain
\begin{equation}
\label{eq:sarsa mdp}
Q(x,a;\bm\pi) = r(x,a) + \beta \E\left[ Q(X_1^{a},a_1^{\bm\pi};\bm\pi) - \gamma\log\bm\pi(a_1^{\bm\pi}|X_1^{a})  \Big| X_0^{\bm\pi} = x, a_0^{\bm\pi} = a  \right] .
\end{equation}
Applying suitable algorithms (e.g., stochastic approximation) to solve the equation \eqref{eq:sarsa mdp} leads to the classical SARSA algorithm.

On the other hand, rewrite \eqref{eq:Q mdp} as
\begin{equation}
\label{eq:martingale mdp}
J(x;\bm\pi) = r(x,a) - [Q(x,a;\bm\pi) - J(x;\bm\pi)] + \beta  \E\left[J(X_1^{a};\bm\pi)\Big| X_0^{\bm\pi} = x, a_0^{\bm\pi} = a \right].
\end{equation}


Recall that $A(x,a;\bm\pi): = Q(x,a;\bm\pi) - J(x;\bm\pi)$ is called the \textit{advantage function} for MDPs. The q-function in the main text (see  \eqref{eq:q rate derivative}) is hence an advantage {\it rate} function in the continuous-time setting. In particular, \eqref{eq:constraint mdp} is analogous to the second constraint on q-function in \eqref{eq:constraints qv}.

Further, \eqref{eq:martingale mdp} implies that
$$M_s^{\bm\pi'} := \beta^{s} J(X^{\bm\pi'}_s;\bm\pi) + \sum_{t = 0}^{s-1} \beta^{t} \left[ r(X_t^{\bm\pi'},a_t^{\bm\pi'}) - A(X_t^{\bm\pi'},a_t^{\bm\pi'};\bm\pi)  \right] $$
is an $(\{\f_s\}_{s\geq 0},\p)$-martingale under {\it any}  policy $\bm\pi'$, where $\f_s$ is the $\sigma$-algebra generated by $X_0^{\bm\pi'},a_0^{\bm\pi'},\cdots, X_s^{\bm\pi'},a_s^{\bm\pi'}$. To see this, we compute
\begin{equation}
\label{eq:dismart}
\begin{aligned}
& \E\left[  \beta^{s} J(X^{\bm\pi'}_s;\bm\pi) + \sum_{t = 0}^{s-1} \beta^{t} \left[ r(X_t^{\bm\pi'},a_t^{\bm\pi'}) - A(X_t^{\bm\pi'},a_t^{\bm\pi'};\bm\pi)  \right]\Big| \f_{s-1}   \right] \\
= & \beta^s \E\left[J(X^{\bm\pi'}_s;\bm\pi) \Big| X^{\bm\pi'}_{s-1},a^{\bm\pi'}_{s-1} \right] + \sum_{t = 0}^{s-1} \beta^{t} \left[ r(X_t^{\bm\pi'},a_t^{\bm\pi'}) - A(X_t^{\bm\pi'},a_t^{\bm\pi'};\bm\pi)  \right] \\
= & \beta^{s-1} \left[ J(X_{s-1}^{\bm\pi'};\bm\pi) - r(X_{s-1}^{\bm\pi'},a_{s-1}^{\bm\pi'}) + A(X_{s-1}^{\bm\pi'}, a_{s-1}^{\bm\pi'};\bm\pi) \right] + \sum_{t = 0}^{s-1} \beta^{t} \left[ r(X_t^{\bm\pi'},a_t^{\bm\pi'}) - A(X_t^{\bm\pi'},a_t^{\bm\pi'};\bm\pi)  \right] \\
= & \beta^{s-1} J(X^{\bm\pi'}_{s-1};\bm\pi) + \sum_{t = 0}^{s-2} \beta^{t} \left[ r(X_t^{\bm\pi'},a_t^{\bm\pi'}) - A(X_t^{\bm\pi'},a_t^{\bm\pi'};\bm\pi)  \right] .
\end{aligned}
\end{equation}

Note that here $\f_s$ is larger than the usual ``historical information set'' up to time $s$, denoted by $\mathcal{H}_s$, which is the $\sigma$-algebra generated by $X_0^{\bm\pi'},a_0^{\bm\pi'},\cdots, X_s^{\bm\pi'}$ without observing the last $a_s^{\bm\pi'}$ (i.e. $\mathcal{H}_s$ is $\f_s$ {\it excluding} $a_s^{\bm\pi'}$). Because  $M_s^{\bm\pi'}$ is measurable to the smaller $\sigma$-algebra $\mathcal{H}_s$, it is automatically an $(\{\mathcal H_s\}_{s\geq 0}, \p)$-martingale.

The above analysis shows that $M_s^{\bm\pi'}$ is a martingale for {\it any} $\bm\pi'$ -- whether the target policy $\bm\pi$ or a behavior one -- so long as the advantage function $A$ is taken as the compensator in defining $M_s^{\bm\pi'}$. This is the essential reason why Q-learning works for both on- and off-policy learning. However, the conclusion is not necessarily true in other types of learning methods. For example, let us take a different compensator and investigate the process $\beta^{s} J(X^{\bm\pi'}_s;\bm\pi) + \sum_{t = 0}^{s-1} \beta^{t} \left[ r(X_t^{\bm\pi'},a_t^{\bm\pi'}) - \gamma\log\bm\pi(a_t^{\bm\pi'}|X_t^{\bm\pi'})  \right]$. The continuous-time counterpart of this process is a martingale required in the policy gradient method; see \cite[Theorem 4]{jia2021policypg}. 

%
%


Now, conditioned on $\mathcal{H}_{s-1}$ and when $\bm\pi' = \bm\pi$, we have
\[ \begin{aligned}
& \E\left[  \beta^{s} J(X^{\bm\pi}_s;\bm\pi) + \sum_{t = 0}^{s-1} \beta^{t} \left[ r(X_t^{\bm\pi},a_t^{\bm\pi}) - \gamma\log\bm\pi(a_t^{\bm\pi}|X_t^{\bm\pi})  \right]\Big| \mathcal{H}_{s-1}   \right] \\
= & \beta^s \E\left[ \E\left[ J(X^{\bm\pi}_s;\bm\pi) \big| X^{\bm\pi}_{s-1},a^{\bm\pi}_{s-1} \right] \Big|X^{\bm\pi}_{s-1} \right] + \sum_{t = 0}^{s-1} \beta^{t} \left[ r(X_t^{\bm\pi},a_t^{\bm\pi}) - \gamma\log\bm\pi(a_t^{\bm\pi}|X_t^{\bm\pi})  \right] \\
= & \beta^{s-1} \left[ J(X_{s-1}^{\bm\pi};\bm\pi) - r(X_{s-1}^{\bm\pi},a_{s-1}^{\bm\pi}) + A(X_{s-1}^{\bm\pi}, a_{s-1}^{\bm\pi};\bm\pi)\Big|X^{\bm\pi}_{s-1} \right] + \sum_{t = 0}^{s-1} \beta^{t} \left[ r(X_t^{\bm\pi},a_t^{\bm\pi}) - \gamma\log\bm\pi(a_t^{\bm\pi}|X_t^{\bm\pi}) \right] \\
= & \beta^{s-1} J(X^{\bm\pi}_{s-1};\bm\pi) + \sum_{t = 0}^{s-2} \beta^{t} \left[ r(X_t^{\bm\pi},a_t^{\bm\pi}) - \gamma\log\bm\pi(a_t^{\bm\pi}|X_t^{\bm\pi}) \right],
\end{aligned} \]
where the last equality is due to \eqref{eq:constraint mdp}, which can be applied because  $a_{s-1}^{\bm\pi}$ is generated under the policy $\bm\pi$. This shows that
the process is an $(\{\mathcal H_s\}_{s\geq 0}, \p)$-martingale when $\bm\pi' = \bm\pi$, which in turn underpins the on-policy learning.

However, when conditioned on $\f_{s-1}$, we have
\[ \begin{aligned}
& \E\left[  \beta^{s} J(X^{\bm\pi'}_s;\bm\pi) + \sum_{t = 0}^{s-1} \beta^{t} \left[ r(X_t^{\bm\pi'},a_t^{\bm\pi'}) - \gamma\log\bm\pi(a_t^{\bm\pi'}|X_t^{\bm\pi'})  \right]\Big| \f_{s-1}   \right] \\
= & \beta^s \E\left[J(X^{\bm\pi'}_s;\bm\pi) \Big| X^{\bm\pi'}_{s-1},a^{\bm\pi'}_{s-1} \right] + \sum_{t = 0}^{s-1} \beta^{t} \left[ r(X_t^{\bm\pi'},a_t^{\bm\pi'}) - \gamma\log\bm\pi(a_t^{\bm\pi'}|X_t^{\bm\pi'})  \right] \\
= & \beta^{s-1} \left[ J(X_{s-1}^{\bm\pi'};\bm\pi) - r(X_{s-1}^{\bm\pi'},a_{s-1}^{\bm\pi'}) + A(X_{s-1}^{\bm\pi'}, a_{s-1}^{\bm\pi'};\bm\pi) \right] + \sum_{t = 0}^{s-1} \beta^{t} \left[ r(X_t^{\bm\pi'},a_t^{\bm\pi'}) - \gamma\log\bm\pi(a_t^{\bm\pi'}|X_t^{\bm\pi'}) \right] \\
= & \beta^{s-1} J(X^{\bm\pi'}_{s-1};\bm\pi) + \sum_{t = 0}^{s-2} \beta^{t} \left[ r(X_t^{\bm\pi'},a_t^{\bm\pi'}) - \gamma\log\bm\pi(a_t^{\bm\pi'}|X_t^{\bm\pi'}) \right] \\
& + \beta^{s-1}\left[ A(X_{s-1}^{\bm\pi'}, a_{s-1}^{\bm\pi'};\bm\pi) -\gamma\log\bm\pi(a_t^{\bm\pi'}|X_t^{\bm\pi'}) \right] .
\end{aligned} \]
So the same process is {\it not} an $(\{\f_s\}_{s\geq 0}, \p)$-martingale in general, unless $A(X_{s-1}^{\bm\pi'}, a_{s-1}^{\bm\pi'};\bm\pi) =\gamma\log\bm\pi(a_t^{\bm\pi'}|X_t^{\bm\pi'})$ for any $X_{s-1}^{\bm\pi'}, a_{s-1}^{\bm\pi'}$. Two conclusions can be drawn from this example: on one hand, policy gradient works for on-policy but not off-policy, and on the other hand, it is important to choose the correct filtration to ensure martingality and hence the correct test functions in designing learning algorithms.

The analysis in \eqref{eq:dismart} yields a joint martingale characterization of $(J,A)$, analogous to  Theorem \ref{thm:qv learn}. It is curious that, to our best knowledge, such a martingale condition, while not hard to derive  in the MDP setting, has not been explicitly introduced nor utilized  to study Q-learning in the existing RL literature. 

\subsection*{Appendix A2. Q-function associated with the optimal policy}
Now consider the value function and Q-function that are associated with the optimal policy, denoted by $J^*$ and $Q^*$ respectively. Recall that the Bellman equation implies
\begin{equation}
\label{eq:bellman mdp}
J^*(x) = \sup_{\bm\pi}\E_{a\sim\bm\pi}\left\{ r(x,a) - \gamma\log\bm\pi(a|x) + \beta \E\left[  J^*(X_1^{a})\Big| X_0 = x, a_0 = a  \right] \right\} .
\end{equation}
It follows from \eqref{eq:Q mdp} that
\begin{equation}
\label{eq:bellman Q}
Q^*(x,a) = r(x,a) +  \beta \E\left[  J^*(X_1^{a})\Big| X_0 = x, a_0 = a  \right].
\end{equation}
Hence \eqref{eq:bellman mdp} becomes
\[ J^*(x) = \sup_{\bm\pi}\E_{a\sim\bm\pi}\left[ Q^*(x,a) - \gamma\log\bm\pi(a|x) \right] . \]
Solving the optimization on the right hand side of the above we get the optimal policy $\bm\pi^*(a|x) \propto \exp\{ \frac{1}{\gamma}Q^*(x,a) \}$ or
$ \bm\pi^*(a|x)= \frac{\exp\{ \frac{1}{\gamma}Q^*(x,a) \}}{\sum_{a\in \mathcal{A}}\exp\{ \frac{1}{\gamma}Q^*(x,a) \}}$. Denote
\[
\operatorname{soft}_{\gamma}\max_{a} Q^*(x,a): =   \E_{a\sim\bm\pi^*}\left[ Q^*(x,a) - \gamma\log\bm\pi^*(a|x) \right] \equiv  \gamma \log \sum_{a\in \mathcal{A}}\exp\{ \frac{1}{\gamma}Q^*(x,a) \} . \]
Then the Bellman equation becomes $J^*(x) = \operatorname{soft}_{\gamma}\max_{a} Q^*(x,a)$ and \eqref{eq:bellman Q} reduces to
\begin{equation}
\label{eq:bellman q mdp}
Q^*(x,a) = r(x,a) + \beta \E\left[ \operatorname{soft}_{\gamma}\max_{a'} Q^*(X_1^{a},a') \Big| X_0 = x, a_0 = a  \right] ,
\end{equation}
which is  the foundation for (off-policy) Q-learning algorithms; see e.g. \cite[p. 131]{sutton2011reinforcement}.

On the other hand, because $J^*(x) = \operatorname{soft}_{\gamma}\max_{a} Q^*(x,a) = \gamma \log \sum_{a\in \mathcal{A}}\exp\{ \frac{1}{\gamma}Q^*(x,a) \}$, we have
\begin{equation}
\label{eq:constraint mdp 2}
\sum_{a\in \mathcal{A}}\exp\left\{\frac{1}{\gamma}[Q^*(x,a) - J^*(x) ] \right\} = 1.
\end{equation}
Recall the advantage function $A^*(x,a) = Q^*(x,a) - J^*(x)$. Thus \eqref{eq:constraint mdp 2} is analogous  to \eqref{eq:optimal q hjb}. Moreover, rearranging \eqref{eq:bellman Q} yields
\[ J^*(x) = r(x,a) - [Q^*(x,a) - J^*(x)] + \beta \E\left[  J^*(X_1^{a})\Big| X_0 = x, a_0 = a  \right]. \]
Based on a similar derivation to that in Appendix A1, we obtain
$$\beta^{s} J^*(X^{\bm\pi}_s) + \sum_{t = 0}^{s-1} \beta^{t} \left[ r(X_t^{\bm\pi},a_t^{\bm\pi}) - A^*(X_t^{\bm\pi},a_t^{\bm\pi})  \right] $$ is an $(\{\f_s\}_{s\geq 0},\p)$-martingale for any policy $\bm\pi$; hence it
is analogous  to the martingale characterization of optimal q-function in Theorem \ref{thm:q optimal}.

\section*{Appendix B. $\Delta t$-Parameterized Q-Learning}

Instead of the (little) q-learning approach developed in this paper, one can also apply the conventional (big) Q-learning
method to the Q-function ${Q}_{\Delta t}(t,x,a;\bm\pi)$ defined in \eqref{qdt}. Note that  $\Delta t>0$ becomes a {\it parameter} in the latter approach. In this Appendix,
we review this $\Delta t$-parameterized Q-learning and the associated Q-learning algorithms, which are used in the simulation experiments of the paper for comparison purpose.

%
%

Given a policy $\bm\pi$ and a time step $\Delta t>0$,  Q-learning focuses on learning ${Q}_{\Delta t}(t,x,a;\bm\pi)$. As with  q-learning, there are  two distinctive scenarios: when the normalizing constant is easily available and when it is not. 
In the following, we denote by ${Q}^{\psi}_{\Delta t}$  the function approximator to the Q-function where $\psi\in\Psi\subset \mathbb{R}^{L_{\psi}}$ is the finite dimensional parameter vector to be learned, 
and by $\bm\pi^{\psi}$ the associated policy where $\bm\pi^{\psi}(a|t,x)\propto \exp\{ \frac{1}{\gamma \Delta t}{Q}^{\psi}_{\Delta t}(t,x,a) \}$.

\subsection*{Appendix B1. When normalizing constant is available}
When the normalization constant $\int_{\mathcal{A}} \exp\{ \frac{1}{\gamma \Delta t}{Q}^{\psi}_{\Delta t}(t,x,a) \} \dd a$ can be explicitly calculated, we can obtain the exact expression of $\bm\pi^{\psi}$ as
\[\bm\pi^{\psi}(a|t,x) = \frac{\exp\{ \frac{1}{\gamma \Delta t}{Q}^{\psi}_{\Delta t}(t,x,a) \}}{\int_{\mathcal{A}} \exp\{ \frac{1}{\gamma \Delta t}{Q}^{\psi}_{\Delta t}(t,x,a) \} \dd a} . \]

Then, the conventional Q-learning method (e.g, SARSA, \citealt[Chapter 10]{sutton2011reinforcement}) leads to the following updating rule of the Q-function parameters $\psi$:
\begin{equation}
\label{eq:conventional q learning update rule}
\begin{aligned}
\psi \leftarrow \psi + \alpha_{\psi} \bigg( & {Q}^{\psi}_{\Delta t}(t+\Delta t,X_{t+\Delta t}^{a_t},a_{t+\Delta t}) - \gamma\log\bm\pi^{\psi}(a_{t+\Delta t}|t+\Delta t,X_{t+\Delta t}^{a_t})\Delta t \\
& - {Q}^{\psi}_{\Delta t}(t,X_t,a_t) +  r(t,X_t,a_t) \Delta t - \beta {Q}^{\psi}_{\Delta t}(t,X_t,a_t) \Delta t \bigg) \frac{\partial {Q}^{\psi}_{\Delta t}}{\partial \psi}(t,X_t,a_t) ,
\end{aligned}
\end{equation}
where $a_t\sim \bm\pi^{\psi}(\cdot|t,X_t)$ and $a_{t+\Delta t}\sim \bm\pi^{\psi}(\cdot|t+\Delta t,X_{t+\Delta t})$.

\subsection*{Appendix B2. When normalizing constant is unavailable}
When the normalizing constant is not available, we cannot compute the exact expression of $\bm\pi^{\psi}$. In this case we follow an analysis analogous to that in Subsection \ref{subsec:stronger PIT} by using a family of policies $\{ \bm\pi^{\phi}(a|t,x)\}_{\phi\in \Phi}$ whose densities can be easily calculated to approximate $\bm\pi^{\psi}$. 

Theorem \ref{lemma:policy improvement hjb constraint} implies that if we start from a policy $\bm\pi^{\phi}$, $\phi\in \Phi$, and evaluate its corresponding Q-function $\frac{1}{\gamma \Delta t}{Q}^{\psi}_{\Delta t}(t,x,a) \approx \frac{1}{\gamma}H\big( t,x,\cdot,\frac{\partial J}{\partial x}(t,x;\bm\pi^\phi), \frac{\partial^2 J}{\partial x^2}(t,x;\bm\pi^{\phi}) \big) + \frac{1}{\gamma \Delta t}J(t,x;\bm\pi^{\phi})  $, then a new policy $\bm\pi^{\phi'}$, $\phi'\in \Phi$, improves $\bm\pi^{\phi}$ so long as
\[
D_{KL}\left( \bm\pi^{\phi'}(\cdot|t,x)\Big|\Big| \exp\{ \frac{1}{\gamma \Delta t}{Q}^{\psi}_{\Delta t}(t,x,\cdot)\} \right) \leq D_{KL}\left( \bm\pi^{\phi}(\cdot|t,x)\Big|\Big| \exp\{ \frac{1}{\gamma \Delta t}{Q}^{\psi}_{\Delta t}(t,x,\cdot)\} \right).
\]

Thus, following the same lines of argument as in Subsection \ref{subsec:stronger PIT},
%
%
%
we derive the  updating rule of  $\phi$ every step as
\[ \phi \leftarrow \phi - \alpha_{\phi}\left[\log\bm\pi^{\phi}(a_t|t,X_t) - \frac{1}{\gamma \Delta t}{Q}^{\psi}_{\Delta t}(t,X_t,a_t)  \right] \frac{\partial }{\partial \phi}\log\bm\pi^{\phi}(a_t|t,X_t) , \]
where $a_t\sim \bm\pi^{\phi}(\cdot|t,X_t)$.

\subsection*{Appendix B3. Ergodic tasks}

For the general regularized  ergodic problem formulated in Section \ref{sec:extension}, we define the Q-function:
\begin{equation}
\label{eq:q bar ergodic}
\begin{aligned}
& {Q}_{\Delta t}(x,a;\bm\pi)
= J(x;\bm{\pi}) + \big[ H\big( x,a,\frac{\partial J}{\partial x}(x;\bm{\pi}), \frac{\partial^2 J}{\partial x^2}(x;\bm{\pi})\big) - V(\bm\pi) \big]  \Delta t - \bar{J} .
\end{aligned}
\end{equation}
Then we have all the parallel results regarding relation among  the Q-function, the  value function and the intertemporal increment of the Q-function, and devise algorithms accordingly.

For illustration, we only present the results when \[\bm\pi^{\psi}(a|x) = \frac{\exp\{ \frac{1}{\gamma \Delta t}{Q}^{\psi}_{\Delta t}(x,a) \}}{\int_{\mathcal{A}} \exp\{ \frac{1}{\gamma \Delta t}{Q}^{\psi}_{\Delta t}(x,a) \} \dd a}  \]
is explicitly available. The updating rules of the  parameters $\psi$ and $V$ are
\begin{equation}
\label{eq:conventional q learning ergodic update rule}
\begin{aligned}
\psi \leftarrow \psi + \alpha_{\psi} \bigg( & {Q}^{\psi}_{\Delta t}(X_{t+\Delta t}^{a_t},a_{t+\Delta t}) - \gamma\log\bm\pi^{\psi}(a_{t+\Delta t}|X_{t+\Delta t}^{a_t})\Delta t \\
& - {Q}^{\psi}_{\Delta t}(X_t,a_t) +  r(X_t,a_t) \Delta t - V \Delta t \bigg) \frac{\partial {Q}^{\psi}_{\Delta t}}{\partial \psi}(X_t,a_t) \\
V\leftarrow V + \alpha_V\bigg( & {Q}^{\psi}_{\Delta t}(X_{t+\Delta t}^{a_t},a_{t+\Delta t}) - \gamma\log\bm\pi^{\psi}(a_{t+\Delta t}|X_{t+\Delta t}^{a_t})\Delta t \\
& - {Q}^{\psi}_{\Delta t}(X_t,a_t) +  r(X_t,a_t) \Delta t - V \Delta t \bigg),
\end{aligned}
\end{equation}
where $a_t\sim \bm\pi^{\psi}(\cdot|X_t)$ and $a_{t+\Delta t}\sim \bm\pi^{\psi}(\cdot|X_{t+\Delta t})$.
%
%


\subsection*{Appendix B4. Q-learning for mean--variance portfolio selection}
Motivated by the form of the solution in \citet{wang2020continuous}, we parameterize the  Q-function by
\[{Q}_{\Delta t}^{\psi}(t,x,a;w) = -e^{-\psi_3(T-t)}[(x-w)^2 + \psi_2 a(x-w) + \frac{1}{2}e^{-\psi_1} a^2] + \psi_4(t^2 - T^2) + \psi_5 (t-T) + (w-z)^2. \]
Here we use $-e^{-\psi_1}<0$ to 
guarantee the concavity of the Q-function in $a$ so that the function can be maximized.

This Q-function in turn gives rise to an explicit form of policy, which is Gaussian:
\[ \bm\pi^{\psi}(\cdot|t,x;w)= \mathcal{N}\left( - \psi_2 e^{\psi_1}(x-w), \gamma \Delta t e^{\psi_3(T-t) + \psi_1}  \right)  \propto \exp\{ \frac{1}{\gamma \Delta t}{Q}_{\Delta t}^{\psi}(t,x,\cdot) \} . \]

We need  the following derivative in the corresponding Q-learning algorithms:
\[ \frac{\partial {Q}_{\Delta t}^{\psi}}{\partial \psi}(t,x,a;w) =
\begin{pmatrix}
\frac{1}{2}e^{-\psi_3(T-t)}e^{-\psi_1} a^2 \\
-e^{-\psi_3(T-t)}e^{-\psi_1} a(x-w)\\
(T-t)e^{-\psi_3(T-t)}[(x-w)^2 + \psi_2 a(x-w) + \frac{1}{2}e^{-\psi_1} a^2]  \\
t^2 - T^2\\
t-T
\end{pmatrix}
. \]

The parameters in the Q-function can then be updated based on \eqref{eq:conventional q learning update rule} and the Lagrange multiplier can be updated similarly as in Algorithm \ref{algo:offline episodic mv}.

\subsection*{Appendix B5. Q-learning for ergodic LQ control}
For the ergodic LQ control problem formulated in Subsection \ref{sec:application lq ergodic}, we can prove that the value function and the Hamiltonian are both quadratic in $(x,a)$; see,  e.g., \cite{wang2020reinforcement}. Therefore we parameterize the Q-function by a general quadratic function:
\[{Q}_{\Delta t}^{\psi}(x,a) = -\frac{e^{-\psi_3}}{2}(a - \psi_1 x - \psi_2)^2 + \psi_4 x^2 + \psi_5 x. \]
Here we omit the constant term since the value function is unique only up to a constant. We also use $-e^{-\psi_3}<0$ to ensure that  the Q-function is concave in $a$. The optimal value $V$ is an extra parameter.

The Q-function leads to
\[ \bm\pi^{\psi}(\cdot|x) = \mathcal{N}\left( \psi_1 x + \psi_2, \gamma \Delta t e^{\psi_3}   \right)\propto \exp\{ \frac{1}{\gamma \Delta t}{Q}_{\Delta t}^{\psi}(x,\cdot)  \} . \]

Finally, we compute the following derivative for use in Q-learning algorithms:
\[ \frac{\partial {Q}_{\Delta t}^{\psi}}{\partial \psi}(x,a) = \left( e^{-\psi_3}(a - \psi_1 x - \psi_2) x, e^{-\psi_3}(a - \psi_1 x - \psi_2) ,\frac{e^{-\psi_3}}{2}(a - \psi_1 x - \psi_2)^2, x^2,x \right)^\top . \]

Then the parameters can be updated based on \eqref{eq:conventional q learning ergodic update rule}.

\section*{Appendix C. Proofs of Statements}

\subsection*{Proof of Theorem \ref{lemma:policy improvement hjb}}
We first prove a preliminary result about the entropy maximizing density.
\begin{lemma}
\label{lemma:entropy max}
Let $\gamma>0$ and a measurable function $q:\mathcal{A}\to\mathbb{R}$ with $\int_{{\cal A}} \exp\{\frac{1}{\gamma}q(a)\} \dd a < \infty$ be given. Then  $\pi^*(a) = \frac{\exp\{\frac{1}{\gamma}q(a)\}}{\int_{{\cal A}} \exp\{\frac{1}{\gamma}q(a)\} \dd a} \in \mathcal{P}(\mathcal{A})$ is the unique maximizer of the following problem
\begin{equation}
\label{eq:entropy max}
\max_{\pi(\cdot)\in
\mathcal{P}(\mathcal{A})}\int_{\mathcal{A}} [q(a) - \gamma\log\pi(a)]\pi(a)\dd a .
\end{equation}	
\end{lemma}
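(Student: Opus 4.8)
The plan is to solve the variational problem \eqref{eq:entropy max} by a standard Gibbs-variational argument built on the non-negativity of the Kullback--Leibler divergence. First I would observe that the claimed maximizer $\pi^*$ is indeed a probability density on $\mathcal{A}$: it is non-negative, and it integrates to $1$ by construction because of the normalizing denominator (which is finite by hypothesis). Next, for an arbitrary $\pi(\cdot)\in\mathcal{P}(\mathcal{A})$ I would rewrite the objective functional in terms of $\pi^*$. Writing $Z=\int_{\mathcal{A}}\exp\{\frac{1}{\gamma}q(a)\}\,\dd a$, so that $q(a)=\gamma\log\pi^*(a)+\gamma\log Z$, we get
\[
\int_{\mathcal{A}}[q(a)-\gamma\log\pi(a)]\pi(a)\,\dd a
=\gamma\log Z+\gamma\int_{\mathcal{A}}\log\frac{\pi^*(a)}{\pi(a)}\,\pi(a)\,\dd a
=\gamma\log Z-\gamma\,D_{KL}(\pi\,\|\,\pi^*).
\]
This identity is the heart of the argument; it turns the problem into one of minimizing a KL divergence.

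From here the conclusion is immediate: since $D_{KL}(\pi\,\|\,\pi^*)\geq 0$ with equality if and only if $\pi=\pi^*$ (a.e.\ on $\mathcal{A}$), the objective is bounded above by $\gamma\log Z$, and this bound is attained exactly when $\pi=\pi^*$. I would either invoke Gibbs' inequality / Jensen's inequality applied to the convex function $-\log$ directly, or cite the standard fact about strict convexity of $t\mapsto t\log t$ to get both the inequality and the uniqueness of the maximizer. One should take a small amount of care to note that the objective is well-defined (possibly $-\infty$) for general $\pi$, so that the supremum is genuinely attained at an admissible density rather than merely approached; restricting attention to $\pi$ with finite objective poses no loss.

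The main obstacle — really the only subtlety — is a measure-theoretic/integrability bookkeeping issue: one must make sure the manipulations above (splitting the integral of $q\pi$, rearranging into a KL term) are valid, i.e.\ that $\int q(a)\pi(a)\dd a$ and $\int \log\pi(a)\,\pi(a)\dd a$ are not both infinite in a way that makes the algebra meaningless. The clean way to handle this is to work only with those $\pi$ for which the objective in \eqref{eq:entropy max} is $>-\infty$ (for any other $\pi$ there is nothing to prove, as such a $\pi$ cannot be the maximizer given that $\pi^*$ achieves the finite value $\gamma\log Z$), and on that class the decomposition into $\gamma\log Z-\gamma D_{KL}(\pi\|\pi^*)$ is legitimate with all terms in $[-\infty,+\infty)$. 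Everything else is routine.
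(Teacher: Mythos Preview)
Your argument is correct and is a clean Gibbs-variational proof: rewrite the objective as $\gamma\log Z-\gamma D_{KL}(\pi\|\pi^*)$ and invoke the non-negativity and strict positivity (unless $\pi=\pi^*$) of the KL divergence. The integrability caveat you flag is handled appropriately.

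The paper proceeds by a different, Lagrangian-style route. It introduces a multiplier $\nu=1-\log Z$ so that the constrained problem over $\mathcal{P}(\mathcal{A})$ becomes equivalent to an unconstrained problem over positive functions, then performs a \emph{pointwise} maximization in $\pi(a)$ of the integrand $[q(a)-\gamma\log\pi]\pi+\gamma\nu\pi$; the unique pointwise maximizer turns out to be $\pi^*(a)$, which happens to lie in $\mathcal{P}(\mathcal{A})$, so the relaxation is tight and uniqueness is inherited from the pointwise step. Your approach is arguably more transparent and immediately identifies the optimal value as $\gamma\log Z$, at the cost of quoting the KL inequality as a black box; the paper's approach is more self-contained (it derives the optimizer from scratch via calculus on $\pi\mapsto[q-\gamma\log\pi]\pi+\gamma\nu\pi$) and makes the Lagrange-multiplier structure explicit, but is slightly less direct.
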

\begin{proof}
Set $\nu = 1 - \log\int_{{\cal A}} \exp\{\frac{1}{\gamma}q(a)\} \dd a $ and consider a new problem:
\begin{equation}
\label{eq:entropy max dual}
\max_{\pi(\cdot)\in \mathcal{P}(\mathcal{A})}\int_{\mathcal{A}} \big\{ [q(a) - \gamma\log\pi(a)]\pi(a) + \gamma \nu\pi(a) \big\} \dd a - \gamma\nu .
\end{equation}

Noting $\int_{{\cal A}} \pi(a)\dd a = 1$ due to $\pi(\cdot)\in \mathcal{P}(\mathcal{A})$,  problems \eqref{eq:entropy max} and \eqref{eq:entropy max dual} are equivalent; 
hence we focus on problem \eqref{eq:entropy max dual}. Relaxing the constraint of being a density function we deduce
\begin{equation}
\label{eq:entropy max relax}
\begin{aligned}
& \max_{\pi(\cdot)\in \mathcal{P}(\mathcal{A})}\int_{\mathcal{A}} \big\{ [q(a) - \gamma\log\pi(a)]\pi(a) + \gamma \nu\pi(a) \big\} \dd a - \gamma\nu \\ \leq & \max_{\pi(\cdot) > 0}\int_{\mathcal{A}} \big\{ [q(a) - \gamma\log\pi(a)]\pi(a) + \gamma \nu\pi(a) \big\} \dd a - \gamma\nu \\
\leq & \int_{\mathcal{A}}  \max_{\pi > 0}\big\{ [q(a) - \gamma\log\pi]\pi + \gamma \nu\pi \big\} \dd a - \gamma\nu .
\end{aligned}
\end{equation}
The unique maximizer of the inner optimization on the right hand side of the above is 
\[ \exp\{ \frac{1}{\gamma}q(a) + \nu - 1  \} = \frac{\exp\{\frac{1}{\gamma}q(a)\}}{\int_{{\cal A}} \exp\{\frac{1}{\gamma}q(a)\} \dd a}  =: \pi^*(a). \]
As $\pi^*\in \mathcal{P}(\mathcal{A})$, it is the optimal solution to \eqref{eq:entropy max dual} or, equivalently, \eqref{eq:entropy max}. The uniqueness follows from that of the inner optimization in \eqref{eq:entropy max relax}.
\end{proof}

Now we are ready to prove Theorem \ref{lemma:policy improvement hjb}. To ease notation we use the q-function $q$ even though it had not been introduced prior to  Theorem \ref{lemma:policy improvement hjb} in the main text. We also employ the results of Theorem \ref{thm:q rate function property} whose proof does not rely on Theorem \ref{lemma:policy improvement hjb} nor its proof here.

Recall from \eqref{eq:q rate}, $q(t,x,a;\bm\pi)$ is defined to be
\[ q(t,x,a;\bm{\pi}) = \frac{\partial J}{\partial t}(t,x;\bm{\pi}) + H\left( t,x,a,\frac{\partial J}{\partial x}(t,x;\bm{\pi}), \frac{\partial^2 J}{\partial x^2}(t,x;\bm{\pi})\right)  -\beta J(t,x;\bm{\pi}) , \]
with the Hamiltonian $H$ defined in \eqref{eq:H}.
\vspace{1em}

\begin{proof}[Proof of Theorem \ref{lemma:policy improvement hjb}]

We work on the equivalent formulation \eqref{eq:model relaxed}--\eqref{eq:objective relaxed}. For two given admissible policies $\bm\pi$ and $\bm\pi'$, and any
$0\leq t\leq T$,  apply It\^o's lemma to the process $e^{-\beta s}J(s,X_s^{\bm\pi'};\bm\pi)$ from $s\in [t,T]$, which is the value function under $\bm\pi$ but over the state process under $\bm\pi'$:
\[\begin{aligned}
& e^{-\beta T}J(T,\tilde{X}_{T}^{\bm\pi'};\bm\pi) - e^{-\beta t}J(t,\tilde{X}_{t}^{\bm\pi'};\bm\pi) + \int_t^{T}e^{-\beta s}\int_{\mathcal{A}}  [r(s,\tilde{X}_s^{\bm\pi'},a) - \gamma\log\bm\pi'(a|s,\tilde{X}_s^{\bm\pi'}) ]\bm\pi'(a|s,\tilde{X}_s^{\bm\pi'})\dd a \dd s  \\
= & \int_t^{T} e^{-\beta s} \int_{\mathcal{A}} \big[\frac{\partial J}{\partial t}(s,\tilde{X}_s^{\bm\pi'};\bm{\pi}) + H\big( s,\tilde{X}_s^{\bm\pi'},a,\frac{\partial J}{\partial x}(s,\tilde{X}_s^{\bm\pi'};\bm{\pi}), \frac{\partial^2 J}{\partial x^2}(s,\tilde{X}_s^{\bm\pi'};\bm{\pi})\big)  -\beta J(s,\tilde{X}_s^{\bm\pi'};\bm{\pi})\\
& - \gamma\log\bm\pi'(a|s,X_s^{\bm\pi'}) \big]\bm\pi'(a|s,\tilde{X}_s^{\bm\pi'})\dd a \dd s  + \int_t^{T} e^{-\beta s}\frac{\partial }{\partial x}J(s,\tilde{X}_s^{\bm\pi'};\bm\pi) \circ \tilde{\sigma}\big(s,\tilde{X}_s^{\bm\pi'},\bm\pi'(\cdot|s,\tilde{X}_s^{\bm\pi'}) \big) \dd W_s \\
= & \int_t^{T} e^{-\beta s} \int_{\mathcal{A}} \big[q(s,\tilde{X}^{\bm\pi'}_s,a;\bm\pi) - \gamma\log\bm\pi'(a|s,X_s^{\bm\pi'}) \big]\bm\pi'(a|s,\tilde{X}_s^{\bm\pi'})\dd a \dd s \\
& + \int_t^{T} e^{-\beta s}\frac{\partial }{\partial x}J(s,\tilde{X}_s^{\bm\pi'};\bm\pi) \circ \tilde{\sigma}\big(s,\tilde{X}_s^{\bm\pi'},\bm\pi'(\cdot|s,\tilde{X}_s^{\bm\pi'}) \big) \dd W_s. \\
\end{aligned}   \]
Because $\bm\pi' = \mathcal{I}\bm\pi$, it follows from Lemma \ref{lemma:entropy max} that for any $(s,y)\in [0,T]\times \mathbb{R}^d$, we have
\[
\int_{\mathcal{A}} \big[q(s,y,a;\bm\pi) - \gamma\log\bm\pi'(a|s,y) \big]\bm\pi'(a|s,y)\dd a \geq \int_{\mathcal{A}} \big[q(s,y,a;\bm\pi) - \gamma\log\bm\pi(a|s,y) \big]\bm\pi(a|s,y)\dd a = 0,\]
where the equality is due to \eqref{eq:q hjb} in Theorem \ref{thm:q rate function property}.
Thus,
\[\begin{aligned}
& e^{-\beta T}J(T,\tilde{X}_{T}^{\bm\pi'};\bm\pi) - e^{-\beta t}J(t,\tilde{X}_{t}^{\bm\pi'};\bm\pi) + \int_t^{T}e^{-\beta s}\int_{\mathcal{A}}  [r(s,\tilde{X}_s^{\bm\pi'},a) - \gamma\log\bm\pi'(a|s,\tilde{X}_s^{\bm\pi'}) ]\bm\pi'(a|s,\tilde{X}_s^{\bm\pi'})\dd a \dd s  \\
\geq &  \int_t^{T} e^{-\beta s}\frac{\partial }{\partial x}J(s,\tilde{X}_s^{\bm\pi'};\bm\pi) \circ \tilde{\sigma}\big(s,\tilde{X}_s^{\bm\pi'},\bm\pi'(\cdot|s,\tilde{X}_s^{\bm\pi'}) \big) \dd W_s .
\end{aligned}   \]

The above argument and the resulting inequalities are also valid  when $T$ is replaced by $T\wedge u_n$, where $u_n = \inf\{s\geq t:|\tilde{X}_s^{\bm{\pi}'}| \geq n \}$ is a sequence of stopping times. Therefore,
\begin{equation}
\label{eq:proof value improve inequ}
\begin{aligned}
J(t,x;\bm\pi) \leq & \E^{\p^W}\bigg[e^{-\beta(T\wedge u_n-t)}h(\tilde{X}_{T\wedge u_n}^{\bm\pi'}) \\
& + \int_t^{T\wedge u_n} e^{-\beta(s-t)}\int_{\mathcal{A}}  [r(s,\tilde{X}_s^{\bm\pi'},a) - \gamma\log\bm\pi'(a|s,\tilde{X}_s^{\bm\pi'}) ]\bm\pi'(a|s,\tilde{X}_s^{\bm\pi'})\dd a \dd s \Big| \tilde{X}_t^{\bm\pi'} = x \bigg].
\end{aligned}
\end{equation}

It follows from Assumption \ref{ass:dynamic}-(iv), Definition \ref{ass:dynamic}-(iii) and the moment estimate in \citet[Lemma 2]{jia2021policypg} that there exist constants $\mu, C > 0$ such that
\[ \begin{aligned}
& \left|  \E^{\p^W}\bigg[ \int_t^{T\wedge u_n} e^{-\beta(s-t)}\int_{\mathcal{A}}  [r(s,\tilde{X}_s^{\bm\pi'},a) - \gamma\log\bm\pi'(a|s,\tilde{X}_s^{\bm\pi'}) ]\bm\pi'(a|s,\tilde{X}_s^{\bm\pi'})\dd a \dd s \Big| \tilde{X}_t^{\bm\pi'} = x \bigg] \right| \\
\leq & \E^{\p^W}\bigg[ \int_t^{T} \int_{\mathcal{A}}  |r(s,\tilde{X}_s^{\bm\pi'},a) - \gamma\log\bm\pi'(a|s,\tilde{X}_s^{\bm\pi'}) |\bm\pi'(a|s,\tilde{X}_s^{\bm\pi'})\dd a \dd s \Big| \tilde{X}_t^{\bm\pi'} = x \bigg] \\
\leq & C \int_t^T \E^{\p^W}\bigg[ (1 + |\tilde{X}_s^{\bm\pi'}|^{\mu}) \Big| \tilde{X}_t^{\bm\pi'} = x \bigg] \dd t \leq C(1 + |x|^{\mu}).
\end{aligned} \]
By the dominated convergence theorem, we have as $n\to \infty$,
\[\begin{aligned}
& \E^{\p^W}\bigg[ \int_t^{T\wedge u_n} e^{-\beta(s-t)}\int_{\mathcal{A}}  [r(s,\tilde{X}_s^{\bm\pi'},a) - \gamma\log\bm\pi'(a|s,\tilde{X}_s^{\bm\pi'}) ]\bm\pi'(a|s,\tilde{X}_s^{\bm\pi'})\dd a \dd s \Big| \tilde{X}_t^{\bm\pi'} = x \bigg] \\ \to & \E^{\p^W}\bigg[ \int_t^{T} e^{-\beta(s-t)}\int_{\mathcal{A}}  [r(s,\tilde{X}_s^{\bm\pi'},a) - \gamma\log\bm\pi'(a|s,\tilde{X}_s^{\bm\pi'}) ]\bm\pi'(a|s,\tilde{X}_s^{\bm\pi'})\dd a \dd s \Big| \tilde{X}_t^{\bm\pi'} = x \bigg] .
\end{aligned} \]

In addition,
\[ \begin{aligned}
& \E^{\p^W}\bigg[e^{-\beta(T\wedge u_n-t)}h(\tilde{X}_{T\wedge u_n}^{\bm\pi'}) \Big| \tilde{X}_t^{\bm\pi'} = x \bigg] \\ \leq & \E^{\p^W}\bigg[h(\tilde{X}_{T}^{\bm\pi'}) \one_{\{ \max_{t\leq s\leq T}|\tilde{X}_T^{\bm\pi'}| < n \}}\Big| \tilde{X}_t^{\bm\pi'} = x \bigg] + \E^{\p^W}\bigg[h(\tilde{X}_{u_n}^{\bm\pi'}) \one_{\{ \max_{t\leq s\leq T}|\tilde{X}_T^{\bm\pi'}| \geq n \}}\Big| \tilde{X}_t^{\bm\pi'} = x \bigg] \\
\leq & C\E^{\p^W}\bigg[ (1 + |\tilde{X}_T^{\bm\pi'}|^{\mu})\Big| \tilde{X}_t^{\bm\pi'} = x \bigg] + C(1+n^{\mu}) \E^{\p^W}\bigg[ \one_{\{ \max_{t\leq s\leq T}|\tilde{X}_T^{\bm\pi'}| \geq n \}}\Big| \tilde{X}_t^{\bm\pi'} = x \bigg] \\
\leq & C(1+|x|^{\mu}) + C(1+n^{\mu}) \frac{\E^{\p^W}\bigg[\max_{t\leq s\leq T}|\tilde{X}_T^{\bm\pi'}|^{\mu+1}\Big| \tilde{X}_t^{\bm\pi'} = x \bigg]}{n^{\mu+1}} \\
\leq & C(1+|x|^{\mu}) + \frac{C(1+n^{\mu})(1 + |x|^{\mu+1})}{n^{\mu+1}} < \infty.
\end{aligned} \]
Again, by the dominated convergence theorem, we have as $n\to \infty$,
\[\E^{\p^W}\bigg[e^{-\beta(T\wedge u_n-t)}h(\tilde{X}_{T\wedge u_n}^{\bm\pi'}) \Big| \tilde{X}_t^{\bm\pi'} = x \bigg] \to \E^{\p^W}\bigg[e^{-\beta(T-t)}h(\tilde{X}_{T}^{\bm\pi'}) \Big| \tilde{X}_t^{\bm\pi'} = x \bigg] .  \]
Hence, sending $n\to\infty$, we conclude from \eqref{eq:proof value improve inequ} that $J(t,x;\bm\pi)\leq J(t,x;\bm\pi')$. This proves that $\bm\pi' = \mathcal I\bm\pi$ improves upon $\bm\pi$. Moreover, if  $\mathcal I\bm\pi\equiv \bm\pi'  = \bm\pi$, then $J(t,x;\bm\pi)\equiv J(t,x;\bm\pi')=:J^*(t,x)$, which satisfies the PDE (\ref{eq:explore fk}). However,
Lemma \ref{lemma:entropy max} shows that
\[ \begin{aligned}
&\int_{\mathcal{A}} \left[ H\big( t,x,a,\frac{\partial J^*}{\partial x}(t,x), \frac{\partial^2 J^*}{\partial x^2}(t,x) \big) - \gamma \log \bm{\pi'}(a|t,x) \right]  \bm{\pi'}(a|t,x)\dd a\\
=&\sup_{\bm \pi \in \mathcal{P}(\mathcal{A})} \int_{\mathcal{A}} \left[ H\big( t,x,a,\frac{\partial J^*}{\partial x}(t,x), \frac{\partial^2 J^*}{\partial x^2}(t,x) \big) - \gamma \log \bm{\pi}(a) \right]  \bm{\pi}(a)\dd a.
\end{aligned}\]
This means that $J^*$ also satisfies the HJB equation \eqref{eq:explore hjb}, implying that $J^*$ is the optimal value function and hence $\bm\pi$ is the optimal policy.
\end{proof}

\subsection*{Proof of Proposition \ref{prop:Q expand}}
\begin{proof}
Breaking the full time period $[t,T]$ into subperiods $[t,t+\Delta t)$ and $[t+\Delta t, T]$, while  conditioning on the state at $t+\Delta t$ for the second subperiod, we obtain:
\begin{equation}\label{eq:proof qdt}
\begin{aligned}
& Q_{\Delta t}(t,x,a;\bm{\pi} )\\
= & \E^{\p}\bigg[ \int_t^{t+\Delta t} e^{-\beta(s-t)}r(s,X_s^{a},a)\dd s\\
& + \E^{\p}\big[ \int_{t+\Delta t}^T e^{-\beta(s-t)}[r(s,X_s^{\bm{\pi}},a_s^{\bm \pi})-\gamma\log \bm{\pi}(a_s^{\bm \pi}|s,X_s^{\bm{\pi}}) ]\dd s + e^{-\beta(T-t)}h(X_T^{\bm{\pi}})|X_{t+\Delta t}^a\big]\Big|X_t^{\tilde{\bm{\pi}}} = x \bigg] \\
= & \E^{\p^W}\bigg[ \int_t^{t+\Delta t} e^{-\beta(s-t)} r(s,X_s^{a},a) \dd s + e^{-\beta \Delta t}J(t+\Delta t, X_{t+ \Delta t}^{a};\bm{\pi})\Big|X_t^{\tilde{\bm{\pi}}} = x \bigg] \\
= & \E^{\p^W}\bigg[ \int_t^{t+\Delta t} e^{-\beta(s-t)} r(s,X_s^{a},a)\dd s + e^{-\beta \Delta t}J(t+\Delta t, X_{t+ \Delta t}^{a};\bm{\pi}) - J(t,x;\bm{\pi})\Big|X_t^{\tilde{\bm{\pi}}} = x \bigg] + J(t,x;\bm{\pi}) \\
= & \E^{\p^W}\bigg[ \int_t^{t+\Delta t} e^{-\beta(s-t)} \big[ \frac{\partial J}{\partial t}(s,X_s^a;\bm{\pi}) + H\big( s,X_s^{a},a,\frac{\partial J}{\partial x}(s,X_s^a;\bm{\pi}), \frac{\partial^2 J}{\partial x^2}(s,X_s^a;\bm{\pi})\big) \\
& -\beta J(s,X_s^a;\bm{\pi}) \big]\dd s\Big|X_t^{\tilde{\bm{\pi}}} = x \bigg]   + J(t,x;\bm{\pi}) \\
= & J(t,x;\bm{\pi}) + \left[ \frac{\partial J}{\partial t}(t,x;\bm{\pi}) + H\big( t,x,a,\frac{\partial J}{\partial x}(t,x;\bm{\pi}), \frac{\partial^2 J}{\partial x^2}(t,x;\bm{\pi})\big)  -\beta J(t,x;\bm{\pi}) \right]  \Delta t + o( \Delta t ),
\end{aligned}
\end{equation}
where the second to the last equality follows from It\^o's lemma and the last equality and the error order are due to the approximation of the integral involved.
\end{proof}
\subsection*{Proof of Corollary \ref{coro:q as derivative}}
\begin{proof}
Its proof directly follows from the results of Proposition \ref{prop:Q expand}.
\end{proof}
\subsection*{Proof of Theorem \ref{thm:q rate function property}}
\begin{proof}
First, \eqref{eq:q hjb} follows readily from its definition in Definition \ref{def:q rate function}, the Feynman--Kac formula  \eqref{eq:explore fk}, and the fact that $\frac{\partial J}{\partial t}(t,x;\bm\pi)$ and $\beta J(t,x;\bm\pi)$ both do not depend on action $a$.

\begin{enumerate}[(i)]
\item We now focus on \eqref{eq:martingale with q function}. Applying It\^o's lemma to the process $e^{-\beta s}J(s,X_s^{\bm\pi};\bm\pi)$, we obtain for $0\leq t <s\leq T$:
\[ \begin{aligned}
& e^{-\beta s} J(s,{X}_{s}^{\bm\pi};\bm\pi) - e^{-\beta t} J(t,x;\bm\pi) + \int_t^{s} e^{-\beta u} [r(u,{X}_{u}^{\bm\pi},a^{\bm\pi}_{u}) - \hat{q}(u,{X}_{u}^{\bm\pi},a^{\bm\pi}_{u}) ]\dd u \\
= & \int_t^{s} e^{-\beta u} \Big[\frac{\partial J}{\partial t}(u,{X}_{u}^{\bm\pi},a^{\bm\pi}_{u}) + H\big( u,{X}_{u}^{\bm\pi},a^{\bm\pi}_{u},\frac{\partial J}{\partial x}(u,{X}_{u}^{\bm\pi};\bm{\pi}), \frac{\partial^2 J}{\partial x^2}(u,{X}_{u}^{\bm\pi};\bm{\pi})\big)  -\beta J(u,{X}_{u}^{\bm\pi};\bm{\pi})\\
&  - \hat{q}(u,{X}_{u}^{\bm\pi},a^{\bm\pi}_{u}) \Big]\dd u  + \int_t^{s}e^{-\beta u} \frac{\partial }{\partial x}J(u,{X}_{u}^{\bm\pi};\bm\pi) \circ \dd W_{u} \\
=& \int_t^{s} e^{-\beta u} \big[q(u,{X}_{u}^{\bm\pi},a^{\bm\pi}_{u};\bm\pi) - \hat{q}(u,{X}_{u}^{\bm\pi},a^{\bm\pi}_{u}) \big]\dd u  + \int_t^{s}e^{-\beta u} \frac{\partial }{\partial x}J(u,{X}_{u}^{\bm\pi};\bm\pi) \circ \sigma(u,{X}_{u}^{\bm\pi},a^{\bm\pi}_{u}) \dd W_{u} .
\end{aligned}\]
Recall that $\{a_s^{\bm\pi}, t\leq s\leq T\}$ is $\{\f_s\}_{s\geq0}$-progressively measurable. So, if $\hat q\equiv q(\cdot,\cdot,\cdot;\bm\pi)$, then the above process, and hence \eqref{eq:martingale with q function}, is  an $(\{\f_s\}_{s\geq 0},\p)$-martingale on $[t,T]$.

Conversely, if the right hand side of the above is a martingale,  then,  because its second term is a local martingale, we have that
$\int_t^{s} e^{-\beta u} \big[q(u,{X}_{u}^{\bm\pi},a^{\bm\pi}_{u};\bm\pi) - \hat{q}(u,{X}_{u}^{\bm\pi},a^{\bm\pi}_{u}) \big]\dd u$ is a continuous local martingale with finite variation and hence zero quadratic variation. Therefore, $\p$-almost surely, $\int_t^{s} e^{-\beta u} \big[q(u,{X}_{u}^{\bm\pi},a^{\bm\pi}_{u};\bm\pi) - \hat{q}(u,{X}_{u}^{\bm\pi},a^{\bm\pi}_{u}) \big]\dd u = 0$ for all $s\in [t,T]$;  see, e.g., \cite[Chapter 1, Exercise 5.21]{karatzas2014brownian}.

Denote $f(t,x,a) = q(t,x,a;\bm\pi) - \hat{q}(t,x,a)$. Then $f$  is a continuous function that maps $[0,T]\times \mathbb{R}^d \times\mathcal{A}$ to $\mathbb{R}$. Suppose the desired conclusion is not true, then 
there exists a triple $(t^*,x^*,a^*)$ and $\epsilon>0$ such that $f(t^*,
x^*,a^*) > \epsilon$. Because $f$ is continuous, there exists $\delta > 0$ such that $f(u,x',a') > \epsilon/2$ for all $(u,x',a')$ with $ |u-t^*|\vee |x'-x^*|\vee|a'-a^*| < \delta$. Here ``$\vee$'' means taking the larger one, i.e., $u \vee v = \max\{u,v\}$. 

Now consider the state process, still denoted by $X^{\bm\pi}$, starting from $(t^*,x^*,a^*)$,  namely,  $\{{X}_s^{\bm\pi}, t^*\leq s\leq T\}$ follows  (\ref{eq:model pi}) with ${X}_{t^*}^{\bm\pi}=x^*$ and ${a}_{t^*}^{\bm\pi}=a^*$.
Define
\[ \tau = \inf\{ u\geq t^*: |u-t^*| > \delta\text{ or } |{X}_{u}^{\bm\pi} - x^*| > \delta  \} = \inf\{ u\geq t^*:  |{X}_{u}^{\bm\pi} - x^*| > \delta  \} \wedge (t^*+\delta) .\]
The continuity of $X^{\bm\pi}$ implies that $u > t^*$,  $\p$-almost surely. Here ``$\wedge$'' means taking the smaller one, i.e., $u \wedge v = \min\{u,v\}$.

We have already proved that  there exists  $\Omega_0\in \f$ with $\p(\Omega_0) = 0$ such that for all $\omega \in \Omega\setminus\Omega_0$,
$\int_{t^*}^{s} e^{-\beta u} f(u,{X}_{u}^{\bm\pi}(\omega),a^{\bm\pi}_{u}(\omega))\dd u = 0$ for all $s\in [t^*,T]$. It follows from Lebesgue's differentiation theorem that for any $\omega \in \Omega\setminus\Omega_0$,
\[ f(s,{X}_{s}^{\bm\pi}(\omega),a^{\bm\pi}_{s}(\omega)) = 0 ,\text{ a.e. } s\in [t^*,\tau(\omega)] .\]
Consider the set $Z(\omega) =  \{s\in [t^*,\tau(\omega)]:  a^{\bm\pi}_{s}(\omega)\in \mathcal{B}_{\delta}(a^*) \} \subset [t^*,\tau(\omega)]$, where $\mathcal{B}_{\delta}(a^*) = \{a'\in \mathcal{A}: |a'-a^*| > \delta\}$ is the neighborhood of $a^*$. Because $f(s,{X}_{s}^{\bm\pi}(\omega),a^{\bm\pi}_{s}(\omega)) > \frac{\epsilon}{2}$
when $s\in Z(\omega)$, we conclude that $Z(\omega)$ has Lebesgue measure zero for
any $\omega \in \Omega\setminus\Omega_0$.  
That is,
\[ \int_{[t^*,T]} \one_{\{ s \leq \tau(\omega) \}}   \one_{\{ a^{\bm\pi}_{s}(\omega)\in \mathcal{B}_{\delta}(a^*) \}} \dd s=0. \]
Integrating $\omega$ with respect to $\p$ and applying Fubini's theorem, we obtain
\[\begin{aligned}
0 = & \int_{\Omega} \int_{[t^*,T]} \one_{\{ s \leq \tau(\omega) \}}   \one_{\{ a^{\bm\pi}_{s}(\omega)\in \mathcal{B}_{\delta}(a^*) \}} \dd s \p(\dd \omega) = \int_{[t^*,T]}\E[ \one_{\{ s \leq \tau \}}   \one_{\{ a^{\bm\pi}_{s}\in \mathcal{B}_{\delta}(a^*) \}} ] \dd s \\
= & \int_{t^*}^T \E\left[\one_{\{ s \leq \tau \}} \p\left(a^{\bm\pi}_{s}\in \mathcal{B}_{\delta}(a^*)  |\f_s\right) \right]\dd s = \int_{t^*}^T \E\left[\one_{\{ s \leq \tau \}} \int_{ \mathcal{B}_{\delta}(a^*) } \bm{\pi}(a|s,X^{\bm\pi}_s) \dd a \right]\dd s \\
\geq & \min_{|x'-x^*| < \delta, |u-t^*|<\delta} \left\{ \int_{ \mathcal{B}_{\delta}(a^*) } \bm{\pi}(a|u,x') \dd a \right\} \int_{t^*}^T \E\left[\one_{\{ s \leq \tau \}} \right]\dd s \\
= & \min_{|x'-x^*| < \delta, |u-t^*|<\delta} \left\{ \int_{ \mathcal{B}_{\delta}(a^*) } \bm{\pi}(a|u,x') \dd a \right\} \E[(\tau\wedge T) - t^*] \geq 0 .
\end{aligned} \]
Since $\tau > t^*$  $\p$-almost surely, the above  implies $\min_{|x'-x^*| < \delta, |u-t^*|<\delta} \left\{ \int_{ \mathcal{B}_{\delta}(a^*) } \bm{\pi}(a|u,x') \dd a \right\} = 0$. However, this contradicts Definition \ref{ass:admissible} about an admissible policy. Indeed, Definition \ref{ass:admissible}-(i) stipulates $\operatorname{supp}\bm\pi(\cdot|t,x) = \mathcal{A}$ for any $(t,x)$; hence $\int_{ \mathcal{B}_{\delta}(a^*) } \bm{\pi}(a|t,x) \dd a > 0$. Then the continuity in
Definition \ref{ass:admissible}-(ii) yields
$\min_{|x'-x^*| < \delta, |u-t^*|<\delta} \left\{ \int_{ \mathcal{B}_{\delta}(a^*) } \bm{\pi}(a|u,x') \dd a \right\} > 0$, a contradiction.
Hence we conclude $q(t,x,a;\bm\pi) = \hat{q}(t,x,a)$ for every $(t,x,a)$.

\item Applying It\^o's lemma to $e^{-\beta s}J(s,X_s^{\bm\pi'};\bm\pi)$, we get
\[  \begin{aligned}
& e^{-\beta s} J(s,{X}_{s}^{\bm\pi'};\bm\pi) - e^{-\beta t} J(t,x;\bm\pi) + \int_t^{s} e^{-\beta u} [r(u,{X}_{u}^{\bm\pi'},a^{\bm\pi'}_{u}) - \hat{q}(u,{X}_{u}^{\bm\pi'},a^{\bm\pi'}_{u}) ]\dd u \\
=& \int_t^{s} e^{-\beta u} \big[q(u,{X}_{u}^{\bm\pi'},a^{\bm\pi'}_{u};\bm\pi) - \hat{q}(u,{X}_{u}^{\bm\pi'},a^{\bm\pi'}_{u}) \big]\dd u  + \int_t^{s}e^{-\beta u} \frac{\partial }{\partial x}J(u,{X}_{u}^{\bm\pi'};\bm\pi) \circ \sigma(u,{X}_{u}^{\bm\pi'},a^{\bm\pi'}_{u}) \dd W_{u} .
\end{aligned} \]
So, when $\hat q\equiv q(\cdot,\cdot,\cdot;\bm\pi)$,  the above process is  an $(\{\f_s\}_{s\geq 0},\p)$-martingale on $[t,T]$.

\item Let $\bm\pi'\in \bm\Pi$ be given satisfying the assumption in this part. It then  follows from  (ii) that $\int_t^s e^{-\beta u} [\hat{q}(u,{X}_{u}^{\bm\pi'},a^{\bm\pi'}_{u}) - q(u,{X}_{u}^{\bm\pi'},a^{\bm\pi'}_{u};\bm\pi) ]\dd u$ is an $(\{\f_s\}_{s\geq 0}, \p)$-martingale. If the desired conclusion is not true, then the same argument in (i)  still applies to conclude that $\min_{|x'-x^*| < \delta, |u-t^*|<\delta} \left\{ \int_{ \mathcal{B}_{\delta}(a^*) } \bm{\pi}'(a|u,x') \dd a \right\} = 0$, which is a contradiction because $\bm\pi'$ is admissible. 
\end{enumerate}

\end{proof}

\subsection*{Proof of Theorem \ref{thm:qv learn}}
\begin{proof}
\begin{enumerate}[(i)]
\item We only prove the ``only if" part because the ``if" part is straightforward following the same argument as in the proof of Theorem \ref{thm:q rate function property}.

So, we assume $e^{-\beta t} \hat{J}(t,X_t^{{\bm\pi}}) + \int_0^t e^{-\beta s} [r(s,X_s^{{\bm\pi}},a_s^{{\bm\pi}}) - \hat{q}(s,X_s^{{\bm\pi}},a_s^{{\bm\pi}}) ]\dd s$
is an $(\{\f_s\}_{s\geq 0},\p)$-martingale. Hence for any initial state $(t,x)$, we have
\[ \E^{\p}\left[  e^{-\beta T} \hat{J}(T,X_T^{{\bm\pi}}) + \int_t^T e^{-\beta s} [r(s,X_s^{{\bm\pi}},a_s^{{\bm\pi}}) - \hat{q}(s,X_s^{{\bm\pi}},a_s^{{\bm\pi}}) ]\dd s \Big| \f_t \right] = e^{-\beta t}\hat{J}(t,x) . \]
We integrate over the action randomization with respect to the policy $\bm\pi$, and then obtain
\[ \E^{\p^W}\left[  e^{-\beta T} \hat{J}(T,\tilde X_T^{{\bm\pi}}) + \int_t^T e^{-\beta s} \int_{{\cal A}} [r(s,\tilde X_s^{{\bm\pi}},a) - \hat{q}(s,\tilde X_s^{{\bm\pi}},a) ]\bm\pi(a|s,\tilde{X}_s^{\bm\pi})\dd a \dd s \Big| \f^{W}_t \right] = e^{-\beta t}\hat{J}(t,x) \]
%


This, together with the terminal condition $\hat{J}(T,x) = h(x)$, and constraint \eqref{eq:q hjb2} yields that
\[ \hat{J}(t,x) = \E^{\p^W}\left[  e^{-\beta (T-t)} h(\tilde X_T^{{\bm\pi}}) + \int_t^T e^{-\beta (s-t)} \int_{{\cal A}} [r(s,\tilde X_s^{{\bm\pi}},a) - \gamma\log\bm\pi(s,\tilde X_s^{{\bm\pi}},a) ]\bm\pi(a|s,\tilde{X}_s^{\bm\pi})\dd a \dd s \Big| \f^{W}_t \right]. \]
Hence $\hat{J}(t,x) = J(t,x;{\bm\pi})$ for all $(t,x) \in [0,T]\times \mathbb{R}^d$. 
Furthermore, based on Theorem \ref{thm:q rate function property}, the martingale condition implies that 
$\hat{q}(t,x,a) = q(t,x,a;{\bm\pi})$ for all $(t,x,a) \in [0,T]\times \mathbb{R}^d \times \mathcal{A}$.

\item This follows immediately from Theorem \ref{thm:q rate function property}-(ii).
\item Let $\bm\pi'\in \bm\Pi$ be given satisfying the assumption in this part.
Define $$\hat r(t,x,a): = \frac{\partial \hat{J}}{\partial t}(t,x) + b(t,x,a)\circ \frac{\partial \hat{J}}{\partial x}(t,x) + \frac{1}{2}\sigma\sigma^\top(t,x,a)\circ \frac{\partial^2 \hat{J}}{\partial x^2}(t,x)  - \beta \hat J(t,x).$$
Then
\[ e^{-\beta s} \hat{J}(s,{X}_s^{\bm\pi'}) - \int_t^s e^{-\beta u} \hat r(u,{X}_{u}^{\bm\pi'},a^{\bm\pi'}_{u}) \dd u \]
is an $(\{\f_s\}_{s\geq 0}, \p)$-local martingale, which follows from applying It\^o's lemma to the above process and arguing similarly to the proof of Theorem \ref{thm:q rate function property}-(ii).
As a result,
$\int_t^s e^{-\beta u} [r(u,{X}_{u}^{\bm\pi'},a^{\bm\pi'}_{u}) - \hat{q}(u,{X}_{u}^{\bm\pi'},a^{\bm\pi'}_{u}) + \hat r(u,{X}_{u}^{\bm\pi'},a^{\bm\pi'}_{u})  ]\dd u$ is an $(\{\f_s\}_{s\geq 0}, \p)$-martingale. The same argument as in the proof of Theorem \ref{thm:q rate function property} applies and we conclude
\[\begin{aligned}
\hat{q}(t,x,a) = & r(t,x,a) + \hat r(t,x,a)  \\
= & \frac{\partial \hat{J}}{\partial t}(t,x) + b(t,x,a)\circ \frac{\partial \hat{J}}{\partial x}(t,x) + \frac{1}{2}\sigma\sigma^\top(t,x,a)\circ \frac{\partial^2 \hat{J}}{\partial x^2}(t,x)  - \beta \hat J(t,x) + r(t,x,a) \\
= & \frac{\partial \hat{J}}{\partial t}(t,x) + H\big( t,x,a,\frac{\partial \hat J}{\partial x}(t,x), \frac{\partial^2 \hat J}{\partial x^2}(t,x) \big)  - \beta \hat J(t,x)
\end{aligned}\]
for every $(t,x,a)$.
Now the constraint \eqref{eq:q hjb2} reads
\[ \int_{\mathcal A}  \left[ \frac{\partial \hat{J}}{\partial t}(t,x) + H\big( t,x,a,\frac{\partial \hat J}{\partial x}(t,x), \frac{\partial^2 \hat J}{\partial x^2}(t,x) \big)  - \beta \hat J(t,x) - \gamma\log\bm\pi(a|t,x) \right]   \bm\pi(a|t,x) \dd a= 0, \]
for all $(t,x)$, which,  together with the terminal condition $\hat J(T,x) = h(x)$, is the Feynman--Kac PDE \eqref{eq:explore fk} for $\hat J$. Therefore, it follows from the uniqueness of the solution to \eqref{eq:explore fk} that $\hat J \equiv J(\cdot,\cdot;\bm\pi)$.  Moreover,  it follows from Theorem \ref{thm:q rate function property}-(iii) that $\hat q \equiv q(\cdot,\cdot,\cdot;\bm\pi)$.
\end{enumerate}

Finally, if ${\bm\pi}(a|t,x) = \frac{\exp\{  \frac{1}{\gamma}\hat{q}(t,x,a) \}}{\int_{{\cal A}} \exp\{  \frac{1}{\gamma}\hat{q}(t,x,a) \} \dd a }$, then
${\bm\pi}=\mathcal I {\bm\pi}$ where $\mathcal I$ is the map defined in Theorem \ref{lemma:policy improvement hjb}. This in turn implies 
${\bm\pi}$ is the optimal policy and, hence, $\hat{J}$ is the optimal value function
\end{proof}

\subsection*{Proof of Proposition \ref{qstar1}}
\begin{proof}
Divide both sides of  \eqref{eq:optimal q} by $\gamma$, take exponential and then integrate over ${\mathcal A}$. Comparing the resulting equation with
the exploratory HJB equation \eqref{eq:explore hjb simplified}, we get
\[
\gamma\log\left[\int_{\mathcal A} \exp\{ \frac{1}{\gamma} q^*(t,x,a) \}\dd a\right] = 0.
\]
This yields \eqref{eq:optimal q hjb}. The expression \eqref{eq:optimal pi and q} follows then from \eqref{eq:gibbs}.
\end{proof}

\subsection*{Proof of Theorem \ref{thm:q optimal}}
\begin{proof}
\begin{enumerate}[(i)]
\item
Let $\widehat{J^*} = J^*$ and $\widehat{q^*} = q^*$ be the optimal value function and optimal q-function respectively.
For any ${\bm\pi}\in {\bm\Pi}$, applying It\^o's lemma to the process $e^{-\beta s}{J^*}(s,X_s^{\bm\pi})$, we obtain for $0\leq t <s\leq T$:
\[ \begin{aligned}
& e^{-\beta s} {J^*}(s,{X}_{s}^{\bm\pi}) - e^{-\beta t} J^*(t,x) + \int_t^{s} e^{-\beta u} [r(u,{X}_{u}^{\bm\pi},a^{\bm\pi}_{u}) - {q^*}(u,{X}_{u}^{\bm\pi},a^{\bm\pi}_{u}) ]\dd u \\
= & \int_t^{s} e^{-\beta u} \Big[\frac{\partial {J^*}}{\partial t}(u,{X}_{u}^{\bm\pi},a^{\bm\pi}_{u}) + H\big( u,{X}_{u}^{\bm\pi},a^{\bm\pi}_{u},\frac{\partial {J^*}}{\partial x}(u,{X}_{u}^{\bm\pi}), \frac{\partial^2 {J^*}}{\partial x^2}(u,{X}_{u}^{\bm\pi})  -\beta {J^*}(u,{X}_{u}^{\bm\pi})\\
&  - {q^*}(u,{X}_{u}^{\bm\pi},a^{\bm\pi}_{u}) \Big]\dd u  + \int_t^{s}e^{-\beta u} \frac{\partial }{\partial x}{J^*}(u,{X}_{u}^{\bm\pi}) \circ \dd W_{u} \\
=&  \int_t^{s}e^{-\beta u} \frac{\partial }{\partial x}{J^*}(u,{X}_{u}^{\bm\pi}) \circ \sigma(u,{X}_{u}^{\bm\pi},a^{\bm\pi}_{u}) \dd W_{u} ,
\end{aligned}\]
where the $\int \cdots \dd u$ term vanishes due to the definition of $q^*$ in \eqref{eq:optimal q}. Hence \eqref{eq:martingale with q function2 optimal} is a martingale.

\item The second constraint in \eqref{eq:q hjb2 optimal} implies that $\widehat{\bm\pi^*}(a|t,x): = \exp\{  \frac{1}{\gamma}\widehat{q^*}(t,x,a) \}$ is a probability density function, and $\widehat{q^*}(t,x,a) = \gamma \log\widehat{\bm\pi^*}(a|t,x)$.
So $\widehat{q^*}(t,x,a)$ satisfies the second constraint in \eqref{eq:q hjb2} with respect to the policy $\widehat{\bm\pi^*}$. When \eqref{eq:martingale with q function2 optimal} is an $(\{\f_s\}_{s\geq 0},\p)$-martingale under the given  admissible policy $\bm\pi$, it follows from Theorem \ref{thm:qv learn}--(iii) that $\widehat{J^*}$ and $\widehat{q^*}$ are respectively the value function and  the q-function associated with $\widehat{\bm\pi^*}$. Then the improved policy is $\mathcal{I} \widehat{\bm\pi^*}(a|t,x): = \frac{\exp\{  \frac{1}{\gamma}\widehat{q^*}(t,x,a) \}}{\int_{{\cal A}} \exp\{  \frac{1}{\gamma}\widehat{q^*}(t,x,a) \} \dd a } = \exp\{  \frac{1}{\gamma}\widehat{q^*}(t,x,a) \} = \widehat{\bm\pi^*}(a|t,x)$. However,  Theorem \ref{lemma:policy improvement hjb} yields that $\widehat{\bm\pi^*}$ is optimal, completing the proof.

\end{enumerate}
\end{proof}

\subsection*{Proof of Theorem \ref{lemma:policy improvement hjb constraint}}
\begin{proof}
Let us compute the KL-divergence:
\[
\begin{aligned}
& \int_{\mathcal{A}} \big[\log \bm\pi'(a|t,x) - \frac{1}{\gamma}H\big( t,x,a,\frac{\partial J}{\partial x}(t,x;\bm\pi), \frac{\partial^2 J}{\partial x^2}(t,x;\bm\pi) \big)  \big] \bm\pi'(a|t,x) \dd a \\
= & D_{KL}\left( \bm\pi'(\cdot|t,x)\Big|\Big| \exp\{ \frac{1}{\gamma}H\big( t,x,\cdot,\frac{\partial J}{\partial x}(t,x;\bm\pi), \frac{\partial^2 J}{\partial x^2}(t,x;\bm\pi) \big)    \} \right) \\
\leq & D_{KL}\left( \bm\pi(\cdot|t,x)\Big|\Big| \exp\{ \frac{1}{\gamma}H\big( t,x,\cdot,\frac{\partial J}{\partial x}(t,x;\bm\pi), \frac{\partial^2 J}{\partial x^2}(t,x;\bm\pi) \big)    \} \right)\\
= & \int_{\mathcal{A}} \big[\log \bm\pi(a|t,x) - \frac{1}{\gamma}H\big( t,x,a,\frac{\partial J}{\partial x}(t,x;\bm\pi), \frac{\partial^2 J}{\partial x^2}(t,x;\bm\pi) \big)  \big] \bm\pi(a|t,x) \dd a.
\end{aligned}
\]
Hence $\int_{\mathcal{A}} \big[q(t,x,a;\bm\pi)-\gamma\log \bm\pi'(a|t,x) \big] \bm\pi'(a|t,x) \dd a \geq \int_{\mathcal{A}} \big[q(t,x,a;\bm\pi)-\gamma\log \bm\pi(a|t,x) \big] \bm\pi(a|t,x) \dd a$.
Following the same argument as in the proof of Theorem \ref{lemma:policy improvement hjb}, we conclude  $J(t,x;\bm\pi') \geq J(t,x;\bm\pi)$.

\end{proof}

\subsection*{Proof of Lemma \ref{lemma:ergodic feynmann-kac}}
\begin{proof}
The first statement 
has been prove in \citet[Lemma 7]{jia2021policypg}. We focus on the proof of the second  statement, which is similar to that of Theorem \ref{lemma:policy improvement hjb}.

For the two admissible policies $\bm\pi$ and $\bm\pi'$, we apply It\^o's lemma to the value function under $\bm\pi$ along the process $\tilde{X}^{\bm\pi'}$ to obtain
\[\begin{aligned}
& J(\tilde{X}_{u}^{\bm\pi'};\bm\pi) - J(\tilde{X}_{t}^{\bm\pi'};\bm\pi) + \int_t^{u}\int_{\mathcal{A}}  [r(\tilde{X}_s^{\bm\pi'},a) - \gamma\log\bm\pi'(a|\tilde{X}_s^{\bm\pi'}) ]\bm\pi'(a|\tilde{X}_s^{\bm\pi'})\dd a \dd s  \\
= & \int_t^{u}\int_{\mathcal{A}} \big[ H\big( \tilde{X}_s^{\bm\pi'},a,\frac{\partial J}{\partial x}(\tilde{X}_s^{\bm\pi'};\bm{\pi}), \frac{\partial^2 J}{\partial x^2}(\tilde{X}_s^{\bm\pi'};\bm{\pi})\big) - \gamma\log\bm\pi'(a|X_s^{\bm\pi'}) \big]\bm\pi'(a|\tilde{X}_s^{\bm\pi'})\dd a \dd s \\
&  + \int_t^{u} \frac{\partial }{\partial x}J(\tilde{X}_s^{\bm\pi'};\bm\pi) \circ \tilde{\sigma}\big(\tilde{X}_s^{\bm\pi'},\bm\pi'(\cdot|\tilde{X}_s^{\bm\pi'}) \big) \dd W_s \\
\geq & \int_t^{u}\int_{\mathcal{A}} \big[ H\big( \tilde{X}_s^{\bm\pi'},a,\frac{\partial J}{\partial x}(\tilde{X}_s^{\bm\pi'};\bm{\pi}), \frac{\partial^2 J}{\partial x^2}(\tilde{X}_s^{\bm\pi'};\bm{\pi})\big) - \gamma\log\bm\pi(a|X_s^{\bm\pi'}) \big]\bm\pi(a|\tilde{X}_s^{\bm\pi'})\dd a \dd s \\
&  + \int_t^{u} \frac{\partial }{\partial x}J(\tilde{X}_s^{\bm\pi'};\bm\pi) \circ \tilde{\sigma}\big(\tilde{X}_s^{\bm\pi'},\bm\pi'(\cdot|\tilde{X}_s^{\bm\pi'}) \big) \dd W_s \\
= & V(\bm\pi)(u-t) + \int_t^{u}\frac{\partial }{\partial x}J(\tilde{X}_s^{\bm\pi'};\bm\pi) \circ \tilde{\sigma}\big(\tilde{X}_s^{\bm\pi'},\bm\pi'(\cdot|\tilde{X}_s^{\bm\pi'}) \big) \dd W_s ,
\end{aligned}   \]
where the inequality is due to the same argument as in the proof of Theorem \ref{lemma:policy improvement hjb constraint}, and the last equality is due to the Feynman--Kac formula \eqref{eq:relaxed control f-k formula ergodic}.

Therefore, after a localization argument,  we have
\[ \frac{1}{T} \E\bigg[J(\tilde{X}_{T}^{\bm\pi'};\bm\pi) - J(x;\bm\pi) + \int_0^{T}\int_{\mathcal{A}}  [r(\tilde{X}_s^{\bm\pi'},a) - \gamma\log\bm\pi'(a|\tilde{X}_s^{\bm\pi'}) ]\bm\pi'(a|\tilde{X}_s^{\bm\pi'})\dd a \dd s \Big| \tilde{X}_0^{\bm\pi'} =x \bigg] \geq V(\bm\pi) .\]
Taking the limit $T\to \infty$, we obtain $V(\bm\pi') \geq V(\bm\pi)$.
\end{proof}

\subsection*{Proof of Theorem \ref{thm:qv learn ergodic}}
\begin{proof}
The proof  is parallel to that of Theorems \ref{thm:q rate function property} and \ref{thm:qv learn}, which  is omitted.
\end{proof}

\newpage
{ \centering{\Large{\bf{Erratum to ``q-Learning in Continuous Time"}}} }
\section*{1 Introduction}
\citet[\textit{Journal of Machine Learning Research}, 24(161), 1-61]{jia_q-learning_2022} introduce the q-function for continuous-time reinforcement learning (RL) with controlled diffusion processes, and provide martingale characterizations for learning the q-function and the value function in a data-driven fashion. An implicit assumption in \citet{jia_q-learning_2022} is the possibility of continuum independent sampling from a given admissible feedback policy $\bm\pi$. More precisely, at any time--state pair $(t,x)$, the agent generates an action $a_t\sim \bm\pi(\cdot|t,x)$, and then applies this action to the environment instantaneously. This procedure leads to the time--state--action--reward sequences (all continuous-time processes) $\{s,X_s,a_s,r_s:0\leq s\leq T\}$ that satisfy
\[ \dd X_s = b(s,X_s,a_s)\dd s+ \sigma(s,X_s,a_s)\dd W_s, \]
where
\[ a_s\sim \bm\pi(\cdot|s,X_s) ,\ r_s = r(s,X_s,a_s),\ \forall s\in [0,T] . \]
\subsection*{1.1 Measure-Theoretical Issue with Continuum Sampling}
The above sampling procedure requires continuum independent draws from a non-degenerate distribution, 
for which \citet{jia_q-learning_2022} refer to the Fubini extension framework of \citet{sun2006exact} that shows it is possible to extend the Lebesgue measure (in $t$) to accommodate ``essentially pairwise independent" continuum random variables. However, there is a gap in this treatment. Theoretically, the resulting action process $\{a_s:0\leq s\leq T\}$ needs to be progressively measurable for the integral $\int_0^T b(t,X_t,a_t)\dd t$ and the stochastic integral $\int_0^T \sigma(t,X_t,a_t)\dd W_t$ to be well defined. However,  
\citet[Remark 2.1]{szpruch2024} and \citet[Section 3]{bender2024grid} point out that it is not the case in general. 

While this represents a very delicate technical gap, the theoretical results in \citet{jia_q-learning_2022} are so important that we believe an erratum is warranted.

\subsection*{1.2 Discretely Sampled Processes}
\citet{szpruch2024,bender2024grid,jia2025accuracy} all propose  using (different versions of) time-discretely sampled action processes to overcome the measurability issue. In this erratum, we take the recent framework of \citet{jia2025accuracy}.

Consider another probability space $(\Omega^\xi, \f^\xi, \p^{\xi})$ and a measurable function $\phi :[0,T]\times \mathbb R^d\times \Omega^\xi\to \mathcal A$ such that for all $(t,x)\in [0,T]\times \mathbb R^d$, the $\mathcal A$-valued random variable $\phi(t,x,\xi)$ has the distribution $\bm \pi(\cdot|t,x)$. Let $\mathbb N_0 =\mathbb N\cup\{0\}$
and let $(\Omega^{\xi_n}, \f^{\xi_n}, \p^{\xi_n},   \xi_n)_{n\in \mathbb N_0}$
be independent copies of $(\Omega^\xi, \f^\xi, \p^{\xi},   \xi) $. Consider a probability space of the following form:
\begin{align}
	\label{eq:space}
	(\Omega, \f,\p):=\bigg(\Omega^W \times  \prod_{n=0}^\infty \Omega^{\xi_n}, \f^W\otimes \bigotimes_{n=0}^\infty
	\f^{\xi_n},
	\p^W \otimes
	\bigotimes_{n=0}^\infty
	\p^{\xi_n}
	\bigg),
\end{align}
where $(\Omega^W,\mathcal F^W, \mathbb P^W)$ is the probability space where the Brownian motion (representing the environmental noises) lives,
and for each $n\in \mathbb N_0$, $(\Omega^{\xi_n}, \f^{\xi_n}, \p^{\xi_n})$ supports the random variable $\xi_n$  used to generate the random actions. Moreover, we define the filtration $ \mathcal F_t := \sigma\{ (W_s)_{s\leq t}, (\xi_i)_{t_i\leq t}\} $, which is right continuous and satisfies the usual condition.

Given an admissible feedback policy $\bm\pi$ (see \citealt[Definition 1]{jia_q-learning_2022} for the precise definition), denoted by $\bm\pi\in \bm\Pi$, and $(t,x)\in [0,T)\times \mathbb R^d$, consider a time grid $\mathscr G_{t:T}= \{t=s_0<s_1<\ldots <s_n =T\} $   of $[t,T]$. We sample actions from $\bm\pi$ {\it only} at the grid points in $\mathscr G_{t:T}$. The corresponding  state process satisfies, for all $i=0,\ldots, n-1$ and all $s\in [s_i,s_{i+1}]$,
\begin{align}\label{erratum  eq:sampled_sde}
	\begin{split}
		X_s & = X_{s_i} + \int_{s_i}^s b(u, X_u,a_{s_i}) \dd u +
		\int_{s_i}^s \sigma(u, X_u,a_{s_i})\dd  W_u,
		\quad
		\textnormal{with $a_{s_i}= \phi(s_i,X_{s_i},\xi_i)$},
	\end{split}
\end{align}
which will be referred henceforth to as the \textit{discretely sampled state process}.\footnote{The term ``discretely" here is slightly misleading as the state process $\{X_s, t\leq s\leq T\}$ itself is still {\it continuous} in time $s$. It is the action that is sampled discretely in time from the policy $\bm\pi$.} \citet[Lemma 3.1]{jia2025accuracy} show that \eqref{erratum eq:sampled_sde} is a well-posed SDE whose solution has a continuous trajectory and is adapted to a smaller filtration $\mathcal G_s:=
\sigma\{ (W_u)_{u\le s},  (\xi_{i})_ {s_i < s} \}$. In addition, the action process $a_s = \sum_{i=0}^{n-1} \one_{\{ s\in [s_i, s_{i+1}) \}} a_{s_i} $ is a simple process that is adapted to $\f_s$.

In the following, we denote by $a^{\mathscr G,\bm\pi}$ the resulting action process and by $X^{\mathscr G,\bm\pi}$ the solution to \eqref{erratum eq:sampled_sde}, given $X_t=x$, associated with the policy $\bm\pi$ and the grid $\mathscr G_{t:T}$. For simplicity, we may also rewrite \eqref{erratum eq:sampled_sde} as
\begin{equation}\label{erratum eq:sample_sde_abbre}
	\dd X_s = b(s,X_s,a_{\delta(s)})\dd s + \sigma(s,X_s,a_{\delta(s)}) \dd W_s, \quad s\in [t,T];  \quad X_t = x
\end{equation}
with  $\delta(s) = s_i$ for $s\in [s_i,s_{i+1})$, and $a_{s} = a_{\delta(s)}$ given in \eqref{erratum eq:sampled_sde}.

\section*{2 Martingale Characterizations for q-Learning with Discretely Sampled Processes}
We will now state and prove the revised martingale characterizations for q-learning, originally presented in \citet{jia_q-learning_2022}, in terms of the discretely sampled  state--action processes defined in \eqref{erratum eq:sample_sde_abbre}.
Note that the definition of the q-function is solely based on the ``exploratory problem" (the equations (8) and (9) in \citealt{jia_q-learning_2022}) and, hence, is independent of any discrete sampling. Moreover,  the value function, $J(\cdot,\cdot;\bm\pi)$, of a policy $\bm\pi\in \bm\Pi$ is now also based on the exploratory problem, i.e. the equation (9) in \citet{jia_q-learning_2022}.\footnote{In \citet{jia_q-learning_2022}, the value function is first defined on the continuously sampled control and state processes (see (7) therein), and then argued to be equivalent to the one based on the exploratory problem. The equation (7) in \citet{jia_q-learning_2022} has the same measurability issue, but can be replaced by discretely sampled processes. \citet{jia2025accuracy} show that the total expected reward under the discretely sampled policy converges to the value function (defined based on the exploratory problem) as the grid size tends to zero.} However, we will prove that Theorems 6, 7, 9, and 12 in \citet{jia_q-learning_2022} are all valid when ``$(X^{\bm\pi},a^{\bm\pi})$" therein (which
are not rigorously defined due to the measurability issue) is replaced by  ``$(\sX,\sa)$" with any given time grid.

In the following, for reader's convenience, we label the theorems with the same numbers corresponding to those in the original paper \citet{jia_q-learning_2022}. For example, Theorem 6 here is the revision of Theorem 6 therein.

The first theorem characterizes the q-function of a given admissible policy, assuming its value function is accessed.

\setcounter{theorem}{5}
\begin{theorem}
	\label{erratum thm:q rate function property}
	Let a policy $\bm\pi\in \bm\Pi$, its value function $J$  and a continuous function $\hat{q}:[0,T]\times \mathbb{R}^d\times \mathcal{A}\to \mathbb{R}$
	be given. Then 
	\begin{enumerate}[(i)]
		\item $\hat{q}(t,x,a)=q(t,x,a;\bm\pi)$ for all $(t,x,a)\in[0,T]\times\mathbb{R}^d\times \mathcal{A}$ if and only if for all $(t,x)\in[0,T]\times\mathbb{R}^d$ and any time grid $\mathscr{G}_{t:T}$ on $[t,T]$, the following process
		\begin{equation}
			\label{erratum eq:martingale with q function}
			e^{-\beta s} J(s,\sX_s;\bm\pi) + \int_t^s e^{-\beta u} [r(u,\sX_u,\sa_{u}) - \hat{q}(u,\sX_u,\sa_u) ]\dd u
		\end{equation}
		is an $(\{\f_s\}_{s\geq 0},\p)$-martingale, where $\{\sX_s, t\leq s\leq T\}$ is the solution to \eqref{erratum eq:sample_sde_abbre} under $\bm\pi$ with $\sX_t=x$.
		\item If $\hat{q}(t,x,a)=q(t,x,a;\bm\pi)$ for all $(t,x,a)\in[0,T]\times\mathbb{R}^d\times \mathcal{A}$, then given any $\bm\pi'\in \bm\Pi$,  for all $(t,x)\in[0,T]\times\mathbb{R}^d$ and any time grid $\mathscr{G}_{t:T}$ on $[t,T]$, the following process
		\begin{equation}
			\label{erratum eq:martingale with q function-off}
			e^{-\beta s} J(s,\sXp_s;\bm\pi) + \int_t^s e^{-\beta u} [r(u,\sXp_u,\sap_{u}) - \hat{q}(u,\sXp_u,\sap_u) ]\dd u
		\end{equation}
		is an $(\{\f_s\}_{s\geq 0},\p)$-martingale, where $\{\sXp, t\leq s\leq T\}$ is the solution to \eqref{erratum eq:sample_sde_abbre} under $\bm\pi'$ with initial condition $\sXp_t = x$.
		\item If there exists $\bm\pi'\in \bm\Pi$ such that for all $(t,x)\in[0,T]\times\mathbb{R}^d$ and any time grid $\mathscr{G}_{t:T}$ on $[t,T]$, \eqref{erratum eq:martingale with q function-off} is an $(\{\f_s\}_{s\geq 0}, \p)$-martingale with initial condition $\sXp_t = x$, then  $\hat{q}(t,x,a)=q(t,x,a;\bm\pi)$ for all $(t,x,a)\in[0,T]\times\mathbb{R}^d\times \mathcal{A}$.
	\end{enumerate}
	Moreover, in any of the three cases above, the q-function satisfies
	\begin{equation}
		\label{erratum eq:q hjb}
		\int_{\mathcal{A}} \big[ q(t,x,a;\bm{\pi} ) -\gamma \log\bm\pi(a|t,x) \big] \bm\pi(a|t,x)\dd a =0, \;\;\forall (t,x)\in[0,T]\times\mathbb{R}^d.
	\end{equation}	
\end{theorem}

\begin{proof}
	The proof of \eqref{erratum eq:q hjb} is the same as that in \citet{jia_q-learning_2022}. It suffices to show ($i$) to ($iii$). For simplicity,  denote
	\[ \mathcal L^a V(t,x) := \frac{\partial V}{\partial t}(t,x) + b(t,x,a) \circ \frac{\partial V}{\partial x}(t,x) + \frac{1}{2}\sigma\sigma^\top(t,x,a)\circ \frac{\partial^2 V}{\partial x^2}(t,x)  .\]
	
	\begin{enumerate}[(i)]
		\item
		
		First of all, we apply It\^o's lemma to \eqref{erratum eq:sample_sde_abbre} to obtain
		\[\begin{aligned}
			& e^{-\beta s} J(s,\sX_s;\bm\pi) + \int_t^s e^{-\beta u} \left[ r(u,\sX_u,\sa_{u}) - \hat{q}(u,\sX_u,\sa_u) \right] \dd u \\
			= & e^{-\beta t} J(t,x) + \int_t^s e^{-\beta u} \left[ \mathcal L^{\sa_{\delta(u)}} J(u, \sX_u) - \beta J(u,\sX_u) +  r(u,\sX_u,\sa_{\delta(u)}) - \hat{q}(u,\sX_u,\sa_{\delta(u)}) \right]\dd u \\
			&  + \int_t^s e^{-\beta u} \frac{\partial J}{\partial x}(u,\sX_u) \sigma(u,\sX_u, \sa_{\delta(u)})   \dd W_u \\
			= & e^{-\beta t} J(t,x) + \int_t^s e^{-\beta u} \left[q(u,\sX_u,\sa_{\delta(u)};\bm\pi) - \hat{q}(u,\sX_u,\sa_{\delta(u)}) \right]\dd u \\
			& + \int_t^s e^{-\beta u} \frac{\partial J}{\partial x}(u,\sX_u) \sigma(u,\sX_u, \sa_{\delta(u)})   \dd W_u.
		\end{aligned}  \]
		
		If $\hat{q}(t,x,a)=q(t,x,a;\bm\pi)$, then it follows from the moment estimates in  \citet[Lemma 3.1]{jia2025accuracy} for the discretely sampled state process $\sX$  that \eqref{erratum eq:martingale with q function} is a martingale.
		
		Conversely, if \eqref{erratum eq:martingale with q function} is a martingale, then
		\[ \int_t^s e^{-\beta u} \left[q(u,\sX_u,\sa_{\delta(u)};\bm\pi) - \hat{q}(u,\sX_u,\sa_{\delta(u)}) \right]\dd u \]
		is a martingale for all initial $(t,x)$ and any given time grid $\mathscr{G}_{t:T}$. The same argument in \citet{jia_q-learning_2022}, i.e., a martingale with zero quadratic variation has to be a constant, yields that $\p$-almost surely, \[\int_t^s e^{-\beta u} \left[q(u,\sX_u,\sa_{\delta(u)};\bm\pi) - \hat{q}(u,\sX_u,\sa_{\delta(u)}) \right]\dd u = 0\]
		for all $s\in [t,T]$. Denote $f(t,x,a) = q(t,x,a;\bm\pi) - \hat q(t,x,a)$. We prove $f\equiv 0$ by contradiction by assuming that there exists a triple $(t^*,x^*,a^*)\in [0,T) \times \mathbb R^d \times \mathcal A$ and $\epsilon>0$ such that $f(t^*,
		x^*,a^*) > \epsilon$. Because $f$ is continuous, there exists $\delta > 0$ such that $f(u,x',a') > \epsilon/2$ for all $(u,x',a')$ with $ |u-t^*|\vee |x'-x^*|\vee|a'-a^*| < \delta$. Here ``$\vee$'' is the maximum operator, i.e., $u \vee v = \max\{u,v\}$. 
		
		Now consider a discretely  sampled state process, still denoted by $\sX$,  starting from $(t^*,x^*)$ with a time grid $\mathscr{G}_{t:T}$ satisfying $t^*= t_0 < t^* + \delta < t_1 <\cdots$. Define
		\[ \tau = \inf\{ u\geq t^*: |u-t^*| > \delta\text{ or } |X^{\mathscr G,\bm\pi}_u - x^*| > \delta  \} = \inf\{ u\geq t^*:  |X^{\mathscr G,\bm\pi}_u - x^*| > \delta  \} \wedge (t^*+\delta) ,\]
		where ``$\wedge$'' denotes the minimum operator, i.e., $u \wedge v = \min\{u,v\}$. The continuity of $\sX$ implies that $\tau > t^*$,  $\p$-almost surely.
		
		We have already proved that there exists  $\Omega_0\in \f$ with $\p(\Omega_0) = 0$ such that for all $\omega \in \Omega\setminus\Omega_0$,
		$\int_{t^*}^{s} e^{-\beta u} f(u,\sX_u(\omega),\sa_{\delta(u)}(\omega))\dd u = 0$ for all $s\in [t^*,T]$. It follows from Lebesgue's differentiation theorem that for any $\omega \in \Omega\setminus\Omega_0$,
		\begin{equation}\label{erratum zero}
			f(s,\sX_s(\omega),\sa_{\delta(s)}(\omega)) = 0 ,\text{ a.e. } s\in [t^*,\tau(\omega)] .
		\end{equation}
		
		On the other hand, for the grid chosen above, for any $s\in [t^*,\tau(\omega)] \subset [t^*, t^* + \delta]$, $\sa_{\delta(s)}(\omega) = \sa_{t^*}(\omega) = \phi(t^*,x^*,\xi_0(\omega))$.
		Recall the definition of the admissible policy (Definition 1-(i) in \citealt{jia_q-learning_2022}), we have
		\[ \p( \phi(t^*,x^*,\xi_0(\omega)) \in \mathcal{B}_{\delta}(a^*)  ) =  \int_{ \mathcal{B}_{\delta}(a^*) } \bm{\pi}(a|t^*,x^*) \dd a  > 0, \]
		where $\mathcal{B}_{\delta}(a^*) = \{a'\in \mathcal{A}: |a'-a^*| < \delta\}$ is a neighborhood of $a^*$. Hence there exists $\omega\in \Omega\setminus \Omega_0$ such that for every $s\in [t^*,\tau(\omega)] $,
		\[ f(s,\sX_s(\omega),\sa_{\delta(s)}(\omega)) = f(s,\sX_s(\omega),\phi(t^*,x^*,\xi_0(\omega)) ) > \frac{\epsilon}{2} > 0,\]
		contradicting \eqref{zero}. This proves that $q(t,x,a;\bm\pi) = \hat{q}(t,x,a)$ for every $(t,x,a)$.

		\item The proof is parallel to the first part of the proof of (i).
		
		\item The proof is parallel to the second part of the proof of (i).
		
	\end{enumerate}
\end{proof}

The next result underpins both on-policy and off-policy RL algorithms for learning the value function and the q-function {\it jointly}.
\begin{theorem}
	\label{erratum thm:qv learn}
	Let a policy $\bm\pi\in \bm\Pi$, a function $\hat{J}\in C^{1,2}\big([0,T)\times \mathbb{R}^d \big) \cap C\big([0,T]\times \mathbb{R}^d \big)$ with polynomial growth, and a continuous function $\hat{q}:[0,T]\times \mathbb{R}^d\times \mathcal{A}\to \mathbb{R}$
	be given satisfying
	\begin{equation}
		\label{erratum eq:q hjb2}
		\hat{J}(T,x) = h(x),\;\;\; \int_{\mathcal{A}} \big[ \hat{q}(t,x,a) -\gamma \log{\bm\pi}(a|t,x) \big] {\bm\pi}(a|t,x)\dd a =0,\;\;\forall (t,x)\in[0,T]\times\mathbb{R}^d.
	\end{equation}
	Then 
	\begin{enumerate}[(i)]
		\item $\hat{J}$ and $\hat{q}$ are respectively the value function and the q-function associated with ${\bm\pi}$ if and only if for all $(t,x)\in[0,T]\times\mathbb{R}^d$ and any time grid $\mathscr{G}_{t:T}$ on $[t,T]$, the following process
		\begin{equation}
			\label{erratum eq:martingale with q function2}
			e^{-\beta s} \hat J(s,\sX_s) + \int_t^s e^{-\beta u} [r(u,\sX_u,\sa_u) - \hat{q}(u,\sX_u,\sa_u) ]\dd u
		\end{equation}
		is an $(\{\f_s\}_{s\geq 0},\p)$-martingale, where $\{\sX_s, t\leq s\leq T\}$ is the solution to \eqref{erratum eq:sample_sde_abbre} under $\bm\pi$ with $\sX_t=x$.
		\item If $\hat{J}$ and $\hat{q}$ are respectively the value function and the q-function associated with ${\bm\pi}$, then given any $\bm\pi'\in \bm\Pi$, for all $(t,x)\in[0,T]\times\mathbb{R}^d$ and any time grid $\mathscr{G}_{t:T}$ on $[t,T]$, the following process
		\begin{equation}
			\label{erratum eq:martingale with q function2-off}
			e^{-\beta s} \hat J(s,\sXp_s) + \int_t^s e^{-\beta u} [r(u,\sXp_u,\sap_u) - \hat{q}(u,\sXp_u,\sap_u) ]\dd u
		\end{equation}
		is an $(\{\f_s\}_{s\geq 0}, \p)$-martingale, where $\{\sXp_s, t\leq s\leq T\}$ is the solution to \eqref{erratum eq:sample_sde_abbre} under $\bm\pi'$ with $\sXp_t=x$.
		\item If there exists $\bm\pi'\in \bm\Pi$ such that for all $(t,x)\in[0,T]\times\mathbb{R}^d$ and any time grid $\mathscr{G}_{t:T}$ on $[t,T]$, \eqref{erratum eq:martingale with q function2-off} is an $(\{\f_s\}_{s\geq 0}, \p)$-martingale, where $\{\sXp_s, t\leq s\leq T\}$ is the solution to \eqref{erratum eq:sample_sde_abbre} under $\bm\pi'$ with $\sXp_t=x$, then  $\hat{J}$ and $\hat{q}$ are respectively the value function and the q-function associated with ${\bm\pi}$.
	\end{enumerate}
	Moreover, in any of the three cases above,
	if it holds further that ${\bm\pi}(a|t,x) = \frac{\exp\{  \frac{1}{\gamma}\hat{q}(t,x,a) \}}{\int_{{\cal A}} \exp\{  \frac{1}{\gamma}\hat{q}(t,x,a) \} \dd a }$, then ${\bm\pi}$ is the optimal policy and $\hat{J}$ is the optimal value function.
\end{theorem}
\begin{proof}
	\begin{enumerate}[(i)]
		\item We only prove the ``only if" part because the ``if" part is straightforward following the same argument as in the proof of Theorem \ref{erratum thm:q rate function property}.
		
		Define $\hat r(t,x,a): =\mathcal L^a \hat J(t,x)  - \beta \hat J(t,x)$ and consider the process
		\[ M_s = e^{-\beta s} \hat{J}(s,\sX_s) - \int_t^s e^{-\beta u} \hat r(u,\sX_u,\sa_u) \dd u . \]
		By applying It\^o's lemma  and arguing similarly to the first part of the proof of Theorem \ref{erratum thm:q rate function property}-(i), we obtain that $M_s$ is an $(\{\f_s\}_{s\geq 0}, \p)$-martingale. As a result,
		$\int_t^s e^{-\beta u} [r(u,\sX_u,\sa_u) - \hat{q}(u,\sX_u,\sa_u) + \hat r(u,\sX_u,\sa_u)  ]\dd u$ is an $(\{\f_s\}_{s\geq 0}, \p)$-martingale. The same argument as in the second part of the proof of Theorem \ref{erratum thm:q rate function property}-(i) applies, yielding
		\[\begin{aligned}
			\hat{q}(t,x,a) = & r(t,x,a) + \hat r(t,x,a)  \\
			= & \mathcal L^a \hat J - \beta \hat J(t,x) + r(t,x,a) \\
			= & \frac{\partial \hat{J}}{\partial t}(t,x) + H\big( t,x,a,\frac{\partial \hat J}{\partial x}(t,x), \frac{\partial^2 \hat J}{\partial x^2}(t,x) \big)  - \beta \hat J(t,x)
		\end{aligned}\]
		for every $(t,x,a)$. Now the constraint \eqref{erratum eq:q hjb2} reads
		\[ \int_{\mathcal A}  \left[ \frac{\partial \hat{J}}{\partial t}(t,x) + H\big( t,x,a,\frac{\partial \hat J}{\partial x}(t,x), \frac{\partial^2 \hat J}{\partial x^2}(t,x) \big)  - \beta \hat J(t,x) - \gamma\log\bm\pi(a|t,x) \right]   \bm\pi(a|t,x) \dd a= 0, \]
		for all $(t,x)$, which,  together with the terminal condition $\hat J(T,x) = h(x)$, is the Feynman--Kac PDE that characterizes the value function under the policy $\bm\pi$ (equation (11) in \citealt{jia_q-learning_2022}). Therefore, it follows from the uniqueness of the solution to the PDE to conclude that $\hat J \equiv J(\cdot,\cdot;\bm\pi)$.  Moreover,  it follows from Theorem \ref{erratum thm:q rate function property}-(i) that $\hat q \equiv q(\cdot,\cdot,\cdot;\bm\pi)$.

		\item This follows immediately from Theorem \ref{erratum thm:q rate function property}-(ii).
		\item The proof is parallel to the second part of the proof of (i).
	\end{enumerate}
	
	The last conclusion follows from the same argument in \citet{jia_q-learning_2022}.
\end{proof}

The following theorem concerns the optimal value function and optimal q-function.
\setcounter{theorem}{8}
\begin{theorem}
	\label{erratum thm:q optimal}
	Let a function $\widehat{J^*}\in C^{1,2}\big([0,T)\times \mathbb{R}^d \big) \cap C\big([0,T]\times \mathbb{R}^d \big)$ with polynomial growth and a continuous function $\widehat{q^*}:[0,T]\times \mathbb{R}^d\times \mathcal{A}\to \mathbb{R}$
	be given satisfying
	\begin{equation}
		\label{erratum eq:q hjb2 optimal}
		\widehat{J^*}(T,x) = h(x),\;\;\; \int_{\mathcal{A}} \exp\{ \frac{1}{\gamma} \widehat{q^*}(t,x,a) \} \dd a =1,\;\;\forall (t,x)\in[0,T]\times\mathbb{R}^d.
	\end{equation}
	Then 
	\begin{enumerate}[(i)]
		\item If $\widehat{J^*}$ and $\widehat{q^*}$ are respectively the optimal value function and the optimal q-function, then given any $\bm\pi\in \bm\Pi$, for  all $(t,x)\in[0,T]\times\mathbb{R}^d$ and any time grid $\mathscr{G}_{t:T}$ on $[t,T]$, the following process
		\begin{equation}
			\label{erratum eq:martingale with q function2 optimal}
			e^{-\beta s} \widehat{J^*}(s,\sX_s) + \int_t^s e^{-\beta u} [r(u,\sX_u,\sa_u) - \widehat{q^*}(u,\sX_u,\sa_u) ]\dd u
		\end{equation}
		is an $(\{\f_s\}_{s\geq 0},\p)$-martingale, where $\{\sX_s, t\leq s\leq T\}$ is the solution to \eqref{erratum eq:sample_sde_abbre} under $\bm\pi$ with $\sX_t=x$.
		Moreover, in this case, $\widehat{\bm\pi^*}(a|t,x) = \exp\{  \frac{1}{\gamma}\widehat{q^*}(t,x,a) \}$ is the optimal policy.
		\item If there exists one $\bm\pi\in \bm\Pi$ such that for all $(t,x)\in[0,T]\times\mathbb{R}^d$ and any time grid $\mathscr{G}_{t:T}$ on $[t,T]$, \eqref{erratum eq:martingale with q function2 optimal} is an $(\{\f_s\}_{s\geq 0},\p)$-martingale, then $\widehat{J^*}$ and $\widehat{q^*}$ are respectively the optimal value function and the optimal q-function. 
	\end{enumerate}
\end{theorem}
\begin{proof}
	\begin{enumerate}[(i)]
		\item The proof is parallel to the first part of Theorem \ref{erratum thm:q rate function property}-(i), while the optimality of $\widehat{\bm\pi^*}$ follows from Proposition 8 in \citet{jia_q-learning_2022}.
		
		\item The second constraint in \eqref{erratum eq:q hjb2 optimal} implies that $\widehat{\bm\pi^*}(a|t,x): = \exp\{  \frac{1}{\gamma}\widehat{q^*}(t,x,a) \}$ is a probability density function, and $\widehat{q^*}(t,x,a) = \gamma \log\widehat{\bm\pi^*}(a|t,x)$.
		So $\widehat{q^*}(t,x,a)$ satisfies the second constraint in \eqref{erratum eq:q hjb2} with respect to the policy $\widehat{\bm\pi^*}$. When \eqref{erratum eq:martingale with q function2 optimal} is an $(\{\f_s\}_{s\geq 0},\p)$-martingale under the given  admissible policy $\bm\pi$, it follows from Theorem \ref{erratum thm:qv learn}--(iii) that $\widehat{J^*}$ and $\widehat{q^*}$ are respectively the value function and  the q-function associated with $\widehat{\bm\pi^*}$. Then the improved policy is $\mathcal{I} \widehat{\bm\pi^*}(a|t,x): = \frac{\exp\{  \frac{1}{\gamma}\widehat{q^*}(t,x,a) \}}{\int_{{\cal A}} \exp\{  \frac{1}{\gamma}\widehat{q^*}(t,x,a) \} \dd a } = \exp\{  \frac{1}{\gamma}\widehat{q^*}(t,x,a) \} = \widehat{\bm\pi^*}(a|t,x)$. However,  Theorem 2 in \citet{jia_q-learning_2022} yields that $\widehat{\bm\pi^*}$ is optimal, completing the proof.
	\end{enumerate}
\end{proof}

The last result below deals with the case of ergodic tasks.

\setcounter{theorem}{11}
\begin{theorem}
	\label{erratum thm:qv learn ergodic}
	Let an admissible policy $\bm\pi$, a number $\hat{V}$, a function $\hat{J}\in C^{2}\big(\mathbb{R}^d \big)$ with polynomial growth, and a continuous function  $\hat{q}: \mathbb{R}^d\times \mathcal{A}\to \mathbb{R}$
	be given satisfying
	\begin{equation}
		\label{erratum eq:q hjb2eg}
		\lim_{T\to \infty}\frac{1}{T}\E[\hat{J}(\tilde X_T^{\bm\pi})] = 0, \;\;\;\int_{\mathcal{A}} \big[ \hat{q}(x,a) -\gamma \log{\bm\pi}(a|x) \big] {\bm\pi}(a|x)\dd a =0,\;\;\forall x\in \mathbb{R}^d,
	\end{equation}
	where $\tilde X^{\bm\pi}$ follows the exploratory dynamic (the equation (8) in \citet{jia_q-learning_2022}). Then
	\begin{enumerate}[(i)]
		\item $\hat{V}$, $\hat{J}$ and $\hat{q}$ are respectively the value, the value function and the q-function associated with ${\bm\pi}$ if and only if for all $x\in\mathbb{R}^d$ and any time grid $\mathscr{G}_{0:\infty}$, the following process
		\begin{equation}
			\label{erratum eq:martingale with q function2eg}
			\hat{J}(\sX_t) + \int_0^t  [r(\sX_u,\sa_u) - \hat{q}(\sX_u,\sa_u)  - \hat{V}]\dd u
		\end{equation}
		is an $(\{\f_t\}_{t\geq 0},\p)$-martingale, where $\{\sX_t, 0\leq t < \infty\}$ is the solution to \eqref{erratum eq:sample_sde_abbre} under $\bm\pi$ with $\sX_0=x$.
		\item If $\hat V$, $\hat{J}$ and $\hat{q}$ are respectively the value, value function and the q-function associated with ${\bm\pi}$, then given any
		admissible  $\bm\pi'$, for all $x\in\mathbb{R}^d$ and  any time grid $\mathscr{G}_{0:\infty}$, the following process
		\begin{equation}
			\label{erratum eq:martingale with q function2eg-off}
			\hat J(\sXp_t) + \int_0^t  [r(\sXp_u,\sap_u) - \hat{q}(\sXp_u,\sap_u) - \hat V ]\dd u
		\end{equation}
		is an $(\{\f_t\}_{t\geq 0}, \p)$-martingale, where $\{\sXp_t, 0 \leq t < \infty \}$ is the solution to \eqref{erratum eq:sample_sde_abbre} under $\bm\pi'$ with initial condition $\sXp_{0} = x$.
		\item If there exists an admissible $\bm\pi'$ such that for all $x\in\mathbb{R}^d$, \eqref{erratum eq:martingale with q function2eg-off} is an $(\{\f_t\}_{t\geq 0}, \p)$-martingale where ${X}_{0}^{\bm\pi'} = x$, then  $\hat V$, $\hat{J}$ and $\hat{q}$ are respectively the value, value function and the q-function associated with ${\bm\pi}$.
	\end{enumerate}
	Moreover, in any of the three cases above,
	if it holds further that  ${\bm\pi}(a|x) = \frac{\exp\{  \frac{1}{\gamma}\hat{q}(x,a) \}}{\int_{{\cal A}} \exp\{  \frac{1}{\gamma}\hat{q}(x,a) \} \dd a }$, then ${\bm\pi} $ is the optimal policy and $\hat{V}$ is the optimal value.
\end{theorem}
\begin{proof}
	The proof is parallel to those of Theorems \ref{erratum thm:q rate function property} and \ref{erratum thm:qv learn}, and hence omitted.
\end{proof}

Finally, an important remark is that the revision of the theoretical results in this erratum do not impact the algorithms and numerical experiments in \citet{jia_q-learning_2022}, because all the algorithms are naturally based on discretely sampled state processes.

\acks{The measurability issue was raised in \citet{szpruch2024,bender2024grid}, and by conference participants at The First INFORMS Conference on Financial Engineering and FinTech. The proofs of the revised theorems are based on the discussions with Xuefeng Gao and Lingfei Li while they are working on a related paper \citet{gao2024reinforcement} that extends the q-learning theory to jump diffusions. All errors are ours.}

\vskip 0.2in
\bibliography{reference}

\end{document}